\theoremstyle{plain}
\newtheorem{proposition}{Proposition}
\theoremstyle{plain}
\newtheorem{definition}{Definition}
\theoremstyle{plain}
\newtheorem{lemma}{Lemma}
\DeclareMathOperator{\E}{\mathbb{E}}    
\DeclareBoldMathCommand{\x}{x}
\DeclareBoldMathCommand{\X}{X}
\DeclareBoldMathCommand{\y}{y}
\DeclareBoldMathCommand{\Y}{Y}
\DeclareBoldMathCommand{\z}{z}
\DeclareBoldMathCommand{\Z}{Z}
\DeclareBoldMathCommand{\s}{s}
\DeclareBoldMathCommand{\n}{n}
\newcommand\given[1][]{\:#1\vert\:}
\newcommand\divfrom[1][]{\:#1\vert\vert\:}
\newcommand{\indep}{\rotatebox[origin=c]{90}{$\models$}}
\newcommand{\cmark}{\ding{51}}
\newcommand{\xmark}{\ding{55}}
\newcommand\labelAndRemember[2]
\gdef\csname labeled:#1\endcsname{#2}\label{#1}#2}
\newcommand\recallLabel[1]
\endcsname\tag{\ref{#1}}}
\newenvironment{customcorollary}[1]
{\count@\c@corollary\global\c@corollary#1\global\advance\c@corollary\m@ne\corollary}
{\endcorollary\global\c@corollary\count@}
\title{On the Limitations of Multimodal VAEs}
\author{%
\footnotesize
Imant Daunhawer,\ \,Thomas M.~Sutter,\ \,Kieran Chin-Cheong,\ \,Emanuele Palumbo \& Julia E.~Vogt \\
Department of Computer Science\\
ETH Zurich\\
\texttt{dimant@ethz.ch}
\normalsize
}
\begin{document}

\maketitle

\vskip -1.4em  %
\begin{abstract}
  Multimodal variational autoencoders (VAEs) have shown promise as efficient
  generative models for weakly-supervised data. Yet, despite their advantage of
  weak supervision, they exhibit a gap in generative quality compared to
  unimodal~VAEs, which are completely unsupervised. In an attempt to explain
  this gap, we uncover a fundamental limitation that applies to a large family
  of mixture-based multimodal~VAEs. We prove that the sub-sampling of
  modalities enforces an undesirable upper bound on the multimodal~ELBO and
  thereby limits the generative quality of the respective models.  Empirically,
  we showcase the generative quality gap on both synthetic and real data and
  present the tradeoffs between different variants of multimodal
  VAEs. We find that none of the existing approaches fulfills all desired
  criteria of an effective multimodal generative model when applied on more
  complex datasets than those used in previous benchmarks. In summary, we
  identify, formalize, and validate fundamental limitations of VAE-based
  approaches~for~modeling~weakly-supervised~data~and~discuss~implications~for
  real-world~applications.
\end{abstract}

\section{Introduction}

In recent years, multimodal VAEs have shown great potential as efficient
generative models for weakly-supervised data, such as pairs of
images or paired images and captions.  Previous works
\citep{Wu2018,Shi2019,Sutter2020} demonstrate that multimodal VAEs leverage
weak supervision to learn generalizable representations, useful for downstream
tasks \citep{Dorent2019,Minoura2021} and for the conditional generation of missing
modalities \citep{Lee2021}.  However, despite the advantage of weak
supervision, state-of-the-art multimodal VAEs consistently underperform when
compared to simple unimodal VAEs in terms of generative quality.%
\footnote{%
  The lack of generative quality can even be recognized by visual inspection of
  the qualitative results from previous works; for instance, see the
  supplementaries of \citet{Sutter2021} or \citet{Shi2021}.
} 
This paradox serves as a
starting point for our work, which aims to explain the observed lack of
generative quality in terms of a fundamental limitation that underlies existing
multimodal VAEs.

What is limiting the generative quality of multimodal VAEs? We find that the
sub-sampling of modalities during training leads to a problem that affects all
\textit{mixture-based} multimodal VAEs---a family of models that subsumes
the MMVAE \citep{Shi2019}, MoPoE-VAE \citep{Sutter2021}, and a special case of
the MVAE \citep{Wu2018}.
We prove that modality sub-sampling enforces an undesirable upper bound on the
multimodal ELBO and thus prevents a tight approximation of the joint distribution
when there is modality-specific variation in the data.  Our experiments
demonstrate that modality sub-sampling can explain the gap in generative
quality compared to unimodal VAEs and that the gap typically increases with
each additional modality. Through extensive ablations on three different
datasets, we validate the generative quality gap between unimodal and
multimodal VAEs and present the tradeoffs between different
approaches.

Our results raise serious concerns about the utility of multimodal VAEs for
real-world applications. We show that none of the existing approaches fulfills
all desired criteria \citep{Shi2019,Sutter2020} of an effective multimodal
generative model when applied to slightly more complex datasets than used in
previous benchmarks. In particular, we demonstrate that generative coherence
\citep{Shi2019} cannot be guaranteed for any of the existing approaches, if the
information shared between modalities cannot be predicted in expectation across
modalities. Our findings are particularly relevant for applications on datasets
with a relatively high degree of modality-specific variation, which is a
typical characteristic of many real-world datasets \citep{Baltrusaitis2019}.

\section{Related work}

First, to put multimodal VAEs into context, let us point out that there is a
long line of research focused on learning multimodal generative models based on a wide
variety of methods. There are several notable generative models
with applications on pairs of modalities
\citep[e.g.,][]{Ngiam2011,Srivastava2014,Wu2020,Lin2021,Ramesh2021}, as well as
for the specialized task of image-to-image translation
\citep[e.g.,][]{Huang2018,Choi2018,Liu2019}. Moreover, generative models can
use labels as side information \citep{Ilse2019,Tsai2019,Wieser2020}; for
example, to guide the disentanglement of shared and modality-specific
information \citep{Tsai2019}. In contrast, multimodal VAEs do not require
strong supervision and can handle a large and variable number of modalities
efficiently.  They learn a joint distribution over multiple modalities, but
also enable the inference of latent representations, as well as the conditional
generation of missing modalities, given any subset of modalities
\citep{Wu2018,Shi2019,Sutter2021}.

Multimodal VAEs are an extension of VAEs \citep{Kingma2014} and they belong to
the class of multimodal generative models with encoder-decoder architectures
\citep{Baltrusaitis2019}. The first multimodal extensions of VAEs
\citep{Suzuki2016,Hsu2018,Vedantam2018} use separate inference networks for
every subset of modalities, which quickly becomes intractable as the number of
inference networks required grows exponentially with the number of modalities.
Starting with the seminal work of \citet{Wu2018}, multimodal VAEs were
developed as an \textit{efficient} method for multimodal learning.  In
particular, multimodal VAEs enable the inference of latent representations, as
well as the conditional generation of missing modalities, given any subset of
input modalities.  Different types of multimodal VAEs were devised by
decomposing the joint encoder as a product \citep{Wu2018}, mixture
\citep{Shi2019}, or mixture of products \citep{Sutter2021} of unimodal encoders
respectively.  A commonality between these approaches is the sub-sampling of
modalities during training---a property we will use to define the family of
\textit{mixture-based} multimodal VAEs. For the MMVAE and MoPoE-VAE, the
sub-sampling is a direct consequence of defining the joint encoder as a mixture
distribution over different subsets of modalities.  Further, our analysis
includes a special case of the MVAE \textit{without} ELBO sub-sampling, which
can be seen as another member of the family of mixture-based multimodal
VAEs~\citep{Sutter2021}.  The MVAE was originally proposed with ``ELBO
sub-sampling'', an additional training paradigm that was later found to result
in an incorrect bound on the joint distribution \citep{Wu2020}. While this
training paradigm is also based on the sub-sampling of modalities, the
objective differs from mixture-based multimodal VAEs in that the MVAE does not
reconstruct the missing modalities from the set of sub-sampled modalities.%
\footnote{%
  For completeness, in
  \Cref{app:experiments}, we also analyze the effect of ELBO sub-sampling.
}

\Cref{tab:overview_mvaes} provides an overview of the different variants of
mixture-based multimodal VAEs and the properties that one can infer from
empirical results in previous works \citep{Shi2019,Shi2021,Sutter2021}. Most
importantly, there appears to be a tradeoff between generative quality and
generative coherence (i.e., the ability to generate semantically related
samples across modalities). Our work explains \textit{why} the generative
quality is worse for models that sub-sample modalities (\Cref{sec:thm}) and
shows that a tighter approximation of the joint distribution can be achieved
without sub-sampling (\Cref{subsec:implications}). Through systematic
ablations, we validate the proposed theoretical limitations and showcase the
tradeoff between generative quality and generative coherence
(\Cref{subsec:generative_quality_gap}). Our experiments also reveal that
generative coherence cannot be guaranteed for more complex datasets than those
used in previous benchmarks (\Cref{subsec:lack_of_coherence}).

\section{Multimodal VAEs, in different flavors}
\label{sec:method}

\begin{table}[t]
\caption{%
  Overview of multimodal VAEs. Entries for generative quality and generative
  coherence denote properties that were observed empirically in previous works.
  The lightning symbol ($\lightning$) denotes properties for which our work
  presents contrary evidence. This overview abstracts technical details, such
  as importance sampling and ELBO~sub-sampling, which we address in
  \Cref{app:experiments}.
}
\label{tab:overview_mvaes}
\tiny{%
\begin{center}
  \begin{tabular}{lp{3.5cm}cccc}
  \toprule
    Model & 
    Decomposition of $p_{\theta}(\z \given \x)$ &
    Modality sub-sampling &
    Generative quality &
    Generative coherence
  \\
  \midrule
    \textbf{MVAE} \citep{Wu2018}&   %
    $\prod_{i=1}^{M}p_{\theta}(\z \given \x_i)$ &
    \xmark& good & poor \\
    \addlinespace[0.15cm]
    \textbf{MMVAE} \citep{Shi2019} & 
    $\frac{1}{M} \sum_{i=1}^{M} p_{\theta}(\z \given \x_i)$ &  
    \cmark& limited & \ \ \ good$\ \lightning$ \\
    \addlinespace[0.15cm]
    \textbf{MoPoE-VAE} \citep{Sutter2021} & 
    $ \frac{1}{\vert\mathcal{P}(M)\vert}\sum_{A \in \mathcal{P}(M)} \prod_{i \in A} p_{\theta}(\z \given \x_i)$ &  
    \cmark& limited & \ \ \ good$\ \lightning$ \\ 
  \bottomrule
  \end{tabular}
\end{center}
}
\end{table}

Let $\X \coloneqq \left\{ X_1, \ldots, X_M \right\} $ be a set of random
vectors describing $M$ modalities and let ${\x \coloneqq \left\{ \x_1, \ldots,
\x_M\right\}}$ be a sample from the joint distribution $p(\x_1, \ldots, \x_M)$.
For conciseness, denote subsets of modalities by subscripts; for example,
$\X_{\{1,3\}}$ or $\x_{\{1,3\}}$ respectively for modalities 1~and~3.

Throughout this work, we assume that all modalities are described by discrete
random vectors (e.g., pixel values), so that we can assume non-negative
entropy and conditional entropy terms.  Definitions for all required
information-theoretic quantities are provided in \Cref{app:infotheory}.

\subsection{The multimodal ELBO}

\begin{restatable}{definition}{defmultimodalelbo}  %
\label{def:multimodal_elbo}
Let $p_{\theta}(\z \given \x)$ be a stochastic encoder, parameterized by
$\theta$, that takes multiple modalities as input. Let $q_{\phi}(\x \given \z)$
be a variational decoder (for all modalities), parameterized by $\phi$, and let
$q(\z)$ be a prior. The multimodal evidence lower bound (ELBO) on
$\,\E_{p(\x)}[\log p(\x)]\,$ is defined as 
\begin{align}
  \mathcal{L}_{}(\x; \theta, \phi) \coloneqq \E_{p(\x)p_{\theta}(\z \given
  \x)} [ \log q_{\phi}(\x \given \z) ] - \E_{p(\x)}
  [D_{\text{KL}}(p_{\theta}(\z \given \x) \divfrom q(\z))]\;.
\end{align}
\end{restatable}

The multimodal ELBO (\Cref{def:multimodal_elbo}), first introduced by
\citet{Wu2018}, is the objective maximized by all multimodal VAEs and it forms
a variational lower bound on the expected log-evidence.%
\footnote{Even though we write the expectation over $p(\x)$, for the estimation
  of the ELBO we still assume that we only have access to a finite sample from
  the training distribution $p(\x)$. The notation is used for consistency with
  the well-established information-theoretic perspective on VAEs \citep{Alemi2017,Poole2019}.
}
The first term denotes the estimated log-likelihood of all modalities and the
second term is the KL-divergence between the stochastic encoder and the prior.
We take an information-theoretic perspective using the variational information
bottleneck (VIB) from \citet{Alemi2017} and employ the standard notation used
in multiple previous works \citep{Alemi2017,Poole2019}.  Similar to the latent
variable model approach, the VIB derives the ELBO as a variational lower bound
on the expected log-evidence, but, in addition, the VIB is a more general
framework for optimization that allows us to reason about the underlying
information-theoretic quantities of interest (for details on the VIB and its
notation, please see \Cref{app:elbo_derivation}). 

Note that the above definition of the multimodal ELBO requires that the complete
set of modalities is available. To overcome this limitation and to learn the
inference networks for different subsets of modalities, existing models use
different \textit{decompositions} of the joint encoder, as summarized in
\Cref{tab:overview_mvaes}. Recent work shows that existing models can be
generalized by formulating the joint encoder as a mixture of products of experts
\citep{Sutter2021}. Analogously, in the following, we generalize existing models
to define the family of mixture-based multimodal VAEs.

\subsection{The family of mixture-based multimodal VAEs}
\label{subsec:mixture_based_family}

Now we introduce the family of mixture-based multimodal VAEs, which subsumes
the MMVAE, MoPoE-VAE, and a special case of the MVAE without ELBO sub-sampling.
We first define an encoder that generalizes the decompositions used by existing
models:
\begin{definition}%
\label{def:mixture}
Let $\mathcal{S} = \{ (A, \omega_A) \given A \subseteq \{1,\ldots,M\},\ A \not = \emptyset,\ \omega_A \in [0, 1]\}$ 
be an arbitrary set of non-empty subsets $A$ of modalities and
corresponding mixture coefficients $\omega_A$, such that 
$\sum_{A \in \mathcal{S}} \omega_A = 1$. Define the stochastic encoder
to be a mixture distribution:
  ${p^{\mathcal{S}}_{\theta}(\z \given \x) \coloneqq \sum_{A \in \mathcal{S}} \omega_A \, p_{\theta}(\z \given \x_A)}$.
\end{definition}
In the above definition and throughout this work, we write $A \in
\mathcal{S}$ to abbreviate $(A, \omega_A) \in \mathcal{S}$.
To define the family of mixture-based multimodal VAEs, we restrict the
family of models optimizing the multimodal ELBO to the subfamily of models that
use a mixture-based stochastic encoder.
\begin{definition}  %
\label{def:mixture_based}
  The family of mixture-based multimodal VAEs is comprised of all models that
  maximize the multimodal ELBO using a stochastic encoder
  $p^{\mathcal{S}}_{\theta}(\z \given \x)$ that is consistent with
  \Cref{def:mixture}. In particular, we define the family in terms of all
  models that maximize the following objective:
  \vskip -1.5em
  \begin{equation}
    \label{eq:objective_s}
    \mathcal{L}_{\mathcal{S}}(\x; \theta, \phi) =
    \sum_{A \in \mathcal{S}} \omega_A \left\{ \E_{p(\x)p_{\theta}(\z \given \x_A)} [ \log q_{\phi}(\x \given \z) ] 
      - \E_{p(\x)} \left[ D_{\text{KL}}\left(p_{\theta}(\z \given \x_A) \divfrom q(\z)\right)
        \right]\right\} \;.
  \end{equation}
  \vskip 2.5em
\end{definition}
In \Cref{app:elbo_s_derivation}, we show that the objective
$\mathcal{L}_{\mathcal{S}}(\x; \theta, \phi)$ is a lower bound on
$\mathcal{L}(\x; \theta, \phi)$ (which makes it an ELBO) and 
explain how, for different choices of the
set of subsets $\mathcal{S}$, the objective $\mathcal{L}_{\mathcal{S}}(\x;
\theta, \phi)$ relates to the objectives of the MMVAE, MoPoE-VAE, and MVAE
without ELBO sub-sampling.

From a computational perspective, a characteristic of mixture-based multimodal
VAEs is the sub-sampling of modalities during training, which is a direct
consequence of defining the encoder as a mixture distribution over subsets of
modalities.  The sub-sampling of modalities can be viewed as the extraction of
a subset $\x_A \in \x$, where $A$ indexes one subset of modalities that is
drawn from the model-specific set of subsets $\mathcal{S}$.  The only member of
the family of mixture-based multimodal VAEs that forgoes sub-sampling, defines
a trivial mixture over a single subset, the complete set of modalities
\citep{Sutter2021}.

\section{Modality sub-sampling limits the multimodal ELBO}
\label{sec:thm}

\subsection{An intuition about the problem}
\label{subsec:intuition}

Before we delve into the details, let us illustrate how modality sub-sampling
affects the likelihood estimation, and hence the multimodal ELBO\@. 
Consider the likelihood estimation using the objective~$\mathcal{L}_{\mathcal{S}}$:
\begin{equation}
  \sum_{A \in \mathcal{S}} \omega_A \E_{p(\x)p_{\theta}(\z \given \x_A)} [ \log q_{\phi}(\x \given \z) ] \;,
\end{equation}
where $A$ denotes a subset of modalities and $\omega_A$ the respective
mixture weight. Crucially, the stochastic encoder $p_{\theta}(\z \given \x_A)$
encodes a \textit{subset} of modalities.  What seems to be a minute detail, can
have a profound impact on the likelihood estimation, because the precise
estimation of all modalities depends on information from \textit{all}
modalities.  In trying to reconstruct all modalities from incomplete
information, the model can learn an inexact, average prediction; however, it
cannot reliably predict modality-specific information, such as the background
details in an image given a concise verbal description of its content.

In the following, we formalize the above intuition by showing that, in the
presence of modality-specific variation, modality sub-sampling enforces an
undesirable upper bound on the multimodal ELBO and therefore prevents a tight
approximation of the joint distribution.

\subsection{A formalization of the problem}
\label{subsec:formalization}

\Cref{thm:irreducible_error} states our main theoretical result, which
describes a non-trivial limitation of mixture-based multimodal VAEs. Our result
shows that the sub-sampling of modalities enforces an undesirable upper bound
on the approximation of the joint distribution when there is modality-specific
variation in the data. This limitation conflicts with the goal of modeling
real-world multimodal data, which typically exhibits a considerable degree of
modality-specific variation.

\begin{restatable}{theorem}{theoremirreducible}
\label{thm:irreducible_error}
Each mixture-based multimodal VAE (\Cref{def:mixture_based}) approximates the
expected log-evidence up to an irreducible discrepancy $\Delta(\X, \mathcal{S})$ that
depends on the model-specific mixture distribution~$\mathcal{S}$ as well as on
the amount of modality-specific information in $\X$.

For the maximization of $\mathcal{L}_{\mathcal{S}}(\x; \theta, \phi)$ and
every value of $\theta$ and $\phi$, the following inequality holds:
\begin{align}
  \E_{p(\x)} [\log p_{}(\x)] 
  &\geq \mathcal{L}_{\mathcal{S}}(\x;\theta,\phi) + \Delta(\X, \mathcal{S})
\end{align}
where
\begin{equation}
  \Delta(\X, \mathcal{S}) = \sum_{A \in \mathcal{S}} \omega_A\, H(\X_{\{1, \ldots, M\} \setminus A} \given \X_A)\;.
\end{equation}
In particular, the generative discrepancy is always greater than or equal to
zero and it is independent of $\theta$ and $\phi$ and thus remains constant
during the optimization.
\end{restatable}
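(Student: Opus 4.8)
The plan is to reduce the statement to a family of per-subset inequalities and then, for each subset, isolate the part of the uncertainty that no choice of parameters can remove. By linearity of the objective in \Cref{def:mixture_based}, write $\mathcal{L}_{\mathcal{S}} = \sum_{A \in \mathcal{S}} \omega_A \mathcal{L}_A$, where $\mathcal{L}_A$ is the bracketed term for a single subset $A$. Since $\Delta(\X,\mathcal{S})$ is a convex combination with the \emph{same} weights and $\sum_{A} \omega_A = 1$, it suffices to prove the per-subset bound $\mathcal{L}_A \le \E_{p(\x)}[\log p(\x)] - H(\X_{\setminus A} \given \X_A)$, writing $\X_{\setminus A}$ for $\X_{\{1,\ldots,M\}\setminus A}$; summing against $\omega_A$ then yields the theorem. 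The first step is to bound the reconstruction term: because $q_\phi(\x \given \z)$ is a normalized density, Gibbs' inequality (the non-negativity of a KL divergence between the true and the variational conditional) gives $\E_{p(\x)p_{\theta}(\z \given \x_A)}[\log q_\phi(\x \given \z)] \le -H(\X \given \Z)$, with the conditional entropy evaluated under the joint $p(\x)\,p_{\theta}(\z \given \x_A)$.

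The decisive structural fact is the Markov chain $\X_{\setminus A} \to \X_A \to \Z$ that sub-sampling induces: the encoder $p_{\theta}(\z \given \x_A)$ reads only $\x_A$, so $\Z$ is conditionally independent of $\X_{\setminus A}$ given $\X_A$, whence $H(\X_{\setminus A} \given \X_A, \Z) = H(\X_{\setminus A} \given \X_A)$. Applying the chain rule $H(\X \given \Z) = H(\X_A \given \Z) + H(\X_{\setminus A} \given \X_A, \Z)$ and substituting this identity isolates the term $H(\X_{\setminus A} \given \X_A)$, which depends only on the data distribution and on $A$ and never on $\theta$ or $\phi$. This is precisely the quantity that aggregates into $\Delta(\X,\mathcal{S})$, and its non-negativity is guaranteed by the discreteness assumption stated before \Cref{def:multimodal_elbo}.

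It then remains to account for the encoding of $\X_A$ and the rate term so that the modality-specific entropy survives as an \emph{additive} discrepancy rather than being refunded. Here I would rewrite the rate term via the mutual-information identity $\E_{p(\x)}[D_{\text{KL}}(p_{\theta}(\z \given \x_A) \divfrom q(\z))] \ge I(\X_A;\Z)$ and combine it with the reconstruction through $I(\X_A;\Z) = H(\X_A) - H(\X_A \given \Z)$, so that the $\X_A$-dependent pieces are controlled by the achievable value $-H(\X_A)$, while $H(\X_{\setminus A} \given \X_A)$ is pinned down as the unavoidable loss relative to $\E_{p(\x)}[\log p(\x)] = -H(\X) = -H(\X_A) - H(\X_{\setminus A} \given \X_A)$. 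Parameter-independence of $\Delta(\X,\mathcal{S})$ is then immediate, since every surviving term is a functional of $p(\x)$ and $\mathcal{S}$ alone.

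The hard part will be the bookkeeping in this final step: one must show that reducing the $\X_A$-reconstruction against the rate term does not silently cancel $H(\X_{\setminus A} \given \X_A)$, so that it genuinely remains as a strictly additive, parameter-free penalty beyond the standard ELBO slack. Because the decoder $q_\phi$ and the prior $q(\z)$ are shared across all subsets of the mixture, I would also verify that the bound holds \emph{uniformly} in $(\theta,\phi)$ — that no single choice of shared parameters can simultaneously evade the per-subset discrepancies — which is what ultimately makes $\Delta(\X,\mathcal{S})$ irreducible rather than merely a lower bound on the gap at a particular optimum.
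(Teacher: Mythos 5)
Your plan retraces the paper's own proof (\Cref{app:proof_of_thm}) almost step for step: the Gibbs bound on the reconstruction term and the rate bound $\E_{p(\x)}[D_{\text{KL}}(p_{\theta}(\z \given \x_A) \divfrom q(\z))] \geq I(\X_A; Z_A)$ are the content of \Cref{lemma:vib}, and your Markov-chain identity $H(\X_{\setminus A} \given \X_A, Z_A) = H(\X_{\setminus A} \given \X_A)$ plus the chain rule is exactly \Cref{lemma:decomposition}; working per subset and summing with the weights $\omega_A$ instead of through the VIB objective is cosmetic. The genuine gap is the step you defer as ``bookkeeping'': it is not bookkeeping, it is the entire theorem, and the machinery you have assembled cannot deliver it. Carry your own bounds through for a single subset, writing $\mathcal{L}_A$ for the bracketed per-subset term as you do:
\begin{align*}
\mathcal{L}_A
&\leq -H(\X \given Z_A) - I(\X_A; Z_A) \\
&= -\bigl[ H(\X_{\setminus A} \given \X_A) + H(\X_A \given Z_A) \bigr] - \bigl[ H(\X_A) - H(\X_A \given Z_A) \bigr] \\
&= -H(\X_A) - H(\X_{\setminus A} \given \X_A) \;=\; -H(\X) \;=\; \E_{p(\x)}[\log p(\x)]\;.
\end{align*}
The modality-specific entropy is \emph{exactly} absorbed into $-H(\X)$ by the chain rule---``refunded,'' in your words---so what comes out is the standard ELBO bound $\mathcal{L}_A \leq \E_{p(\x)}[\log p(\x)]$, not your target $\mathcal{L}_A \leq \E_{p(\x)}[\log p(\x)] - H(\X_{\setminus A} \given \X_A)$. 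To reach the theorem's displayed inequality, $H(\X_{\setminus A} \given \X_A)$ would have to appear \emph{twice}, once inside $H(\X)$ and once as an additive penalty, and neither the Gibbs bound, nor the decomposition, nor the rate bound produces a second copy; your worry about uniformity over the shared $(\theta,\phi)$ is a red herring and does not supply it either.

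You should know that the step you flagged as hard is the same place where the paper's own proof jumps: from $-\mathcal{L}_{\mathcal{S}} \geq \sum_{A \in \mathcal{S}} \omega_A \{ H(\X_A \given Z_A) + I_{\psi}(\X_A; Z_A)\} + \Delta(\X, \mathcal{S})$ it concludes $\E_{p(\x)}[\log p(\x)] \geq \mathcal{L}_{\mathcal{S}} + \Delta(\X, \mathcal{S})$, which would need $\sum_{A} \omega_A \{ H(\X_A \given Z_A) + I_{\psi}(\X_A; Z_A)\} \geq H(\X)$, whereas the argument only establishes the lower bound $\sum_{A} \omega_A H(\X_A) = H(\X) - \Delta(\X, \mathcal{S})$. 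What the shared machinery genuinely proves is the plain bound $\E_{p(\x)}[\log p(\x)] \geq \mathcal{L}_{\mathcal{S}}$, together with the parameter-free cap on the likelihood term, $\sum_{A \in \mathcal{S}} \omega_A \E_{p(\x)p_{\theta}(\z \given \x_A)}[\log q_{\phi}(\x \given \z)] \leq -\Delta(\X, \mathcal{S})$ (and hence $\mathcal{L}_{\mathcal{S}} \leq -\Delta(\X, \mathcal{S})$, since the rate term is non-negative); that cap is the defensible formalization of ``sub-sampling limits reconstruction.'' The displayed inequality itself is not salvageable: with two independent modalities, lossless unimodal encoders mapping into disjoint latent regions, the ideal decoder on each region (outputting the marginal of the unobserved modality), and the two-component aggregate posterior as prior, one computes $\E_{p(\x)}[\log p(\x)] - \mathcal{L}_{\mathcal{S}} = \log 2$, while $\Delta(\X, \mathcal{S}) = \tfrac{1}{2}\bigl(H(X_1) + H(X_2)\bigr)$ can be made arbitrarily large. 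So no amount of bookkeeping will close your final step; if you want a correct statement, prove the likelihood-term cap instead.
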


A proof is provided in \Cref{app:proof_of_thm} and it is based on 
\Cref{lemma:vib,lemma:decomposition}.
\Cref{thm:irreducible_error} formalizes the rationale that, in the general
case, cross-modal prediction cannot recover information that is specific to the
target modalities that are unobserved due to modality sub-sampling. In general,
the conditional entropy $H(\X_{\{1, \ldots, M\} \setminus A} \given \X_A)$
measures the amount of information in one subset of random vectors $\X_{\{1,
\ldots, M\} \setminus A}$ that is not shared with another subset $\X_A$. In our
context, the sub-sampling of modalities yields a discrepancy $\Delta(\X,
\mathcal{S})$ that is a weighted average of conditional entropies $H(\X_{\{1,
\ldots, M\} \setminus A} \given \X_A)$ of the modalities $\X_{\{1, \ldots, M\}
\setminus A}$ unobserved by the encoder given an observed subset $\X_A$. Hence,
$\Delta(\X, \mathcal{S})$ describes the modality-specific information
that cannot be recovered by cross-modal prediction, averaged over all subsets
of modalities.

\Cref{thm:irreducible_error} applies to the MMVAE, MoPoE-VAE, and a special
case of the MVAE without ELBO sub-sampling, since all of these models belong to
the class of mixture-based multimodal VAEs.  However, $\Delta(\X, \mathcal{S})$
can vary significantly between different models, depending on the mixture
distribution defined by the respective model and on the amount of
modality-specific variation in the data. In the following, we show that without
modality sub-sampling $\Delta(\X, \mathcal{S})$ vanishes, whereas for the MMVAE
and MoPoE-VAE, $\Delta(\X, \mathcal{S})$ typically increases with each
additional modality. In \Cref{sec:experiments}, we provide empirical support
for each of these theoretical statements.

\subsection{Implications of Theorem~\ref{thm:irreducible_error}}
\label{subsec:implications}

First, we consider the case of no modality sub-sampling, for which
it is easy to show that the generative discrepancy vanishes. 
\begin{restatable}{corollary}{corollarynomodalitysubsampling}
\label{cor:no_modality_subsampling}
  Without modality sub-sampling, $\Delta(\X, \mathcal{S}) = 0$\,.
\end{restatable}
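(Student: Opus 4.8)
The plan is to show that the ``no modality sub-sampling'' case corresponds to the trivial mixture $\mathcal{S} = \{(\{1,\ldots,M\},\,1)\}$, as noted in \Cref{subsec:mixture_based_family}, and then simply evaluate the discrepancy formula from \Cref{thm:irreducible_error} at this particular $\mathcal{S}$. First I would fix notation by writing $M \coloneqq \{1,\ldots,M\}$ for the complete index set, so that the single subset in the trivial mixture is $A = M$ with coefficient $\omega_M = 1$ (and indeed $\sum_{A \in \mathcal{S}}\omega_A = 1$ is satisfied, so this $\mathcal{S}$ is consistent with \Cref{def:mixture}).

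The next step is to substitute $\mathcal{S} = \{(M, 1)\}$ directly into the expression
\begin{equation*}
  \Delta(\X, \mathcal{S}) = \sum_{A \in \mathcal{S}} \omega_A\, H(\X_{\{1,\ldots,M\}\setminus A} \given \X_A)\;.
\end{equation*}
Since the sum collapses to the single term indexed by $A = M$, we obtain $\Delta(\X,\mathcal{S}) = H(\X_{M \setminus M} \given \X_M) = H(\X_{\emptyset} \given \X_M)$. The remaining task is then to argue that the conditional entropy of the empty set of modalities is zero. The empty set indexes no random vectors, so $\X_{\emptyset}$ is (degenerate/deterministic), and conditioning on $\X_M$ cannot increase its entropy; invoking the definitions of entropy and conditional entropy from \Cref{app:infotheory}, together with the non-negativity assumption stated for discrete random vectors in \Cref{sec:method}, we conclude $H(\X_{\emptyset}\given\X_M) = 0$.

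The main obstacle, such as it is, is not computational but conceptual: one must justify that ``without modality sub-sampling'' is precisely captured by the trivial single-subset mixture, rather than by some other degenerate choice of $\mathcal{S}$. I would address this by appealing to the observation already made in \Cref{subsec:mixture_based_family} that the only member of the family forgoing sub-sampling ``defines a trivial mixture over a single subset, the complete set of modalities.'' Once this identification is granted, the corollary is an immediate one-line evaluation, and the only genuine verification needed is the boundary fact $H(\X_{\emptyset} \given \X_M) = 0$, which follows from the convention that the entropy of an empty collection of random vectors vanishes.
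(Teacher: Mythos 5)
Your proposal is correct and follows essentially the same route as the paper's own proof: identify the no-sub-sampling case with the trivial mixture over the single complete subset, collapse the sum defining $\Delta(\X, \mathcal{S})$ to one term, and conclude via $H(\X_{\emptyset} \given \X) = 0$. The extra care you take in justifying the identification of ``no sub-sampling'' with the trivial mixture and the vanishing entropy of the empty collection is sound but matches the paper's (more terse) argument in substance.
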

A proof is provided in \Cref{app:cor:no_modality_subsampling}.  The result from
\Cref{cor:no_modality_subsampling} applies to the MVAE without ELBO
sub-sampling and suggests that this model should yield a tighter approximation
of the joint distribution and hence a better generative quality compared to
mixture-based multimodal VAEs that sub-sample modalities.  Note that this does
not imply that a model without modality sub-sampling is superior to one that
uses sub-sampling and that there can be an inductive bias that favors
sub-sampling despite the approximation error it incurs.  Especially,
\Cref{cor:no_modality_subsampling} does not imply that the variational
approximation is tight for the MVAE;\@\ for instance, the model can be
underparameterized or simply misspecified due to simplifying assumptions, such
as the PoE-factorization \citep{Kurle2018}.

Second, we consider how additional modalities might affect the generative
discrepancy. \Cref{cor:more_modalities} predicts an increased generative
discrepancy (and hence, a decline of generative quality) when we increase the
number of modalities for the MMVAE and MoPoE-VAE\@.
\begin{restatable}[informal]{corollary}{corollarymoremodalities}
\label{cor:more_modalities}
  For the MMVAE and MoPoE-VAE, the generative discrepancy increases with each
  additional modality, if the new modality is sufficiently diverse.
\end{restatable}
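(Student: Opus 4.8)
The plan is to treat the phrase ``adding a modality'' as a comparison between the discrepancy $\Delta_M \coloneqq \Delta(\X_{\{1,\ldots,M\}}, \mathcal{S}_M)$ for $M$ modalities and the discrepancy $\Delta_{M+1} \coloneqq \Delta(\X_{\{1,\ldots,M+1\}}, \mathcal{S}_{M+1})$ obtained after appending a new modality $X_{M+1}$, where $\mathcal{S}_M$ is the model-specific mixture: uniform over the $M$ singletons for the MMVAE, and uniform over the $2^M-1$ non-empty subsets for the MoPoE-VAE. The single tool used throughout is the chain rule for conditional entropy, $H(\X_B, X_{M+1} \given \X_C) = H(\X_B \given \X_C) + H(X_{M+1} \given \X_B, \X_C)$, which lets me peel the contribution of $X_{M+1}$ off every term of the sum defining $\Delta$ in \Cref{thm:irreducible_error} and re-express $\Delta_{M+1}$ as $\Delta_M$ plus explicit, non-negative correction terms. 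The corollary then reduces to identifying when those corrections are net-positive, and I will make ``sufficiently diverse'' precise as a lower bound on the modality-specific information $H(X_{M+1} \given \X_{\{1,\ldots,M\}})$ carried by the new modality.

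First I would handle the MMVAE, where $\Delta_M = \frac{1}{M}\sum_{i=1}^{M} H(\X_{\{1,\ldots,M\}\setminus\{i\}} \given X_i)$. I would split the $(M+1)$-term sum in $\Delta_{M+1}$ into the $M$ indices $i \leq M$ and the single index $i=M+1$. For each $i \leq M$ the complement now contains $X_{M+1}$, and the chain rule gives $H(\X_{\{1,\ldots,M+1\}\setminus\{i\}} \given X_i) = H(\X_{\{1,\ldots,M\}\setminus\{i\}} \given X_i) + H(X_{M+1} \given \X_{\{1,\ldots,M\}})$, because conditioning on $X_i$ together with the remaining old modalities is the same as conditioning on all of $\X_{\{1,\ldots,M\}}$. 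Collecting terms yields $\Delta_{M+1} = \frac{1}{M+1}[\, M\Delta_M + M\,H(X_{M+1} \given \X_{\{1,\ldots,M\}}) + H(\X_{\{1,\ldots,M\}} \given X_{M+1})\,]$, so clearing the denominator turns $\Delta_{M+1} > \Delta_M$ into the exact condition $M\,H(X_{M+1} \given \X_{\{1,\ldots,M\}}) + H(\X_{\{1,\ldots,M\}} \given X_{M+1}) > \Delta_M$. Since all terms are non-negative, a clean sufficient condition---my formal reading of ``sufficiently diverse''---is $H(X_{M+1} \given \X_{\{1,\ldots,M\}}) > \Delta_M / M$.

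Next I would repeat the peeling for the MoPoE-VAE, where the sum runs over the non-empty powerset and the normalization changes from $1/(2^M-1)$ to $1/(2^{M+1}-1)$. I would partition the non-empty subsets $A \subseteq \{1,\ldots,M+1\}$ into those excluding $M+1$---the $2^M-1$ old non-empty subsets, each contributing its old conditional entropy plus one copy of $H(X_{M+1} \given \X_{\{1,\ldots,M\}})$ by the chain rule---and those including $M+1$, which I write as $A = B \cup \{M+1\}$ with $B \subseteq \{1,\ldots,M\}$, giving the $2^M$ terms $H(\X_{\{1,\ldots,M\}\setminus B} \given \X_B, X_{M+1})$. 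Subtracting $\Delta_M$ and simplifying, $\Delta_{M+1} > \Delta_M$ becomes $(2^M-1)\,H(X_{M+1} \given \X_{\{1,\ldots,M\}}) + \sum_{B \subseteq \{1,\ldots,M\}} H(\X_{\{1,\ldots,M\}\setminus B} \given \X_B, X_{M+1}) > 2^M \Delta_M$; the leading term again scales with the modality-specific information of $X_{M+1}$ and dominates once the new modality is diverse enough.

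The hard part will be not the algebra but stating ``sufficiently diverse'' in a form that is at once clean, tight, and interpretable across both models, since the exact thresholds ($\Delta_M/M$ for the MMVAE and the mixed expression above for the MoPoE-VAE) depend on the current discrepancy and hence on all existing modalities. A secondary subtlety is that every new conditional-entropy term that appears---such as $H(\X_{\{1,\ldots,M\}} \given X_{M+1})$ and the family $H(\X_{\{1,\ldots,M\}\setminus B} \given \X_B, X_{M+1})$---is non-negative and can only help the inequality, so the argument never requires a delicate cancellation; the entire difficulty lies in the combinatorial re-indexing and in isolating the single quantity $H(X_{M+1} \given \X_{\{1,\ldots,M\}})$ as the knob that diversity turns.
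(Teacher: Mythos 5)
Your proposal is correct, and its computational core---peeling $X_{M+1}$ off every conditional entropy with the chain rule, splitting the new mixture into old subsets and subsets containing $M+1$, and comparing the renormalized sums---is exactly the mechanism of the paper's proof in \Cref{app:cor:more_modalities}. Your intermediate identities check out: for the MMVAE, $\Delta_{M+1} = \frac{1}{M+1}\left[ M\Delta_M + M\,H(X_{M+1}\given \X) + H(\X \given X_{M+1})\right]$ is right, and both your MMVAE condition $M\,H(X_{M+1}\given\X) + H(\X\given X_{M+1}) > \Delta_M$ and your MoPoE condition $(2^M-1)H(X_{M+1}\given\X) + \sum_{B} H(\X_{\{1,\ldots,M\}\setminus B}\given\X_B, X_{M+1}) > 2^M \Delta_M$ are \emph{exact} characterizations of when the discrepancy grows, hence equivalent to the paper's condition (with the paper's statement read with the intended sign; as printed, the coefficient $1/\vert\mathcal{S}^+\vert - 1/\vert\mathcal{S}\vert$ on the left-hand side is negative, a typo). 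Where you genuinely differ is the final packaging, i.e., what ``sufficiently diverse'' is taken to mean. The paper first handles arbitrary mixture weights, then specializes to uniform weights, and then does additional work you skip: it decomposes each $H(\X_{\{1,\ldots,M\}\setminus A}\given\X_A)$ and the new-subset terms relative to $X_{M+1}$, and exploits a model-specific cancellation (valid precisely for the MMVAE and MoPoE subset structures) to rewrite the change as one term that can be negative---the reweighted redundancy $\sum_A I(\X_{\{1,\ldots,M\}\setminus A}; X_{M+1}\given\X_A)$---plus purely modality-specific, nonnegative terms. That buys interpretability: it isolates shared (redundant) information as the \emph{only} mechanism by which an extra modality can shrink $\Delta$, matching the prose in \Cref{subsec:implications}. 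Your route is more elementary and yields compact, directly checkable thresholds such as $H(X_{M+1}\given\X) > \Delta_M/M$, but it leaves the redundancy mechanism implicit, since your right-hand side $\Delta_M$ mixes information that is and is not shared with $X_{M+1}$; it also commits to uniform weights from the outset, so it says nothing about general mixtures. Both are valid proofs of the informal statement.
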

A proof is provided in \Cref{app:cor:more_modalities}.
The notion of \textit{diversity} requires a more formal treatment of the
underlying information-theoretic quantities, which we defer to
\Cref{app:cor:more_modalities}.  Intuitively, a new modality is sufficiently
diverse, if it does \textit{not} add too much redundant information with
respect to the existing modalities.  In special cases when there is a lot of
redundant information, $\Delta(\X, \mathcal{S})$ can decrease given an
additional modality, but it does not vanish in any one of these cases. Only if
there is very little modality-specific information in \textit{all} modalities,
we have $\Delta(\X, \mathcal{S}) \rightarrow 0$ for the MMVAE and MoPoE-VAE\@.
This condition requires modalities to be extremely similar, which does not
apply to most multimodal datasets, where $\Delta(\X, \mathcal{S})$ typically
represents a large part of the total variation.

In summary, \Cref{thm:irreducible_error} formalizes how the family of
mixture-based multimodal VAEs is fundamentally limited for the task of
approximating the joint distribution, and
\Cref{cor:no_modality_subsampling,cor:more_modalities} connect this result to
existing models---the MMVAE, MoPoE-VAE, and MVAE without ELBO sub-sampling.  We
now turn to the experiments, where we present empirical support for the
limitations described by \Cref{thm:irreducible_error} and its Corollaries.

\section{Experiments}
\label{sec:experiments}

\Cref{fig:dataset_examples} presents the three considered datasets. PolyMNIST
\citep{Sutter2021} is a simple, synthetic dataset with five image modalities
that allows us to conduct systematic ablations. ${\text{Translated-PolyMNIST}}$
is a new dataset that adds a small tweak---the downscaling and random
translation of digits---to demonstrate the limitations of existing methods when
shared information cannot be predicted in expectation across modalities.
Finally, Caltech Birds \citep[CUB;][]{Wah2011,Shi2019} is used to validate the
limitations on a more realistic dataset with two modalities, images and
captions. Please note that we use CUB with \textit{real images} and not the
simplified version based on precomputed ResNet-features that was used in
\citet{Shi2019} and \citet{Shi2021}.  For a more detailed description of the
three considered datasets, please see \Cref{app:description_of_data}.

In total, more than 400 models were trained, requiring approximately 1.5 GPU
years of compute on a single NVIDIA GeForce RTX 2080 Ti GPU\@. For the
experiments in the main text, we use the publicly available code from
\citet{Sutter2021} and in \Cref{app:additional_experimental_results} we also
include ablations using the publicly available code from \citet{Shi2019}, which
implements importance sampling and alternative ELBO objectives.  To provide a
fair comparison across methods, we use the same architectures and similar
capacities for all models. For each unimodal VAE, we make sure to decrease the
capacity by reducing the latent dimensionality proportionally with respect to
the number of modalities. Additional information on architectures,
hyperparameters, and evaluation metrics is provided in \Cref{app:experiments}. 

\begin{figure}[t]
\captionsetup[subfigure]{justification=centering}
\begin{center}
  \begin{subfigure}[t]{.33\linewidth}
  \centering
  \includegraphics[width=1.0\linewidth]{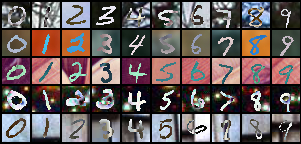}
  \caption{PolyMNIST\\(5 modalities)}
\label{fig:vanilla_polymnist_example}
\end{subfigure}%
\hskip +0.050in
\begin{subfigure}[t]{.33\linewidth}
  \centering
  \includegraphics[width=0.95\linewidth]{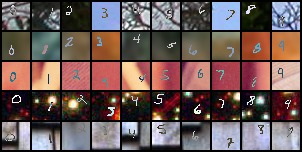}
  \caption{Translated-PolyMNIST\\(5 modalities)}
\label{fig:translated_polymnist_example}
\end{subfigure}%
\begin{subfigure}[t]{.33\linewidth}
  \centering
  \includegraphics[width=0.95\linewidth]{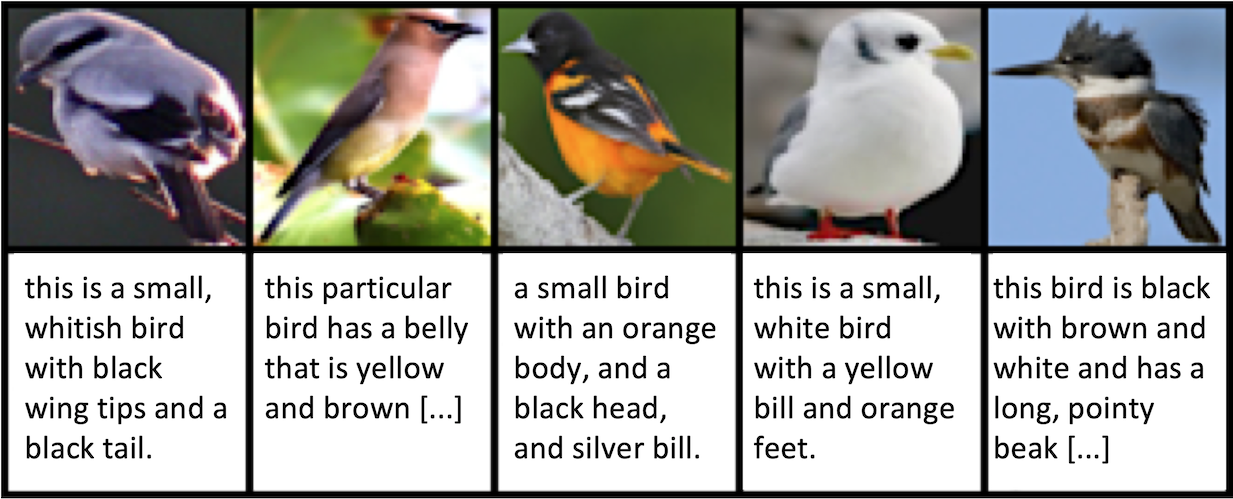}
  \caption{Caltech Birds (CUB)\\(2 modalities)}
\label{fig:cub_example}
\end{subfigure}
\caption{%
The three considered datasets. Each subplot shows samples from the respective
dataset. The two PolyMNIST datasets are conceptually similar in that the digit
label is shared between five synthetic modalities. The Caltech Birds (CUB)
dataset provides a more realistic application for which there is no annotation
on what is shared between paired images and captions.}
\label{fig:dataset_examples}
\end{center}
\end{figure}

\subsection{The generative quality gap}
\label{subsec:generative_quality_gap}

We assume that an increase in the generative discrepancy $\Delta(\X,
\mathcal{S})$ is associated with a drop of generative quality. However, we want
to point out that there can also be an inductive bias that favors modality
sub-sampling despite the approximation error that it incurs. In fact, our
experiments reveal a fundamental tradeoff between generative quality and
generative coherence when shared information can be predicted in expectation
across modalities. 

We measure generative quality in terms of Fr\'echet inception
distance \citep[FID;][]{Heusel2017}, a standard metric for evaluating the
quality of generated images. Lower FID represents better generative quality
and the values typically correlate well with human perception \citep{Borji2019}. 
In addition, in \Cref{app:experiments} we provide log-likelihood values, as
well as qualitative results for all modalities including captions, for which
FID cannot be computed.

\Cref{fig:fids} presents the generative quality across a range of
$\beta$ values.%
\footnote{%
  The regularization coefficient $\beta$ weights the KL-divergence term of the
  multimodal ELBO (\Cref{def:multimodal_elbo,def:mixture_based}) and it is
  arguably the most impactful hyperparameter in VAEs \citep[e.g.,
  see][]{Higgins2017}.
}
To relate different methods, we compare models with the \textit{best} FID
respectively, because different methods can reach their optima at different
$\beta$ values. As described by \Cref{thm:irreducible_error}, mixture-based
multimodal VAEs that sub-sample modalities (MMVAE and MoPoE-VAE) exhibit a
pronounced generative quality gap compared to unimodal VAEs.  When we compare
the best models, we observe a gap of more than 60 points on both PolyMNIST and
Translated-PolyMNIST, and about 30 points on CUB images. Qualitative results
(\Cref{fig:qualitative_unconditional} in
\Cref{app:additional_experimental_results}) confirm that this gap is clearly
visible in the generated samples and that it applies not only to image
modalities, but also to captions.  In contrast, the MVAE (without ELBO
sub-sampling) reaches the generative quality of unimodal VAEs, which is in line
with our theoretical result from \Cref{cor:no_modality_subsampling}. For
completeness, in \Cref{app:additional_experimental_results}, we also report
joint log-likelihoods, latent classification performance, as well as additional
FIDs for all modalities.

\Cref{fig:num_mod_ablation} examines how the generative quality is affected
when we vary the number of modalities. Notably, for the MMVAE and MoPoE-VAE,
the generative quality deteriorates almost continuously with the number of
modalities, which is in line with our theoretical result from
\Cref{cor:more_modalities}.  Interestingly, for the MVAE, the generative
quality on {$\text{Translated-PolyMNIST}$} also decreases slightly, but the
change is comparatively small. \Cref{fig:num_mod_ablation_repeated_modality} in
\Cref{app:additional_experimental_results}, shows a similar trend even when we
control for modality-specific differences by generating PolyMNIST using the
\textit{same} background image for all modalities.

In summary, the results from \Cref{fig:fids} and \Cref{fig:num_mod_ablation}
provide empirical support for the existence of a generative quality gap between
unimodal and mixture-based multimodal VAEs that sub-sample modalities. The
results verify that the approximation of the joint distribution improves
for models without sub-sampling, which manifests in better generative
quality. In contrast, the gap increases disproportionally with each additional
modality for both the MMVAE and MoPoE-VAE\@. Hence, the presented results
support all of the theoretical statements from
\Cref{subsec:formalization,subsec:implications}.

\begin{figure}[t]
\begin{center}
\begin{subfigure}[t]{.33\linewidth}
  \centering
  \includegraphics[width=1.0\linewidth]{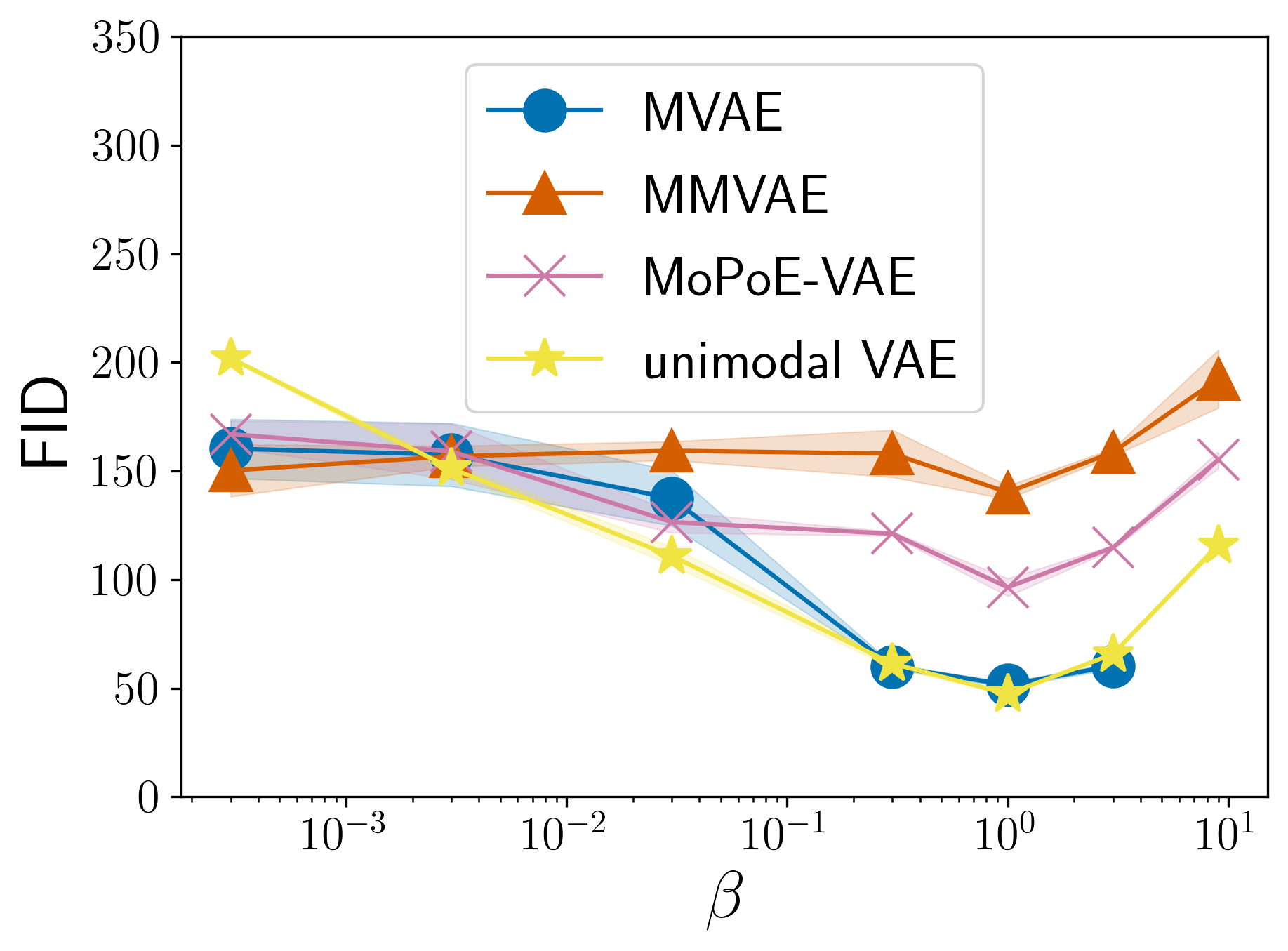}
  \caption{PolyMNIST}
\end{subfigure}%
\begin{subfigure}[t]{.33\linewidth}
  \centering
  \includegraphics[width=1.0\linewidth]{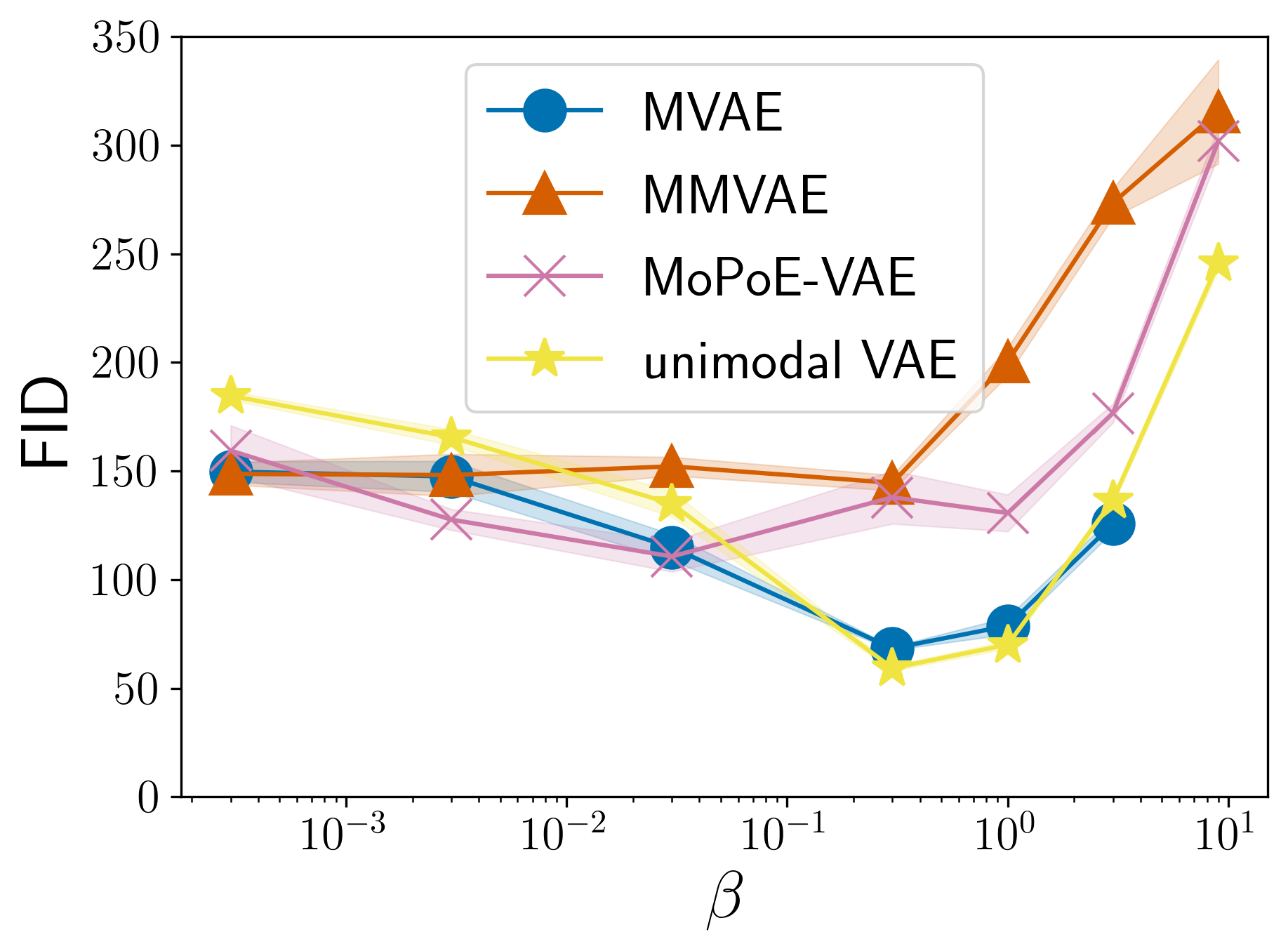}
  \caption{Translated-PolyMNIST}
\end{subfigure}%
\begin{subfigure}[t]{.33\linewidth}
  \centering
  \includegraphics[width=1.0\linewidth]{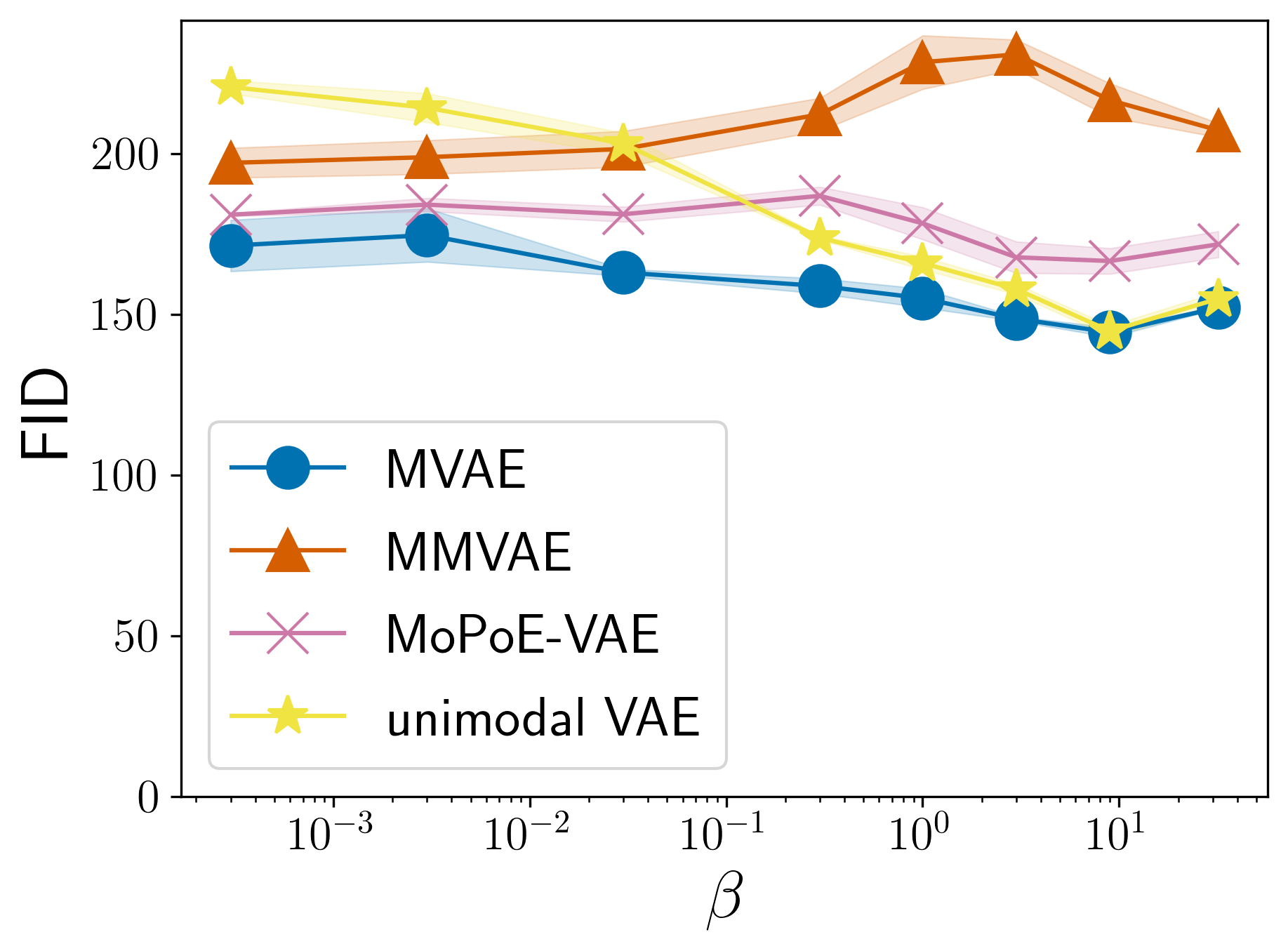}
  \caption{Caltech Birds (CUB)}
\end{subfigure}
\caption{%
  Generative quality for one output modality over a range of $\beta$ values. Points
  denote the FID averaged over three seeds and bands show one standard
  deviation respectively. Due to numerical instabilities, the MVAE could
  not be trained with larger $\beta$ values.
}
\label{fig:fids}
\end{center}
\end{figure}

\begin{figure}[t]
\begin{center}
\begin{subfigure}[t]{.35\linewidth}
  \centering
  \includegraphics[width=1.0\linewidth]{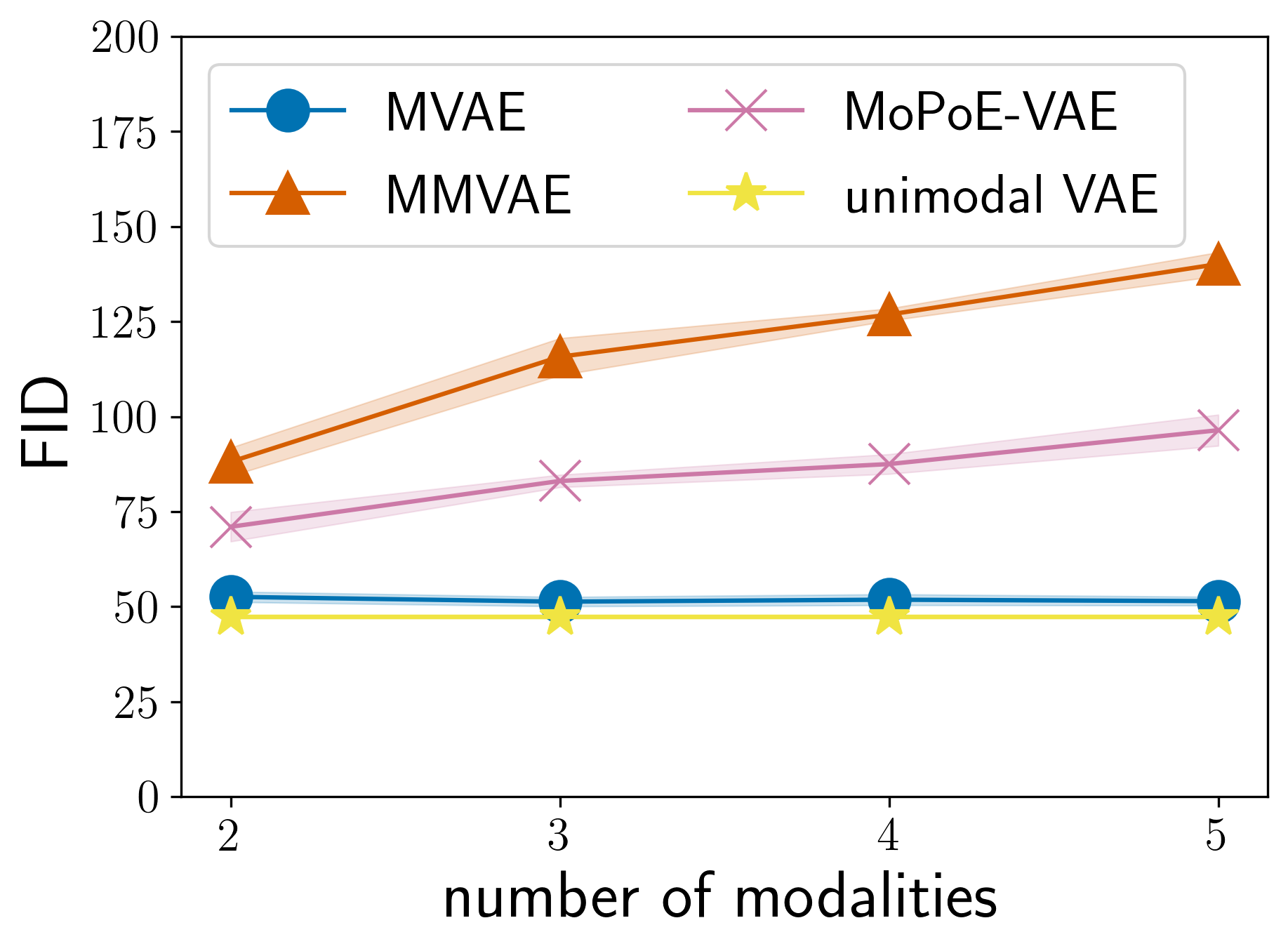}
  \caption{PolyMNIST}
\end{subfigure}
\hskip +0.135in
\begin{subfigure}[t]{.35\linewidth}
  \centering
  \includegraphics[width=1.0\linewidth]{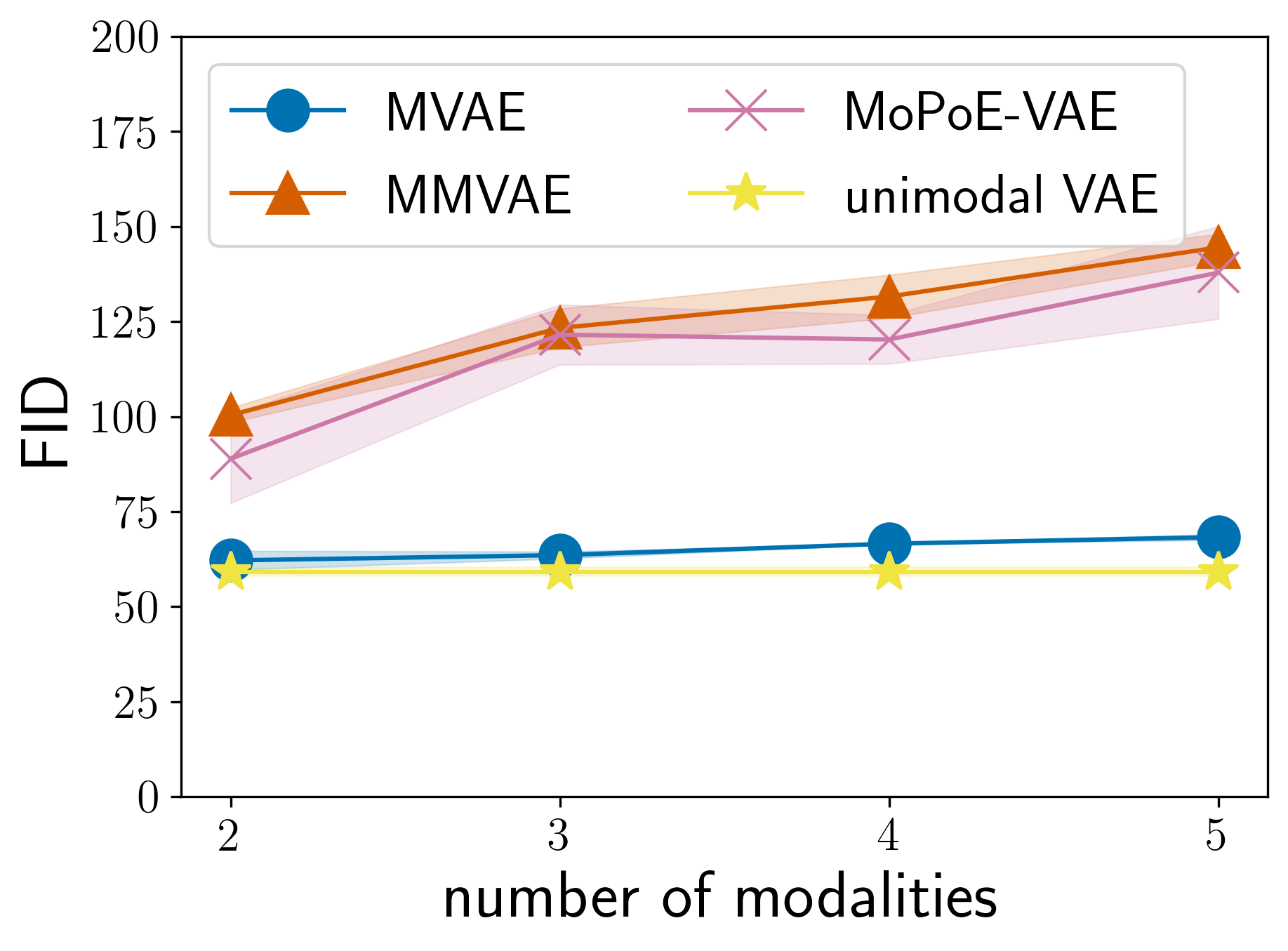}
  \caption{Translated-PolyMNIST}
\end{subfigure}%
\caption{%
  Generative quality as a function of the number of modalities. The results show
  the FID of the same modality and therefore all values on the same scale. All
  models are trained with $\beta=1$ on PolyMNIST and $\beta=0.3$ on
  Translated-PolyMNIST\@. The results are averaged over three seeds and the
  bands show one standard deviation respectively.  For the unimodal VAE, which
  uses only a single modality, the average and standard deviation are plotted
  as a constant.
}
\label{fig:num_mod_ablation}
\end{center}
\end{figure}

\subsection{Lack of generative coherence on more complex data}
\label{subsec:lack_of_coherence}

Apart from generative quality, another desired criterion
\citep{Shi2019,Sutter2020} for an effective multimodal generative model is
\textit{generative coherence}, which measures a model's ability to generate
semantically related samples across modalities. 
To be consistent with \citet{Sutter2021}, we compute the leave-one-out
coherence (see \Cref{app:implementation_details}), which means that the input
to each model consists of all modalities except the one that is being
conditionally generated. On CUB, we resort to a qualitative evaluation of
coherence, because there is no ground truth annotation of shared factors and
the proxies used in \citet{Shi2019} and \citet{Shi2021} do not yield meaningful
estimates when applied to the conditionally generated images from models that
were trained on \textit{real} images.%
\footnote{%
  Please note that previous work \citep{Shi2019,Shi2021} used a simplified
  version of the CUB dataset, where images were replaced by precomputed
  ResNet-features.
}

In terms of generative coherence, \Cref{fig:coherences} reveals that the
positive results from previous work do not translate to more complex datasets.
As a baseline, for PolyMNIST (\Cref{fig:coherences:poly}) we replicate the
coherence results from \citet{Sutter2021} for a range of $\beta$ values.
Consistent with previous work \citep{Shi2019, Shi2021, Sutter2020, Sutter2021},
we find that the MMVAE and MoPoE-VAE exhibit superior coherence compared to the
MVAE\@.  Though, it was not apparent from previous work that MVAE's coherence
can improve significantly with increasing $\beta$ values, which can be of
independent interest for future work. On Translated-PolyMNIST
(\Cref{fig:coherences:tpoly}), the stark decline of all models makes it evident
that coherence cannot be guaranteed when shared information cannot be predicted
in expectation across modalities. Our qualitative results
(\Cref{fig:qualitative_conditional} in
\Cref{app:additional_experimental_results}) confirm that not a single
multimodal VAE is able to conditionally generate coherent examples and, for the
most part, not any digits at all.  To verify that the lack of coherence is not
an artifact of our implementation, we have checked that the encoders and
decoders have sufficient capacity such that digits show up in most
self-reconstructions.  On CUB (\Cref{fig:coherences:cub}), for which coherence
cannot be computed, the qualitative results for conditional generation verify
that none of the existing approaches generates images that are both of
sufficiently high quality and coherent with respect to the given caption.
Overall, the negative results on Translated-PolyMNIST and CUB showcase the
limitations of existing approaches when applied to more complex datasets than
those used in previous benchmarks.

\begin{figure}[t]
\begin{center}
\begin{subfigure}[b]{.37\linewidth}
  \centering
  \includegraphics[width=1.0\linewidth]{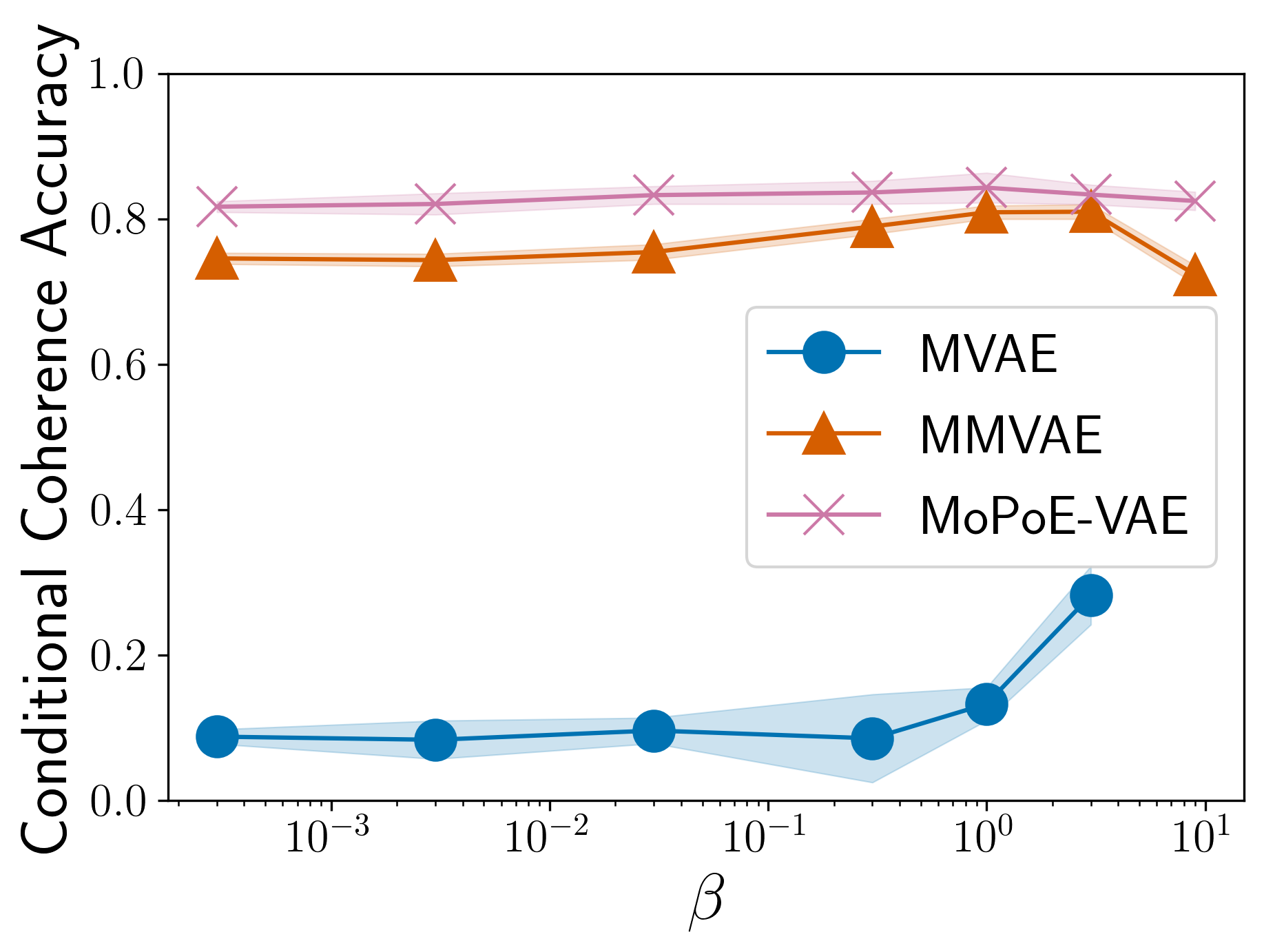}
  \vskip -0.15cm
  \caption{PolyMNIST}
\label{fig:coherences:poly}
  \centering
  \includegraphics[width=1.0\linewidth]{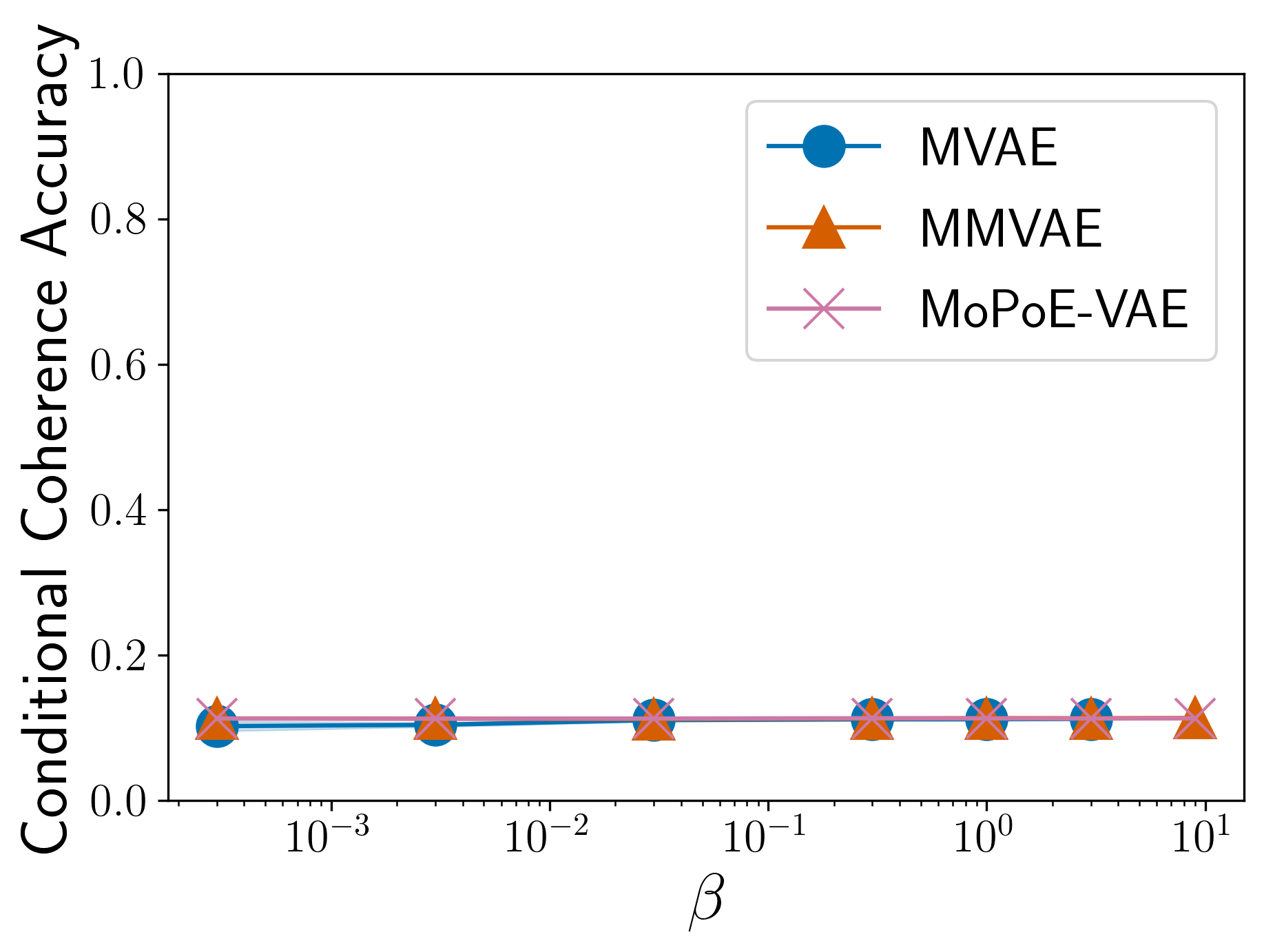}
  \vskip -0.15cm
  \caption{Translated-PolyMNIST}
\label{fig:coherences:tpoly}
\end{subfigure}
\hskip +0.38in
\begin{subfigure}[b]{.37\linewidth}
  \centering
  \includegraphics[width=1.0\linewidth]{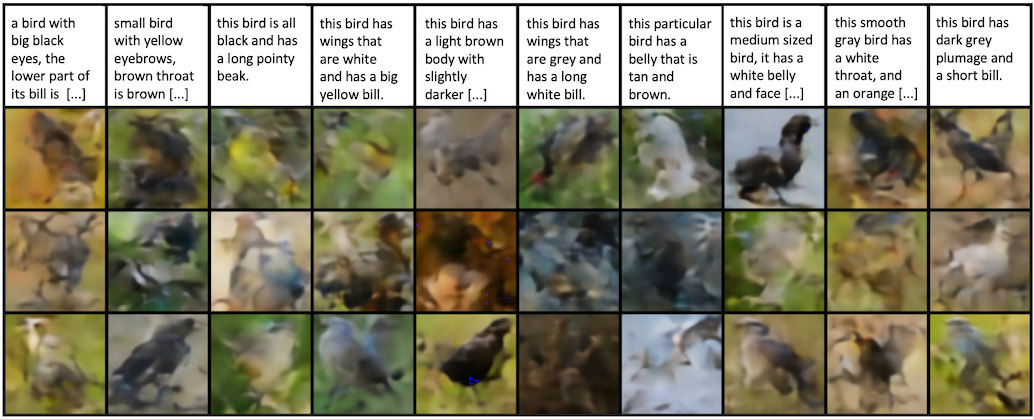}
  \vskip -0.15cm
  \caption*{MVAE, $\beta=9$}
  \vskip 0.10cm
  \includegraphics[width=1.0\linewidth]{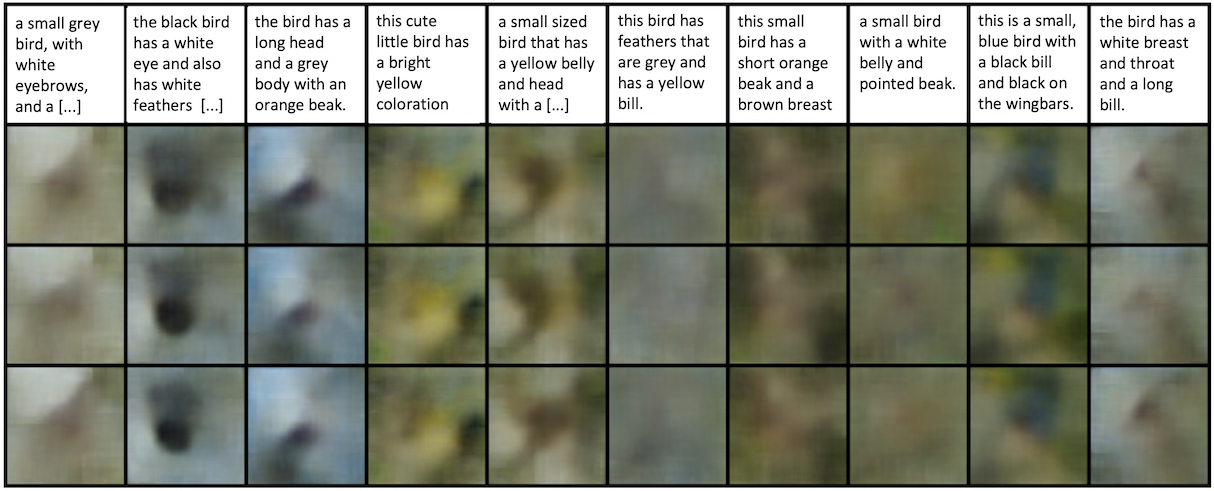}
  \vskip -0.15cm
  \caption*{MMVAE, $\beta=9$}
  \vskip 0.10cm
  \includegraphics[width=1.0\linewidth]{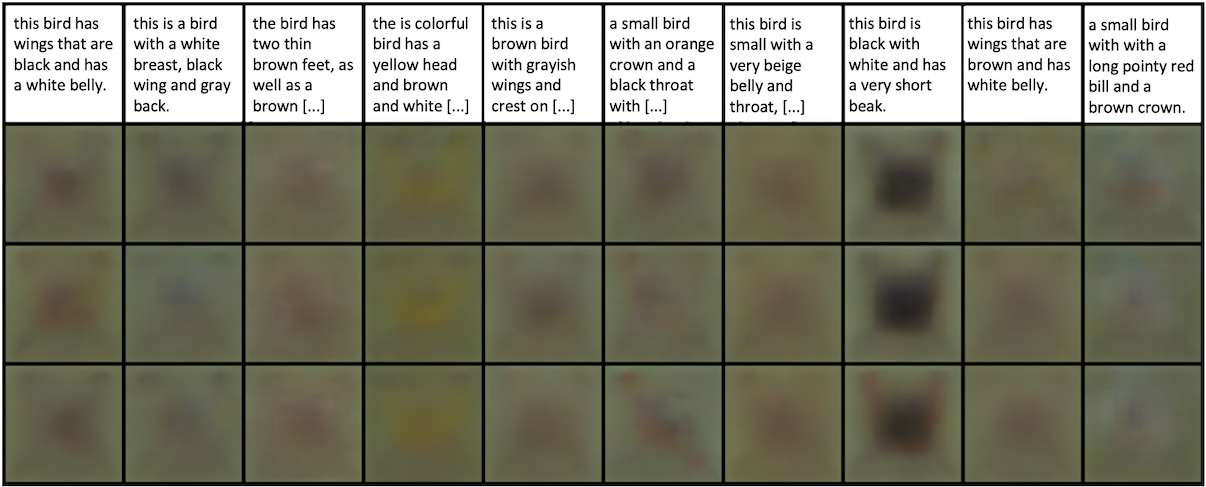}
  \vskip -0.15cm
  \caption*{MoPoE-VAE, $\beta=9$}
  \caption{Caltech Birds (CUB)}
\label{fig:coherences:cub}
\end{subfigure}
\caption{%
  Generative coherence for the conditional generation across modalities. For
  PolyMNIST (\Cref{fig:coherences:poly,fig:coherences:tpoly}), we plot the
  average leave-one-out coherence. Due to numerical instabilities, the MVAE
  could not be trained with larger $\beta$ values.  For CUB
  (\Cref{fig:coherences:cub}), we show qualitative results for the conditional
  generation of images given captions.  Best viewed zoomed and in color.
}
\label{fig:coherences}
\end{center}
\end{figure}

\section{Discussion}
\label{sec:discussion}

\paragraph{Implications and scope}
Our experiments lend empirical support to the proposed theoretical limitations
of mixture-based multimodal VAEs. On both synthetic and real data, our results
showcase the generative limitations of multimodal VAEs that sub-sample
modalities. However, our results also reveal that none of the existing
approaches (including those without sub-sampling) fulfill all desired criteria
\citep{Shi2019,Sutter2020} of an effective multimodal generative model.  More
broadly, our results showcase the limitations of existing VAE-based approaches
for modeling weakly-supervised data in the presence of modality-specific
information, and in particular when shared information cannot be predicted in
expectation across modalities. The {$\text{Translated-PolyMNIST}$} dataset
demonstrates this problem in a simple setting, while the results on CUB confirm
that similar issues can be expected on more realistic datasets. 
For future work, it would be interesting to generate simulated data where the
discrepancy $\Delta(\X, \mathcal{S})$ can be measured exactly and where it is
gradually increased by an adaptation of the dataset in a way that increases
only the modality-specific variation. Furthermore, it is worth noting that
\Cref{thm:irreducible_error} applies to all multimodal VAEs that optimize
\Cref{eq:objective_s}, which is a lower bound on the multimodal ELBO for models
that sub-sample modalities. Our theory predicts the same discrepancy for models
that optimize a tighter bound (e.g., via
\Cref{eq:mopoe_theoretical_objective}), because the discrepancy $\Delta(\X,
\mathcal{S})$ derives from the likelihood term, which is equal for
\Cref{eq:objective_s,eq:mopoe_theoretical_objective}. In
\Cref{app:additional_experimental_results} we verify that the discrepancy can
also be observed for the MMVAE with the original implementation from
\cite{Shi2019} that uses a tighter bound. Further analysis of the different
bounds can be an interesting direction for future work.  

\paragraph{Model selection and generalization} 
Our results raise fundamental questions regarding model selection and
generalization, as generative quality and generative coherence do not
necessarily go hand in hand.  In particular, our experiments demonstrate that
FIDs and log-likelihoods do not reflect the problem of lacking coherence and
without access to ground truth labels (on what is shared between modalities)
coherence metrics cannot be computed.  As a consequence, it can be difficult to
perform model selection on more realistic multimodal datasets, especially for
less interpretable types of modalities, such as DNA sequences.  Hence, for
future work it would be interesting to design alternative metrics for
generative coherence that can be applied when shared information is not
annotated.  
For the related topic of generalization, it can be illuminating to consider
what would happen, if one could arbitrarily ``scale things up''. In the limit of
infinite i.i.d.\ data, perfect generative coherence could be achieved by a
model that memorizes the pairwise relations between training examples from
different modalities. However, would this yield a model that generalizes out of
distribution (e.g., under distribution shift)? We believe that for future work
it would be worthwhile to consider out-of-distribution generalization
performance \citep[e.g.,][]{Montero2021} in addition to generative quality and
coherence.

\paragraph{Limitations}
In general, the limitations and tradeoffs presented in this work apply to a
large family of multimodal VAEs, but not necessarily to other types of
generative models, such as generative adversarial networks
\citep{Goodfellow2014}. Where current VAEs are limited by the reconstruction of
modality-specific information, other types of generative models might offer
less restrictive objectives.  Similar to previous work, we have only considered
models with simple priors, such as Gauss and Laplace distributions with
independent dimensions. Further, we have not considered models with
modality-specific latent spaces, which seem to yield better empirical results
\citep{Hsu2018,Sutter2020,Daunhawer2020}, but currently lack theoretical
grounding. Modality-specific latent spaces offer a potential solution to the
problem of cross-modal prediction by providing modality-specific context from
the target modalities to each decoder. However, more work is required to
establish \textit{guarantees} for the identifiability and disentanglement of
shared and modality-specific factors, which might only be possible for VAEs
under relatively strong assumptions
\citep{Locatello2019,Locatello2020,Gresele2019,Kuegelgen2021}.

\section{Conclusion}

In this work, we have identified, formalized, and demonstrated several
limitations of multimodal VAEs. Across different datasets, this work revealed a
significant gap in generative quality between unimodal and mixture-based
multimodal VAEs. We showed that this apparent paradox can be explained by the
sub-sampling of modalities, which enforces an undesirable upper bound on the
multimodal ELBO and therefore limits the generative quality of the respective
models. While the sub-sampling of modalities allows these models to learn the
inference networks for different subsets of modalities efficiently, there is a
notable tradeoff in terms of generative quality.  Finally, we studied two
failure cases---Translated-PolyMNIST and CUB---that demonstrate the limitations
of multimodal VAEs when applied to more complex datasets than those used in
previous benchmarks.

For future work, we believe that it is crucial to be aware of the limitations
of existing methods as a first step towards developing new methods that achieve
more than incremental improvements for multimodal learning. We conjecture that
there are at least two potential strategies to circumvent the theoretical
limitations of multimodal VAEs. First, the sub-sampling of modalities can be
combined with modality-specific context from the target modalities. Second,
cross-modal reconstruction terms can be replaced with less restrictive
objectives that do not require an exact prediction of modality-specific
information.  Finally, we urge future research to design more challenging
benchmarks and to compare multimodal generative models in terms of both
generative quality and coherence across a range of hyperparameter values, to
present the tradeoff between these metrics more transparently.

\newpage

\section*{Acknowledgements}
ID and KC were supported by the SNSF grant \texttt{\#200021\_188466}. Special
thanks to Alexander Marx, Nicolò Ruggeri, Maxim Samarin, Yuge Shi, and Mario
Wieser for helpful discussions and/or feedback on the manuscript.

\section*{Reproducibility Statement}
For all theoretical statements, we provide detailed derivations and state the
necessary assumptions.  For our main theoretical results, we present empirical
support on both synthetic and real data. To ensure empirical reproducibility,
the results of each experiment and every ablation were averaged over multiple
seeds and are reported with standard deviations.  All of the used datasets are
either public or can be generated from publicly available resources using the
code that we provide in the supplementary material. Information about
implementation details, hyperparameter settings, and evaluation metrics are
included in \Cref{app:experiments}.

\bibliographystyle{apalike}
\bibliography{references}

\begin{thebibliography}{}

\bibitem[Alemi et~al., 2017]{Alemi2017}
Alemi, A.~A., Fischer, I., Dillon, J.~V., and Murphy, K. (2017).
\newblock Deep variational information bottleneck.
\newblock In {\em International Conference on Learning Representations}.

\bibitem[Baltru{\v{s}}aitis et~al., 2019]{Baltrusaitis2019}
Baltru{\v{s}}aitis, T., Ahuja, C., and Morency, L.-P. (2019).
\newblock Multimodal machine learning: a survey and taxonomy.
\newblock {\em IEEE Transactions on Pattern Analysis and Machine Intelligence},
  41(2):423--443.

\bibitem[Borji, 2019]{Borji2019}
Borji, A. (2019).
\newblock Pros and cons of {GAN} evaluation measures.
\newblock {\em Computer Vision and Image Understanding}, 179:41--65.

\bibitem[Choi et~al., 2018]{Choi2018}
Choi, Y., Choi, M., Kim, M., Ha, J.-W., Kim, S., and Choo, J. (2018).
\newblock Star{GAN}: unified generative adversarial networks for multi-domain
  image-to-image translation.
\newblock In {\em Conference on Computer Vision and Pattern Recognition}.

\bibitem[Cover and Thomas, 2012]{Cover2012}
Cover, T.~M. and Thomas, J.~A. (2012).
\newblock {\em Elements of information theory}.
\newblock John Wiley \& Sons.

\bibitem[Daunhawer et~al., 2020]{Daunhawer2020}
Daunhawer, I., Sutter, T.~M., Marcinkevics, R., and Vogt, J.~E. (2020).
\newblock Self-supervised disentanglement of modality-specific and shared
  factors improves multimodal generative models.
\newblock In {\em German Conference on Pattern Recognition}.

\bibitem[Dorent et~al., 2019]{Dorent2019}
Dorent, R., Joutard, S., Modat, M., Ourselin, S., and Vercauteren, T. (2019).
\newblock Hetero-modal variational encoder-decoder for joint modality
  completion and segmentation.
\newblock In {\em Medical Image Computing and Computer Assisted Intervention},
  pages 74--82. Springer.

\bibitem[Goodfellow et~al., 2014]{Goodfellow2014}
Goodfellow, I., Pouget-Abadie, J., Mirza, M., Xu, B., Warde-Farley, D., Ozair,
  S., Courville, A., and Bengio, Y. (2014).
\newblock Generative adversarial nets.
\newblock In {\em Advances in neural information processing systems}.

\bibitem[Gresele et~al., 2019]{Gresele2019}
Gresele, L., Rubenstein, P.~K., Mehrjou, A., Locatello, F., and
  Sch{\"{o}}lkopf, B. (2019).
\newblock The incomplete {R}osetta stone problem: identifiability results for
  multi-view nonlinear {ICA}.
\newblock In {\em Conference on Uncertainty in Artificial Intelligence}.

\bibitem[Heusel et~al., 2017]{Heusel2017}
Heusel, M., Ramsauer, H., Unterthiner, T., Nessler, B., and Hochreiter, S.
  (2017).
\newblock {GAN}s trained by a two time-scale update rule converge to a local
  {N}ash equilibrium.
\newblock In {\em Advances in Neural Information Processing Systems}.

\bibitem[Higgins et~al., 2017]{Higgins2017}
Higgins, I., Matthey, L., Pal, A., Burgess, C., Glorot, X., Botvinick, M.,
  Mohamed, S., and Lerchner, A. (2017).
\newblock beta-{VAE}: learning basic visual concepts with a constrained
  variational framework.
\newblock In {\em International Conference on Learning Representations}.

\bibitem[Hsu and Glass, 2018]{Hsu2018}
Hsu, W.-N. and Glass, J. (2018).
\newblock Disentangling by partitioning: a representation learning framework
  for multimodal sensory data.
\newblock {\em arXiv preprint arXiv:1805.11264}.

\bibitem[Huang et~al., 2018]{Huang2018}
Huang, X., Liu, M., Belongie, S.~J., and Kautz, J. (2018).
\newblock Multimodal unsupervised image-to-image translation.
\newblock In {\em European Conference on Computer Vision}.

\bibitem[Ilse et~al., 2019]{Ilse2019}
Ilse, M., Tomczak, J.~M., Louizos, C., and Welling, M. (2019).
\newblock {DIVA}: domain invariant variational autoencoders.
\newblock {\em arXiv preprint arXiv:1905.10427}.

\bibitem[Kingma and Ba, 2015]{Kingma2015}
Kingma, D.~P. and Ba, J. (2015).
\newblock Adam: a method for stochastic gradient descent.
\newblock {\em International Conference on Learning Representations}.

\bibitem[Kingma and Welling, 2014]{Kingma2014}
Kingma, D.~P. and Welling, M. (2014).
\newblock Auto-encoding variational {B}ayes.
\newblock In {\em International Conference on Learning Representations}.

\bibitem[Kurle et~al., 2019]{Kurle2018}
Kurle, R., Guennemann, S., and van~der Smagt, P. (2019).
\newblock Multi-source neural variational inference.
\newblock In {\em AAAI Conference on Artificial Intelligence}.

\bibitem[LeCun et~al., 1998]{LeCun1998}
LeCun, Y., Bottou, L., Bengio, Y., and Haffner, P. (1998).
\newblock Gradient-based learning applied to document recognition.
\newblock {\em Proceedings of the IEEE}, 86(11):2278--2324.

\bibitem[Lee and van~der Schaar, 2021]{Lee2021}
Lee, C. and van~der Schaar, M. (2021).
\newblock A variational information bottleneck approach to multi-omics data
  integration.
\newblock In {\em International Conference on Artificial Intelligence and
  Statistics}.

\bibitem[Lin et~al., 2021]{Lin2021}
Lin, J., Men, R., Yang, A., Zhou, C., Ding, M., Zhang, Y., Wang, P., Wang, A.,
  Jiang, L., Jia, X., et~al. (2021).
\newblock M6: a chinese multimodal pretrainer.
\newblock {\em arXiv preprint arXiv:2103.00823}.

\bibitem[Liu et~al., 2019]{Liu2019}
Liu, M., Huang, X., Mallya, A., Karras, T., Aila, T., Lehtinen, J., and Kautz,
  J. (2019).
\newblock Few-shot unsupervised image-to-image translation.
\newblock In {\em International Conference on Computer Vision}.

\bibitem[Locatello et~al., 2019]{Locatello2019}
Locatello, F., Bauer, S., Lucic, M., R{\"a}tsch, G., Gelly, S.,
  Sch{\"{o}}lkopf, B., and Bachem, O. (2019).
\newblock Challenging common assumptions in the unsupervised learning of
  disentangled representations.
\newblock In {\em International Conference on Machine Learning}.

\bibitem[Locatello et~al., 2020]{Locatello2020}
Locatello, F., Poole, B., R{\"a}tsch, G., Sch{\"o}lkopf, B., Bachem, O., and
  Tschannen, M. (2020).
\newblock Weakly-supervised disentanglement without compromises.
\newblock In {\em International Conference on Machine Learning}.

\bibitem[Minoura et~al., 2021]{Minoura2021}
Minoura, K., Abe, K., Nam, H., Nishikawa, H., and Shimamura, T. (2021).
\newblock A mixture-of-experts deep generative model for integrated analysis of
  single-cell multiomics data.
\newblock {\em Cell Reports Methods}.

\bibitem[Montero et~al., 2021]{Montero2021}
Montero, M.~L., Ludwig, C.~J., Costa, R.~P., Malhotra, G., and Bowers, J.
  (2021).
\newblock The role of disentanglement in generalisation.
\newblock In {\em International Conference on Learning Representations}.

\bibitem[Ngiam et~al., 2011]{Ngiam2011}
Ngiam, J., Khosla, A., Kim, M., Nam, J., Lee, H., and Ng, A.~Y. (2011).
\newblock Multimodal deep learning.
\newblock In {\em International Conference on Machine Learning}.

\bibitem[Poole et~al., 2019]{Poole2019}
Poole, B., Ozair, S., {Van Den Oord}, A., Alemi, A., and Tucker, G. (2019).
\newblock On variational bounds of mutual information.
\newblock In {\em International Conference on Machine Learning}.

\bibitem[Ramesh et~al., 2021]{Ramesh2021}
Ramesh, A., Pavlov, M., Goh, G., Gray, S., Voss, C., Radford, A., Chen, M., and
  Sutskever, I. (2021).
\newblock Zero-shot text-to-image generation.
\newblock In {\em International Conference on Machine Learning}.

\bibitem[Shi et~al., 2021]{Shi2021}
Shi, Y., Paige, B., Torr, P., and Siddharth, N. (2021).
\newblock Relating by contrasting: a data-efficient framework for multimodal
  generative models.
\newblock In {\em International Conference on Learning Representations}.

\bibitem[Shi et~al., 2019]{Shi2019}
Shi, Y., Siddharth, N., Paige, B., and Torr, P. (2019).
\newblock Variational mixture-of-experts autoencoders for multi-modal deep
  generative models.
\newblock In {\em Advances in Neural Information Processing Systems}.

\bibitem[Srivastava and Salakhutdinov, 2014]{Srivastava2014}
Srivastava, N. and Salakhutdinov, R. (2014).
\newblock Multimodal learning with deep {B}oltzmann machines.
\newblock {\em Journal of Machine Learning Research}, 15(1):2949--2980.

\bibitem[Sutter et~al., 2020]{Sutter2020}
Sutter, T.~M., Daunhawer, I., and Vogt, J.~E. (2020).
\newblock Multimodal generative learning utilizing {Jensen-Shannon}-divergence.
\newblock In {\em Advances in Neural Information Processing Systems}.

\bibitem[Sutter et~al., 2021]{Sutter2021}
Sutter, T.~M., Daunhawer, I., and Vogt, J.~E. (2021).
\newblock Generalized multimodal {ELBO}.
\newblock In {\em International Conference on Learning Representations}.

\bibitem[Suzuki et~al., 2016]{Suzuki2016}
Suzuki, M., Nakayama, K., and Matsuo, Y. (2016).
\newblock Joint multimodal learning with deep generative models.
\newblock {\em arXiv preprint arXiv:1611.01891}.

\bibitem[Tsai et~al., 2019]{Tsai2019}
Tsai, Y.-H.~H., Liang, P.~P., Zadeh, A., Morency, L.-P., and Salakhutdinov, R.
  (2019).
\newblock Learning factorized multimodal representations.
\newblock In {\em International Conference on Learning Representations}.

\bibitem[Tucker et~al., 2019]{Tucker2019}
Tucker, G., Lawson, D., Gu, S., and Maddison, C.~J. (2019).
\newblock Doubly reparameterized gradient estimators for {M}onte {C}arlo
  objectives.
\newblock In {\em International Conference on Learning Representations}.

\bibitem[Vedantam et~al., 2018]{Vedantam2018}
Vedantam, R., Fischer, I., Huang, J., and Murphy, K. (2018).
\newblock Generative models of visually grounded imagination.
\newblock In {\em International Conference on Learning Representations}.

\bibitem[{von~K{\"u}gelgen} et~al., 2021]{Kuegelgen2021}
{von~K{\"u}gelgen}, J., Sharma, Y., Gresele, L., Brendel, W., Sch{\"o}lkopf,
  B., Besserve, M., and Locatello, F. (2021).
\newblock Self-supervised learning with data augmentations provably isolates
  content from style.
\newblock In {\em Advances in Neural Information Processing Systems}.

\bibitem[Wah et~al., 2011]{Wah2011}
Wah, C., Branson, S., Welinder, P., Perona, P., and Belongie, S. (2011).
\newblock The {Caltech-UCSD Birds-200-2011} dataset.
\newblock Technical Report CNS-TR-2011-001, California Institute of Technology.

\bibitem[Wieser et~al., 2020]{Wieser2020}
Wieser, M., Parbhoo, S., Wieczorek, A., and Roth, V. (2020).
\newblock Inverse learning of symmetry transformations.
\newblock In {\em Advances in Neural Information Processing Systems}.

\bibitem[Wu and Goodman, 2018]{Wu2018}
Wu, M. and Goodman, N. (2018).
\newblock Multimodal generative models for scalable weakly-supervised learning.
\newblock In {\em Advances in Neural Information Processing Systems}.

\bibitem[Wu and Goodman, 2019]{Wu2020}
Wu, M. and Goodman, N.~D. (2019).
\newblock Multimodal generative models for compositional representation
  learning.
\newblock {\em arXiv preprint arXiv:1912.05075}.

\end{thebibliography}

\newpage
\appendix

\section{Definitions}
\label{app:infotheory}

Let $\mathcal{X}$, $\mathcal{Y}$, and $\mathcal{Z}$ denote the support sets of
three discrete random vectors $\X$, $\Y$, and $\Z$ respectively. Let
$p_{\X}(\x)$, $p_{\Y}(\y)$, and $p_{\Z}(\z)$ denote the respective marginal
distributions and note that we will leave out the subscripts (e.g., $p(\x)$ instead of $p_{\mathcal{X}}(\x)$)
when it is clear from context which distribution we are referring to.
Analogously, we write shorthand $p(\y \given \x)$ for the
conditional distribution of $\Y$ given $\X$ and $p(\x, \y)$ for the joint
distribution of $\X$ and $\Y$.

The entropy of $\X$ is defined as
\begin{equation}
  H(\X) = - \sum_{\x \in \mathcal{X}} p(\x) \log p(\x)\;.
\end{equation}

The conditional entropy of $\X$ given $\Y$ is defined as
\begin{equation}
  H(\X \given \Y) = - \sum_{\x \in \mathcal{X}, \y \in \mathcal{Y}} p(\x, \y) \log p(\x \given \y)\;.
\end{equation}

The joint entropy of $\X$ and $\Y$ is defined as
\begin{equation}
  H(\X, \Y) = - \sum_{\x \in \mathcal{X}, \y \in \mathcal{Y}} p(\x, \y) \log p(\x, \y)\;.
\end{equation}

The Kullback-Leibler divergence of the discrete probability distribution $P$ from the
discrete probability distribution $Q$ is defined as
\begin{equation}
  D_{\text{KL}}(P \divfrom Q) = \sum_{\x \in \mathcal{X}} P(\x) \log \frac{P(\x)}{Q(\x)}
\end{equation}
assuming that $P$ and $Q$ are defined on the same support set $\mathcal{X}$.

The cross-entropy of the discrete probability distribution $Q$ from the
discrete probability distribution $P$ is defined as
\begin{equation}
  CE(P, Q) = - \sum_{\x \in \mathcal{X}} P(\x) \log Q(\x)
\end{equation}
assuming that $P$ and $Q$ are defined on the same support set $\mathcal{X}$.

The mutual information of $\X$ and $\Y$ is defined as
\begin{equation}
  I(\X; \Y) = D_{\text{KL}}(p(\x, \y) \divfrom p(\x) p(\y))\;.
\end{equation}

The conditional mutual information of $\X$ and $\Y$ given $\Z$ is defined as
\begin{equation}
  I(\X; \Y \given \Z) = 
  \sum_{\z \in \mathcal{Z}} p(\z) D_{\text{KL}}(p(\x, \y \given \z) \divfrom p(\x \given \z) p(\y \given \z))\;.
\end{equation}

Recall that we assume discrete random vectors (e.g., pixel values) and
therefore can assume non-negative entropy, conditional entropy and conditional
mutual information terms \citep{Cover2012}. For continuous random variables, all
of the above sums can be replaced with integrals. The only
information-theoretic quantities for which in this work we use continuous
random vectors are the KL-divergence and mutual information, both of which are
always non-negative.

\newpage
\section{Proofs}
\label{app:proofs}

\subsection{Information-theoretic derivation of the multimodal ELBO}
\label{app:elbo_derivation}

\Cref{prop:joint_entropy} relates the multimodal ELBO
(\Cref{def:multimodal_elbo}) to the expected log-evidence, the quantity that is
being approximated by all likelihood-based generative models including VAEs.
The derivation is based on a straightforward extension of the variational
information bottleneck \citep[VIB;][]{Alemi2017}. We include the result mainly
for the purpose of illustration---to clarify the notation, as well as the
relation between the multimodal ELBO and the underlying information-theoretic
quantities of interest: the entropy, conditional entropy, and mutual
information.

\paragraph{Notation} 
Readers who are familiar with latent variable models, but may be less
familiar with the information-theoretic perspective on VAEs, please keep in
mind the following notational differences. In contrast to the latent variable
model perspective, which defines a variational posterior (typically denoted by
the letter $q$) and a stochastic decoder (typically denoted by the letter $p$),
the VIB defines a stochastic encoder $p_{\theta}(\z \given \x)$ and variational
decoder $q_{\phi}(\x \given \z) $. Moreover, the VIB makes no assumptions about
the true posterior. Also note that latent variable models tend to write the
ELBO with respect to the log-evidence $\log p(\x)$, but information-theoretic
approaches write the ELBO with respect to the \textit{expected} log-evidence
$\E_{p(\x)}[ \log p(\x)]$; though, it is still assumed that the estimation of
the ELBO is based on a finite sample from $p(\x)$.

\begin{restatable}{proposition}{propjointentropy}
\label{prop:joint_entropy}
The multimodal ELBO forms a variational lower bound on the expected log-evidence:
\begin{equation}
\E_{p(\x)}[\log p(\x)] \geq \mathcal{L}(\x; \theta, \phi)\;.
\end{equation}
\end{restatable}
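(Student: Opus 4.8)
The plan is to show that the gap between the expected log-evidence and the multimodal ELBO is exactly a single Kullback--Leibler divergence between two joint distributions over $(\x,\z)$, and then to conclude by non-negativity of the KL divergence. This is the joint-distribution form of the standard ELBO gap, transcribed into the VIB notation of \Cref{def:multimodal_elbo} where $p_{\theta}(\z\given\x)$ is the stochastic encoder and $q_{\phi}(\x\given\z)$ the variational decoder. Since $\E_{p(\x)}[\log p(\x)]$ is a fixed quantity (the negative entropy $-H(\X)$) independent of $\theta$ and $\phi$, establishing a non-negative gap immediately yields the claimed lower bound.

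First I would introduce the two joints that the encoder and decoder implicitly define: the \emph{encoder joint} $p_{\theta}(\x,\z) \coloneqq p(\x)\,p_{\theta}(\z\given\x)$ and the \emph{decoder joint} $q_{\phi}(\x,\z) \coloneqq q(\z)\,q_{\phi}(\x\given\z)$. A preliminary check is that $q_{\phi}(\x,\z)$ is a genuine probability distribution, which holds because $q(\z)$ is a prior and $q_{\phi}(\x\given\z)$ is a normalized conditional, so their product sums to one. Next I would rewrite the rate term as an expectation over the encoder joint, namely
\[
  \E_{p(\x)}\!\left[D_{\text{KL}}\!\left(p_{\theta}(\z\given\x)\divfrom q(\z)\right)\right]
  = \E_{p_{\theta}(\x,\z)}\!\left[\log \frac{p_{\theta}(\z\given\x)}{q(\z)}\right],
\]
and observe that $\E_{p(\x)}[\log p(\x)] = \E_{p_{\theta}(\x,\z)}[\log p(\x)]$ since $p(\x)$ does not depend on $\z$ and marginalizing the joint over $\z$ recovers $p(\x)$. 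Combining these with the reconstruction term of $\mathcal{L}$, the entire gap collapses into one expectation of a log-ratio over $p_{\theta}(\x,\z)$:
\[
  \E_{p(\x)}[\log p(\x)] - \mathcal{L}(\x;\theta,\phi)
  = \E_{p_{\theta}(\x,\z)}\!\left[\log \frac{p(\x)\,p_{\theta}(\z\given\x)}{q(\z)\,q_{\phi}(\x\given\z)}\right]
  = D_{\text{KL}}\!\left(p_{\theta}(\x,\z)\divfrom q_{\phi}(\x,\z)\right).
\]
The desired inequality then follows at once from $D_{\text{KL}}(\cdot\divfrom\cdot)\geq 0$.

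I do not expect a substantive obstacle here, since the result is the multimodal analogue of the classical VAE bound; the only delicate part is the bookkeeping that fuses the three separately written expectations into a single expectation over the encoder joint, after which recognizing the numerator as $p_{\theta}(\x,\z)$ and the denominator as $q_{\phi}(\x,\z)$ finishes the argument. The one technical hypothesis worth stating explicitly is absolute continuity of $p_{\theta}(\x,\z)$ with respect to $q_{\phi}(\x,\z)$ (i.e. shared support), which ensures the KL divergence is well-defined and finite; this is the standard support condition already implicit in the definition of the multimodal ELBO.
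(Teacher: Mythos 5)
Your proof is correct, but it takes a genuinely different route from the paper's. You collapse the entire gap into a single joint KL divergence: writing the encoder joint $p_{\theta}(\x,\z) = p(\x)\,p_{\theta}(\z\given\x)$ and the decoder joint $q_{\phi}(\x,\z) = q(\z)\,q_{\phi}(\x\given\z)$, you obtain
\begin{equation*}
  \E_{p(\x)}[\log p(\x)] - \mathcal{L}(\x;\theta,\phi)
  = D_{\text{KL}}\left(p_{\theta}(\x,\z) \divfrom q_{\phi}(\x,\z)\right) \geq 0\,,
\end{equation*}
which is the classical ELBO-gap argument and needs only one application of non-negativity of the KL divergence. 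The paper instead starts from the information-theoretic identity $\E_{p(\x)}[\log p(\x)] = -H(\X) = -H(\X \given Z) - I(\X; Z)$ and bounds the two terms separately: the conditional entropy by the reconstruction term (introducing the variational decoder $q_{\phi}$) and the mutual information by the rate term (introducing the prior $q(\z)$), yielding the gap $\Delta_{\text{VA}} = \E_{p(\z)}[D_{\text{KL}}(p(\x \given \z) \divfrom q_{\phi}(\x \given \z))] + D_{\text{KL}}(p(\z) \divfrom q(\z))$. The two gaps are in fact the same quantity: factoring your encoder joint the other way, $p_{\theta}(\x,\z) = p(\z)\,p(\x\given\z)$ with $p(\z)$ the aggregate posterior, the chain rule for KL divergence splits your single joint KL exactly into the paper's two terms. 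What your route buys is brevity and directness; what the paper's route buys is that it exposes $H(\X \given Z)$ and $I(\X; Z)$ and their variational surrogates as the fundamental objects, and these are precisely what get reused in \Cref{lemma:vib} and in the proof of \Cref{thm:irreducible_error} --- the proposition is deliberately written as an instance of the VIB machinery the rest of the paper runs on. One last remark: the absolute-continuity hypothesis you flag is not actually needed for the inequality itself --- if it fails, the joint KL is $+\infty$ (equivalently the ELBO is $-\infty$) and the bound holds vacuously; it matters only for finiteness of the gap.
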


\begin{proof}

First, notice that the expected log-evidence is equal to the negative
entropy ${- H(\X) = \E_{p(\x)} [ \log p_{}(\x) ]}$. Given any random variable
$Z$, the entropy can be decomposed into conditional entropy and mutual
information terms: $H(\X) = H(\X \given Z) + I(\X; Z)$. 

The expected log-evidence relates to the multimodal ELBO as follows:
\begin{align}
  \E_{p(\x)} [ \log p(\x)]
  &= -H(\X \given Z) - I(\X; Z)  \label{eq:tmp}\\
  &\geq \E_{p(\x)p_{\theta}(\z \given \x)} [ \log q_{\phi}(\x \given \z) ] - \E_{p(\x)} [D_{\text{KL}}(p_{\theta}(\z \given \x) \divfrom q(\z))] \\
  &= \mathcal{L}_{}(\x; \theta, \phi)
\end{align}
where the inequality follows from the variational approximations of the
respective terms. As in \citet{Alemi2017}, we can use the following variational
bounds.

For the conditional entropy, we have
\begin{align}
  - H(\X \given Z) 
  &= \E_{p(\x)p_{\theta}(\z \given \x)} \left[ \log p_{}(\x \given \z) \right] \\
  &= \E_{p(\x)p_{\theta}(\z \given \x)} \left[ \log q_{\phi}(\x \given \z) \right] +
     \E_{p(\z)} \left[ D_{\text{KL}}(p_{}(\x \given \z) \divfrom q_{\phi}(\x \given \z)) \right] \\
  &\geq \E_{p(\x)p_{\theta}(\z \given \x)} \left[ \log q_{\phi}(\x \given \z) \right]
\end{align}
where $q_{\phi}(\x \given \z)$ is a variational decoder that is parameterized by $\phi$.

For the mutual information, we have
\begin{align}
  - I(\X; Z) 
  &= - \E_{p(\x)} \left[ D_{\text{KL}}(p_{\theta}(\z \given \x) \divfrom p(\z)) \right] \\
  &= - \E_{p(\x)} \left[ D_{\text{KL}}(p_{\theta}(\z \given \x) \divfrom q(\z)) \right] + 
       D_{\text{KL}}(p(\z) \divfrom q(\z)) \\
  &\geq - \E_{p(\x)} \left[ D_{\text{KL}}(p_{\theta}(\z \given \x) \divfrom q(\z)) \right]
\end{align}
where $q(\z)$ is a prior.

Hence, the multimodal ELBO forms a variational lower bound on the expected log-evidence:
\begin{align}
  \E_{p(\x)}[\log p(\x)] &= \mathcal{L}_{\text{}}(\x; \theta, \phi) + \Delta_{\text{VA}}(\x, \phi) \\
        &\geq \mathcal{L}_{}(\x; \theta, \phi)
\end{align}
where
\begin{equation}
  \Delta_{\text{VA}}(\x, \phi) =
  \E_{p(\z)} \left[ D_{\text{KL}}(p_{}(\x \given \z) \divfrom q_{\phi}(\x \given \z)) \right] +
    D_{\text{KL}}(p(\z) \divfrom q(\z))
\end{equation}

denotes the (non-negative) variational approximation gap.

\end{proof}

\newpage
\subsection{Relation between the different objectives}%
\label{app:elbo_s_derivation}

\Cref{prop:elbo_s_derivation} relates the multimodal ELBO $\mathcal{L}$ from \Cref{def:multimodal_elbo} to the objective
$\mathcal{L}_{\mathcal{S}}$, which is a general formulation
of the objective maximized by all mixture-based multimodal VAEs. 
Compared to previous mixture-based formulations \citep{Shi2019,Sutter2020}, our
formulation is more general in that it allows for arbitrary subsets with
non-uniform mixture coefficients. Further, the derivation \textit{quantifies}
the approximation gap between $\mathcal{L}$ and
$\mathcal{L}_{\mathcal{S}}$, where the latter corresponds to
the objectives that are actually being optimized in the implementations of the
MMVAE, MoPoE-VAE, and MVAE without sub-sampling.

\begin{proposition}
\label{prop:elbo_s_derivation}
  For every stochastic encoder $p^{\mathcal{S}}_{\theta}(\z \given \x)$ that is
  consistent with \Cref{def:mixture}, the following inequality holds:
  \begin{equation}
  \mathcal{L}(\x; \theta, \phi) \geq \mathcal{L}_{\mathcal{S}}(\x; \theta, \phi) \;.
  \end{equation}
\end{proposition}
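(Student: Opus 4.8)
The plan is to substitute the mixture encoder $p^{\mathcal{S}}_{\theta}(\z \given \x) = \sum_{A \in \mathcal{S}} \omega_A\, p_{\theta}(\z \given \x_A)$ from \Cref{def:mixture} into the multimodal ELBO $\mathcal{L}$ of \Cref{def:multimodal_elbo} and to compare its two constituent terms against the corresponding terms of $\mathcal{L}_{\mathcal{S}}$ in \Cref{eq:objective_s}. Both objectives share the same reconstruction-minus-regularization structure, so I would treat the likelihood term and the KL term separately, showing that the likelihood terms coincide exactly while the KL terms differ by a sign-favorable convexity gap.

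First I would handle the reconstruction term. Since the inner expectation over $\z$ is taken with respect to the mixture, linearity of expectation gives
\begin{equation}
  \E_{p(\x)p^{\mathcal{S}}_{\theta}(\z \given \x)}[\log q_{\phi}(\x \given \z)]
  = \sum_{A \in \mathcal{S}} \omega_A\, \E_{p(\x)p_{\theta}(\z \given \x_A)}[\log q_{\phi}(\x \given \z)]\;,
\end{equation}
which is precisely the likelihood term of $\mathcal{L}_{\mathcal{S}}$. Hence the two reconstruction terms are identical and contribute nothing to the gap.

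Next I would bound the KL term, which is where the inequality actually arises. The key fact is that $P \mapsto D_{\text{KL}}(P \divfrom q(\z))$ is convex in its first argument (a standard consequence of the joint convexity of the KL divergence). Applying Jensen's inequality to the mixture yields
\begin{equation}
  D_{\text{KL}}\!\left(\textstyle\sum_{A \in \mathcal{S}} \omega_A\, p_{\theta}(\z \given \x_A) \,\divfrom\, q(\z)\right)
  \leq \sum_{A \in \mathcal{S}} \omega_A\, D_{\text{KL}}\!\left(p_{\theta}(\z \given \x_A) \divfrom q(\z)\right)\;,
\end{equation}
and taking $\E_{p(\x)}$ of both sides preserves the inequality. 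Because the KL term enters the ELBO with a negative sign, subtracting the smaller left-hand side produces a value at least as large as subtracting the right-hand side appearing in $\mathcal{L}_{\mathcal{S}}$.

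Combining the equality of the reconstruction terms with this KL inequality immediately yields $\mathcal{L}(\x; \theta, \phi) \geq \mathcal{L}_{\mathcal{S}}(\x; \theta, \phi)$, with the gap given by the non-negative difference between the averaged per-subset KL divergences and the KL divergence of the mixture. I expect the only genuine obstacle to be invoking convexity of the KL divergence in the correct argument (first argument, with $q(\z)$ fixed) and verifying that taking the outer expectation over $p(\x)$ does not flip the inequality; everything else reduces to linearity of expectation.
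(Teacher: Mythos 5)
Your proof is correct, and it is organized around the same splitting as the paper's: plug the mixture encoder into $\mathcal{L}$, observe that the reconstruction terms of $\mathcal{L}$ and $\mathcal{L}_{\mathcal{S}}$ coincide by linearity of expectation under the mixture, and locate the entire inequality in the KL term. Where you diverge is in how that KL step is justified. The paper never invokes convexity or Jensen; it instead expands $D_{\text{KL}}(p^{\mathcal{S}}_{\theta}(\z \given \x) \divfrom q(\z))$ as a mixture of expectations, rewrites the resulting terms as cross-entropies, and uses $CE(p,q) = H(p) + D_{\text{KL}}(p \divfrom q)$ so that the entropy terms cancel, arriving at the exact identity
\begin{equation}
\mathcal{L}(\x;\theta,\phi) \;=\; \mathcal{L}_{\mathcal{S}}(\x;\theta,\phi) \;+\; \sum_{A \in \mathcal{S}} \omega_A\, \E_{p(\x)}\!\left[ D_{\text{KL}}\!\left(p_{\theta}(\z \given \x_A) \divfrom p^{\mathcal{S}}_{\theta}(\z \given \x)\right) \right],
\end{equation}
from which the inequality follows by dropping the non-negative KL terms. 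The two arguments are mathematically equivalent: the compensation identity above is precisely the statement that the Jensen gap for a mixture measured against a fixed reference equals the $\omega$-weighted sum of KL divergences from the components to the mixture. The difference is what each makes visible. Your route is shorter and rests on one standard fact (convexity of $P \mapsto D_{\text{KL}}(P \divfrom Q)$), but it characterizes the gap only tautologically, as the difference of the two sides. The paper's computation yields the gap in closed form, which is the point it emphasizes---the derivation ``quantifies'' how much looser the implemented objective $\mathcal{L}_{\mathcal{S}}$ is than the tighter bound in \Cref{eq:mopoe_theoretical_objective}, namely by exactly the average divergence of each subset posterior from the mixture posterior. If you wanted your proof to support that downstream discussion, you would need to upgrade your Jensen step to the exact identity, which is a one-line computation away from what you wrote.
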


\begin{proof}
Recall the multimodal ELBO from \Cref{def:multimodal_elbo}:
\begin{align}
\mathcal{L}_{}(\x; \theta, \phi) = \E_{p(\x)p_{\theta}(\z \given
\x)} [ \log q_{\phi}(\x \given \z) ] - \E_{p(\x)}
[D_{\text{KL}}(p_{\theta}(\z \given \x) \divfrom q(\z))]\;.
\end{align}
For the encoder $p_{\theta}(\z \given \x)$, plug in the mixture-based encoder 
${p^{\mathcal{S}}_{\theta}(\z \given \x) = \sum_{A \in \mathcal{S}} \omega_A \, p_{\theta}(\z \given \x_A)}$
from \Cref{def:mixture} and re-write as follows:
\begin{align}
  \label{eq:mopoe_theoretical_objective}
  &\E_{p(\x)p^{\mathcal{S}}_\theta(\z \given \x)} [ \log q_{\phi}(\x \given \z) ] 
  - \E_{p(\x)} [D_{\text{KL}}(p^{\mathcal{S}}_\theta(\z \given \x) \divfrom q(\z))] \\[5pt]
  &= \vphantom{\bigg(} \E_{p(\x)\sum_{A \in \mathcal{S}} \omega_A \, p_{\theta}(\z \given \x_A)} [ \log q_{\phi}(\x \given \z) ]\, -  \\[-7pt]\notag
  &\hskip1.4em\relax \E_{p(\x)\sum_{A \in \mathcal{S}} \omega_A \, p_{\theta}(\z \given \x_A)} [ \log p^{\mathcal{S}}_\theta(\z \given \x) - \log q(\z)] \vphantom{\bigg)} \\[5pt]
  &= \sum_{A \in \mathcal{S}} \omega_A \big\{ \E_{p(\x)p_{\theta}(\z \given \x_A)} [ \log q_{\phi}(\x \given \z) ] 
  - \E_{p(\x)p_{\theta}(\z \given \x_A)} [\log p^{\mathcal{S}}_\theta(\z \given \x)] \,+ \\[-7pt]\notag
  &\hskip5.2em\relax \E_{p(\x)p_{\theta}(\z \given \x_A)} [\log q(\z)] \big\}\\[5pt]\label{eq:prop:elbo_s_derivation:tmp0}
  &= \sum_{A \in \mathcal{S}} \omega_A \big\{ \E_{p(\x)p_{\theta}(\z \given \x_A)} [ \log q_{\phi}(\x \given \z) ] 
  + \E_{p(\x)} [ CE(p_{\theta}(\z \given \x_A), p^{\mathcal{S}}_\theta(\z \given \x)) ] \,-  \\[-7pt]\notag
&\hskip5.2em\relax \E_{p(\x)} [ CE(p_{\theta}(\z \given \x_A), q(\z)) ] \big\} \\[5pt]\label{eq:prop:elbo_s_derivation:tmp1}
  &= \sum_{A \in \mathcal{S}} \omega_A \big\{ \E_{p(\x)p_{\theta}(\z \given \x_A)} [ \log q_{\phi}(\x \given \z) ]
  + \E_{p(\x)} [ D_{\text{KL}}(p_{\theta}(\z \given \x_A) \divfrom p^{\mathcal{S}}_\theta(\z \given \x)) ] \,- \\[-7pt]\notag
  &\hskip5.2em\relax \E_{p(\x)} [ D_{\text{KL}}(p_{\theta}(\z \given \x_A) \divfrom q(\z)) ]
    \big\} \\[5pt]\label{eq:prop:elbo_s_derivation:tmp2}
  &\geq \sum_{A \in \mathcal{S}} \omega_A \big\{ \E_{p(\x)p_{\theta}(\z \given \x_A)} [ \log q_{\phi}(\x \given \z) ] 
    - \E_{p(\x)} [ D_{\text{KL}}(p_{\theta}(\z \given \x_A) \divfrom q(\z)) 
      ]
    \big\} \\[5pt] \label{eq:prop:elbo_s_derivation:tmp3}
  &= \mathcal{L}_{\mathcal{S}}(\x; \theta, \phi)
\end{align}
In \Cref{eq:prop:elbo_s_derivation:tmp0}, $CE(p, q)$ denotes the cross-entropy
between distributions $p$ and $q$. For \Cref{eq:prop:elbo_s_derivation:tmp1},
decompose both cross-entropy terms using $CE(p, q) = H(p) + D_{\text{KL}}(p
\divfrom q)$ and notice that the respective entropy terms cancel out. The
inequality (\Cref{eq:prop:elbo_s_derivation:tmp2}) follows from the
non-negativity of the KL-divergence. This concludes the proof that
$\mathcal{L}_{\mathcal{S}}(\x; \theta, \phi)$ forms a lower bound on
$\mathcal{L}(\x; \theta, \phi)$.

\end{proof}

\paragraph{Objectives of individual models} 
\citet{Sutter2021} already showed that \Cref{eq:mopoe_theoretical_objective}
subsumes the objectives of the MMVAE, MoPoE-VAE, and MVAE without ELBO
sub-sampling. However, in their actual implementation, all of these methods
take the sum out of the KL-divergence term \citep[e.g., see][Equation
3]{Shi2019}, which corresponds to the objective $\mathcal{L}_{\mathcal{S}}$. To
see how $\mathcal{L}_{\mathcal{S}}$ recovers the objectives of the individual
models, simply plug in the model-specific definition of $\mathcal{S}$ into
\Cref{eq:prop:elbo_s_derivation:tmp2} and use uniform mixture coefficients
$\,\omega_A = 1 / \vert \mathcal{S} \vert$ for all subsets. For the MVAE
without ELBO sub-sampling, $\,\mathcal{S}$ is comprised of only one subset, the
complete set of modalities $\{\x_1, \ldots, \x_M\}$.  For the MMVAE,
$\,\mathcal{S}$ is comprised of the set of unimodal subsets $\{\{\x_1\},
\ldots, \{\x_M\}\}$.  For the MoPoE-VAE, $\,\mathcal{S}$ is comprised of the
powerset $\mathcal{P}(M) \setminus \{\emptyset\}$.  Further implementation
details, such as importance sampling and ELBO sub-sampling, are discussed in
\Cref{app:additional_experimental_results}.

\newpage
\subsection{Objective \texorpdfstring{$\mathcal{L}_{\mathcal{S}}$}\ \ is a special case of the VIB}
\label{app:ib_objective}

\begin{lemma}
\label{lemma:vib}
$\mathcal{L}_{\mathcal{S}}(\x; \theta, \phi)$
is a special case of the variational information bottleneck (VIB) objective
\begin{equation}
  \label{eq:vib_mixture}
\min_{\psi} \sum_{A \in \mathcal{S}} \omega_A \left\{ H_{\psi}(\X \given Z_A) + I_{\psi}(\X_A; Z_A) \right\}\,,
\end{equation}
where the encoding $Z_A = f_{\psi}(\X_A)$ is a function of a subset $\X_A$, the
terms $H_\psi(\X \given Z_A)$ and $I_\psi(\X_A; Z_A)$ denote variational upper
bounds of $H(\X \given Z_A)$ and $I(\X_A; Z_A)$ respectively, and $\psi$
summarizes the parameters of these variational estimators. 
\end{lemma}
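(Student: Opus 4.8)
The plan is to show that each summand of $\mathcal{L}_{\mathcal{S}}$ matches, term by term, the negative of a VIB objective evaluated on the encoding of the corresponding subset. First I would recall the two variational bounds already established in the proof of \Cref{prop:joint_entropy}, but now applied to the encoder $p_{\theta}(\z \given \x_A)$ of a single subset $A \in \mathcal{S}$. Writing $Z_A$ for the latent variable sampled from $p_{\theta}(\z \given \x_A)$, so that $Z_A = f_{\psi}(\X_A)$ is (stochastically) a function of $\X_A$, I identify the two variational estimators for that subset; the parameter vector $\psi$ collects the encoder parameters $\theta$, the decoder parameters $\phi$, and the variational marginal $q(\z)$.

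For the likelihood term I would invoke the cross-entropy bound from \Cref{prop:joint_entropy}: since $-H(\X \given Z_A) = \E_{p(\x)p_{\theta}(\z \given \x_A)}[\log p(\x \given \z)] \geq \E_{p(\x)p_{\theta}(\z \given \x_A)}[\log q_{\phi}(\x \given \z)]$, the quantity $H_{\psi}(\X \given Z_A) \coloneqq -\E_{p(\x)p_{\theta}(\z \given \x_A)}[\log q_{\phi}(\x \given \z)]$ is a valid variational \emph{upper} bound on $H(\X \given Z_A)$. Note that this conditional entropy is taken over the \emph{full} set of modalities $\X$, even though the encoding $Z_A$ depends only on the subset $\X_A$; this asymmetry is exactly what later produces the irreducible discrepancy in \Cref{thm:irreducible_error}. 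For the regularization term I would use the mutual-information bound: writing the aggregated posterior of the subset as $p(\z) = \E_{p(\x_A)}[p_{\theta}(\z \given \x_A)]$, one has $I(\X_A; Z_A) = \E_{p(\x_A)}[D_{\text{KL}}(p_{\theta}(\z \given \x_A) \divfrom p(\z))] \leq \E_{p(\x)}[D_{\text{KL}}(p_{\theta}(\z \given \x_A) \divfrom q(\z))] \eqqcolon I_{\psi}(\X_A; Z_A)$, where the inequality follows by replacing $p(\z)$ with the variational marginal $q(\z)$ and discarding the non-negative gap $D_{\text{KL}}(p(\z) \divfrom q(\z))$.

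Combining these two identifications, each summand of the VIB objective becomes $H_{\psi}(\X \given Z_A) + I_{\psi}(\X_A; Z_A) = -\E_{p(\x)p_{\theta}(\z \given \x_A)}[\log q_{\phi}(\x \given \z)] + \E_{p(\x)}[D_{\text{KL}}(p_{\theta}(\z \given \x_A) \divfrom q(\z))]$, which is precisely the negative of the bracketed term in \Cref{eq:objective_s}. Taking the $\omega_A$-weighted sum over $A \in \mathcal{S}$ then yields $\sum_{A \in \mathcal{S}} \omega_A \{H_{\psi}(\X \given Z_A) + I_{\psi}(\X_A; Z_A)\} = -\mathcal{L}_{\mathcal{S}}(\x; \theta, \phi)$, so minimizing the VIB objective over $\psi$ coincides with maximizing $\mathcal{L}_{\mathcal{S}}$ over $(\theta, \phi)$ together with the choice of variational marginal, establishing that $\mathcal{L}_{\mathcal{S}}$ is the claimed special case. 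The main obstacle is bookkeeping rather than a deep idea: one must keep careful track of which random vector occupies each slot---the full $\X$ in the conditional-entropy (decoder) term versus the subset $\X_A$ in the mutual-information (encoder) term---and confirm that the expectation over the remaining modalities $\X_{\{1,\ldots,M\}\setminus A}$ is harmless for the rate term (which does not depend on them) while being essential for the distortion term.
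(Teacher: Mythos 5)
Your proposal is correct and follows essentially the same route as the paper's proof: identify $Z_A = f_{\psi}(\X_A)$ as the encoding of a subset, show the likelihood term of each summand is (the negative of) a variational upper bound $H_{\psi}(\X \given Z_A) \geq H(\X \given Z_A)$ via the ground-truth decoder, and show the KL term upper-bounds $I(\X_A; Z_A)$ by absorbing the gap $D_{\text{KL}}(p(\z) \divfrom q(\z))$, so that minimizing the VIB objective over $\psi$ coincides with maximizing $\mathcal{L}_{\mathcal{S}}$. Your explicit identification of $p(\z)$ as the subset-wise aggregated posterior, and the remark that the expectation over $\X_{\{1,\ldots,M\}\setminus A}$ is vacuous for the rate term, are harmless clarifications of details the paper leaves implicit.
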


\begin{proof}
We start from $\mathcal{L}_{\mathcal{S}}$, the objective optimized by all
mixture-based multimodal VAEs. Recall from \Cref{def:mixture_based}:
\begin{align}
\mathcal{L}_{\mathcal{S}}(\x; \theta, \phi) 
  &= \sum_{A \in \mathcal{S}} \omega_A \Big\{ 
  \underbrace{\E_{p(\x)p_{\theta}(\z \given \x_A)} [ \log q_{\phi}(\x \given \z) ]}_{(i)}
  - \underbrace{\E_{p(\x)} \left[ D_{\text{KL}}\left(p_{\theta}(\z \given \x_A) \divfrom q(\z)\right) \right]}_{(ii)}
  \Big\} \;.
\end{align}
Each term within the sum is comprised of two terms: $(i)$ the log-likelihood
estimation based on a variational decoder $q_{\phi}(\x \given \z)$; $(ii)$ the
regularization of the stochastic encoder $p_{\theta}(\z \given \x_A)$ with
respect to a variational prior $q(\z)$. The sampled encoding $\z \sim
p_{\theta}(\z \given \x_A)$ can be viewed as the output of a function $Z_A =
f_{\theta}(\X_A)$ of a subset of modalities.  

To see the relation to the underlying information terms $H(\X \given Z_A)$ and
$I(\X_A; Z_A)$, we undo the variational approximation for $(i)$ and
$(ii)$ by re-introducing the unobserved ground truth decoder $p(\x \given \z)$
and the ground truth prior $p(\z)$.%

For $(i)$, we have
\begin{align}
  \E_{p(\x)p_{\theta}(\z \given \x_A)} \left[ \log q_{\phi}(\x \given \z) \right]
  &\leq \E_{p(\x)p_{\theta}(\z \given \x_A)} \left[ \log q_{\phi}(\x \given \z) \right]\, + \\\notag
  &\hskip1.19em\relax \E_{p(\z)} \left[ D_{\text{KL}}(p_{}(\x \given \z) \divfrom q_{\phi}(\x \given \z)) \right] \\
  &= \E_{p(\x)p_{\theta}(\z \given \x_A)} \left[ \log p(\x \given \z) \right] \\
  &= -H(\X \given Z_A) 
\end{align}
For $(ii)$, we have
\begin{align}
  \E_{p(\x)} \left[ D_{\text{KL}}(p_{\theta}(\z \given \x_A) \divfrom q(\z)) \right]
  &\geq \E_{p(\x)} \left[ D_{\text{KL}}(p_{\theta}(\z \given \x_A) \divfrom q(\z)) \right]
  - D_{\text{KL}}(p(\z) \divfrom q(\z)) \\
  &= \E_{p(\x)} \left[ D_{\text{KL}}(p_{\theta}(\z \given \x_A) \divfrom p(\z)) \right] \\
  &= I(\X_A; Z_A) 
\end{align}

Since $\mathcal{L}_{\mathcal{S}}(\x; \theta, \phi)$ is being
maximized, $(i)$ is being maximized, while $(ii)$ is being minimized. The
maximization of $(i)$ is equal to the minimization of a variational upper bound
on $H(\X \given Z_A)$.  Similarly, the minimization of $(ii)$ is equal to the
minimization of a variational upper bound on $I(\X_A; Z_A)$. Hence, we have
established that $\mathcal{L}_{\mathcal{S}}(\x; \theta, \phi)$ is
a special case of the more general VIB objective (\Cref{eq:vib_mixture}) where
the information terms are estimated with a mixture-based multimodal
VAE that is parameterized by $\psi = \{\theta, \phi\}$.

\end{proof}

\subsection{Decomposition of the conditional entropy for subsets of modalities}%
\label{app:explanation_decomposition}

\begin{lemma}%
\label{lemma:decomposition}
Let $\X_A \subseteq \X$ be some subset of modalites. If $Z_A = f(\X_A)$, where
$f$ is some function of the subset $\X_A$, then the following equality holds:
\begin{equation}
  H(\X \given Z_A)
  = H(\X_{\{1, \ldots, M\} \setminus A} \given \X_A) + H(\X_A \given Z_A)\;.
\end{equation}
\end{lemma}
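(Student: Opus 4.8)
The plan is to exploit that the full collection $\X$ is the disjoint union of the observed subset $\X_A$ and its complement $\X_{\{1,\ldots,M\}\setminus A}$, so that $H(\X \given Z_A) = H(\X_A, \X_{\{1,\ldots,M\}\setminus A} \given Z_A)$, and then to apply the chain rule for conditional entropy. Concretely, I would write
\begin{equation}
  H(\X \given Z_A) = H(\X_A \given Z_A) + H(\X_{\{1,\ldots,M\}\setminus A} \given \X_A, Z_A)\;,
\end{equation}
which is just the standard additivity $H(U, V \given W) = H(U \given W) + H(V \given U, W)$ instantiated with $U = \X_A$, $V = \X_{\{1,\ldots,M\}\setminus A}$, and $W = Z_A$. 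Comparing this against the claimed identity, the entire lemma reduces to establishing the single equality
\begin{equation}
  H(\X_{\{1,\ldots,M\}\setminus A} \given \X_A, Z_A) = H(\X_{\{1,\ldots,M\}\setminus A} \given \X_A)\;.
\end{equation}

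To prove this remaining equality, I would use the hypothesis $Z_A = f(\X_A)$: since the encoding depends on the inputs only through the subset $\X_A$, conditioning on $\X_A$ already fixes the conditional law of $Z_A$ irrespective of the remaining modalities. Formally, this is the Markov chain $\X_{\{1,\ldots,M\}\setminus A} - \X_A - Z_A$, i.e.\ the encoder satisfies $p(\z \given \x_A, \x_{\{1,\ldots,M\}\setminus A}) = p(\z \given \x_A)$, so $\X_{\{1,\ldots,M\}\setminus A}$ and $Z_A$ are conditionally independent given $\X_A$. Conditional independence given $\X_A$ means precisely that enlarging the conditioning set by $Z_A$ does not alter the conditional distribution of the complement, that is $p(\x_{\{1,\ldots,M\}\setminus A} \given \x_A, \z) = p(\x_{\{1,\ldots,M\}\setminus A} \given \x_A)$, and substituting this into the definition of conditional entropy yields the displayed equality. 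Plugging it back into the chain-rule expansion then gives the lemma.

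The only delicate point, and the step I expect to require the most care, is the justification of the Markov property in the stochastic-encoder setting. When $f$ is deterministic the claim is immediate, since $Z_A$ is then a measurable function of $\X_A$ with $H(Z_A \given \X_A) = 0$. For the stochastic encoder $p_{\theta}(\z \given \x_A)$ used by mixture-based VAEs, I would argue that the sampled encoding is generated via the reparameterization $Z_A = f_{\theta}(\X_A, \epsilon)$ with noise $\epsilon$ independent of all modalities, so that its conditional law depends on the inputs solely through $\X_A$; this is the same viewpoint that underlies the identifications in \Cref{lemma:vib} and is consistent with treating $Z_A$ as a function of $\X_A$ as assumed in the hypothesis. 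Once the conditional-independence step is secured, the remainder is a one-line application of the chain rule, so I anticipate no further obstacle.
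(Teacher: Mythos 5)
Your proof is correct and takes essentially the same route as the paper's: both split $H(\X \given Z_A)$ into a term about $\X_A$ and a term about the complement, and both hinge on the Markov chain $\X_{\{1,\ldots,M\}\setminus A} \relbar \X_A \relbar Z_A$ (conditional independence given $\X_A$, which holds since $Z_A$ depends on the data only through $\X_A$) to drop $Z_A$ from the conditioning. The only difference is cosmetic: the paper routes through $I(\X; \X_A \given Z_A)$ and then identifies this with $H(\X_A \given Z_A)$ using $\X_A \subseteq \X$, whereas you obtain that term directly from the chain rule $H(U,V \given W) = H(U \given W) + H(V \given U, W)$.
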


\begin{proof}
When $Z_A$ is a function of a subset $\X_A \subseteq \X$, we have the Markov chain
${Z_A \leftarrow \X_A \relbar\mkern-9mu\relbar \X_{\{1, \ldots, M\} \setminus A}}$,
since $Z_A$ is a function of the (observed) subset of modalities and depends on
the remaining (unobserved) modalities only through $\X_A$.

We can re-write $H(\X \given Z_A)$ as follows:
\begin{align}
  H(\X \given Z_A) &= H(\X \given Z_A, \X_A) + I(\X; \X_A \given Z_A) \label{eq:tmp1} \\
  &= H(\X \given \X_A) + I(\X; \X_A \given Z_A) \label{eq:tmp2} \\
  &= H(\X_{\{1, \ldots, M\} \setminus A} \given \X_A) + I(\X; \X_A \given Z_A) \label{eq:tmp3}\\
  &= H(\X_{\{1, \ldots, M\} \setminus A} \given \X_A) + H(\X_A \given Z_A) \label{eq:tmp4}
\end{align}
\Cref{eq:tmp1} applies the definition of the conditional mutual
information. \Cref{eq:tmp2} is based on the conditional independence ${\X \,
\indep\, Z_A \given \X_A}$ implied by the Markov chain. \Cref{eq:tmp3} removes
the ``known'' information that we condition on. Finally, \Cref{eq:tmp4} follows
from $\X_A \subseteq \X$, which implies that $I(\X; \X_A) = H(\X_A)$ and $I(\X;
\X_A \given Z_A) = H(\X_A \given Z_A)$.

\end{proof}

\subsection{Proof of Theorem~\ref{thm:irreducible_error}}
\label{app:proof_of_thm}

\theoremirreducible*

\begin{proof}

\Cref{lemma:vib} shows that all mixture-based multimodal VAEs
approximate the expected log-evidence via the more general VIB objective
\begin{equation}
  \label{eq:ib_mixture}
  \min_{\psi} \sum_{A \in \mathcal{S}} \omega_A \left\{ H_{\psi}(\X \given Z_A) + I_{\psi}(\X_A; Z_A) \right\}
\end{equation}
where the encoding $Z_A = f_{\psi}(\X_A)$ is a function of a subset $\X_A \subseteq \X$. 

The fact that $Z_A$ is a function of a \textit{subset}, permits the following
decomposition of the conditional entropy (see \Cref{lemma:decomposition}):
\begin{align}
  \label{eq:explanation_decomposition}
  H(\X \given Z_A)
  &= H(\X_{\{1, \ldots, M\} \setminus A} \given \X_A) + H(\X_A \given Z_A)\;.
\end{align}
In particular, \Cref{eq:explanation_decomposition} holds for every $Z_A =
f_{\psi}(\X_A)$ and thus for every value $\psi$. Further, notice that
$H(\X_{\{1, \ldots, M\} \setminus A} \given \X_A)$ is independent of the
learned encoding $Z_A$ and thus remains constant during the optimization with
respect to $\psi$.

Hence, for every value $\psi$, the following inequality holds:
\begin{align}
  H_{\psi}(\X \given Z_A) 
  &\geq H(\X \given Z_A) \\
  &\geq H(\X_{\{1, \ldots, M\} \setminus A} \given \X_A)
\end{align}
which means that the minimization of $H_{\psi}(\X \given Z_A)$
is lower-bound by $H(\X_{\{1, \ldots, M\} \setminus A} \given \X_A)$, even if
$H_{\psi}(\X \given Z_A)$ is a tight estimator of $H(\X \given Z_A)$.

Analogously, for the optimization of the VIB objective (\Cref{eq:ib_mixture}),
for every value $\psi$, the following inequality holds:
\begin{align}
  &\sum_{A \in \mathcal{S}} \omega_A \left\{ H_{\psi}(\X \given Z_A) + I_{\psi}(\X_A; Z_A) \right\} \\
  &\geq \sum_{A \in \mathcal{S}} \omega_A \left\{ H(\X \given Z_A) + I_{\psi}(\X_A; Z_A) \right\} \\
  &= \sum_{A \in \mathcal{S}} \omega_A \left\{ H(\X_A \given Z_A) + I_{\psi}(\X_A; Z_A) \right\}
  + \underbrace{\sum_{A \in \mathcal{S}} \omega_A \, H(\X_{\{1, \ldots, M\} \setminus A} \given \X_A) }_{\hspace{0.78cm}\Delta(\X, \mathcal{S})}
 \label{eq:ib_mixture_decomposed}
\end{align}
where $\Delta(\X, \mathcal{S})$ is independent of $\psi$ and thus remains
constant during the optimization. Consequently, $\Delta(\X, \mathcal{S})$
represents an irreducible error for the optimization of the VIB objective.

For mixture-based multimodal VAEs, \Cref{lemma:vib} shows that
$\mathcal{L}_{\mathcal{S}}(\x; \theta, \phi)$ is a special case of the VIB
objective with $\psi = (\theta, \phi)$. Hence, for every value of $\theta$ and
$\phi$, the following inequality holds:
\begin{align}
  \E_{p(\x)}[\log p(\x)] \geq \mathcal{L}_{\mathcal{S}}(\x; \theta, \phi) + \Delta(\X, \mathcal{S}) \;.
\end{align}
The exact value of $\Delta(\X, \mathcal{S})$ depends on the definition of the
mixture distribution $\mathcal{S}$, as well as on the amount of
modality-specific variation in the data. In particular, $\Delta(\X,
\mathcal{S}) > 0$, if there is any subset $A \in \mathcal{S}$ with $\omega_A >
0$ for which $H(\X_{\{1, \ldots, M\} \setminus A} \given \X_A) > 0$.

\end{proof}

\subsection{Proof of Corollary~\ref{cor:no_modality_subsampling}}
\label{app:cor:no_modality_subsampling}

\corollarynomodalitysubsampling*
\begin{proof}
  Without modality sub-sampling, $\mathcal{S}$ is comprised of only one subset,
  the complete set of modalities $\{1, \ldots, M\}$, and therefore $\X_A = \X$
  and ${\X_{\{1, \ldots, M\}\setminus A} = \emptyset}$.  It follows that
  $\Delta(\X, \mathcal{S}) = H(\X_{\{1, \ldots, M\} \setminus A} \given \X_A) =
  H(\emptyset \given \X) = 0$, since the conditional entropy of the empty set
  is zero.

\end{proof}

\subsection{Proof of Corollary~\ref{cor:more_modalities}}
\label{app:cor:more_modalities}

\begin{customcorollary}{2}
  For the MMVAE and MoPoE-VAE, the generative discrepancy increases given an
  additional modality $X_{M+1}$, if the new modality is sufficiently diverse in
  the following sense:
  \begin{align}
  \left(\frac{1}{\vert \mathcal{S}^+ \vert} - \frac{1}{\vert \mathcal{S} \vert}\right) \sum_{A \in \mathcal{S}} I(\X_{\{1, \ldots, M\} \setminus A}; X_{M+1} \given \X_A)\, <\, \; 
  &\frac{1}{\vert \mathcal{S}^+ \vert \vert \mathcal{S} \vert} \sum_{A \in \mathcal{S}} H(\X_A \given X_{M+1}) \;+ \\
  &\,\frac{1}{\vert \mathcal{S}^+ \vert} \sum_{A \in \mathcal{S}} H(X_{M+1} \given \X)
  \end{align}
  where $\mathcal{S}$ denotes the model-specific mixture distribution over the set
  of subsets of modalities given modalities $X_1,\ldots,X_M$ and
  $\mathcal{S}^+$ is the respective mixture distribution over the extended set
  of subsets of modalities given $X_1, \ldots, X_{M+1}$.
\end{customcorollary}

\begin{proof}
Let $X_{M+1}$ be the new modality, let ${\X^{+} \coloneqq \{X_1, \ldots,
X_{M+1}\}}$ denote the extended set of modalities, and let $\mathcal{S}^+$
denote the new mixture distribution over subsets given $\X^+$. Note that all
subsets from $\mathcal{S}$ are still contained in $\mathcal{S}^+$, but that 
$\mathcal{S}^+$ contains new subsets in addition to those in $\mathcal{S}$.
Further, due to the re-weighting of mixture coefficients, $\mathcal{S}^+$ can
have different mixture coefficients for the subsets it shares with
$\mathcal{S}$. We denote by ${S^- \coloneqq \{ (A, \omega_A^+) \in
\mathcal{S}^+ : A \not \in \mathcal{S} \}}$ the set of new subsets and let
$\omega^+_A$ denote the new mixture coefficients, where typically $\omega_A \not
= \omega_A^+$ due to the re-weighting.

We are interested in the change of the generative discrepancy, when we add modality $X_{M+1}$:
\begin{align}
  &\Delta(\X^+, \mathcal{S}^+) - \Delta(\X, \mathcal{S}) \\
  &= \sum_{B \in \mathcal{S}^+} \omega_B^+\,H(\X_{\{1, \ldots, M+1\} \setminus B} \given \X_B)
    - \sum_{A \in \mathcal{S}} \omega_A\,H(\X_{\{1, \ldots, M\} \setminus A} \given \X_A)\;.
\label{eq:more_modalities:tmp0}
\end{align}
Re-write the right hand side in terms of subsets that are contained in
both $\mathcal{S}$ and $\mathcal{S}^+$ and subsets that are only contained
in $\mathcal{S}^+$. For this, we decompose the first term as follows
\begin{align}
  &\sum_{B \in \mathcal{S}^+} \omega_B^+\,H(\X_{\{1, \ldots, M+1\} \setminus B} \given \X_B) \\
  &= \sum_{A \in \mathcal{S}} \omega_A^+\,H(\X_{\{1, \ldots, M+1\} \setminus A} \given \X_A)
  + \sum_{B \in \mathcal{S}^-} \omega_B^+\,H(\X_{\{1, \ldots, M+1\} \setminus B} \given \X_B) \\
  &= \sum_{A \in \mathcal{S}} \omega_A^+\,H(\X_{\{1, \ldots, M\} \setminus A} \given \X_A)
  + \sum_{A \in \mathcal{S}} \omega_A^+\,H(X_{M+1} \given \X) \; + \\
  &\hskip0.98em\relax \sum_{B \in \mathcal{S}^-} \omega_B^+\,H(\X_{\{1, \ldots, M+1\} \setminus B} \given \X_B)
  \label{eq:more_modalities:tmp1}
\end{align}
where the last equation follows from 
\begin{align}
  H(\X_{\{1, \ldots, M+1\} \setminus A} \given \X_A) 
  &= H(\X_{\{1, \ldots, M\} \setminus A} \given \X_A) + H(X_{M+1} \given \X_A, \X_{\{1, \ldots, M\}\setminus A}) \\
  &= H(\X_{\{1, \ldots, M\} \setminus A} \given \X_A) + H(X_{M+1} \given \X)\;.
\end{align}
We can use the decomposition from
\Cref{eq:more_modalities:tmp1} to re-write the right hand side of
\Cref{eq:more_modalities:tmp0} by collecting the corresponding terms for 
$H(\X_{\{1, \ldots, M\} \setminus A} \given \X_A)$:
\begin{equation}
  \begin{split}
  &\sum_{A \in \mathcal{S}} (\omega_A^+ - \omega_A)H(\X_{\{1, \ldots, M\} \setminus A} \given \X_A)
    + \sum_{A \in \mathcal{S}} \omega_A^+\,H(X_{M+1} \given \X) \; + \\
  &\sum_{B \in \mathcal{S}^-} \omega_B^+\,H(\X_{\{1, \ldots, M+1\} \setminus B} \given \X_B)\;.\label{eq:more_modalities:tmp2}
  \end{split}
\end{equation}
Notice that in \Cref{eq:more_modalities:tmp2} only the first term can be
negative, due to the re-weighting of mixture coefficients for terms that do not
contain $X_{M+1}$. Hence, in the general case, the generative discrepancy can only
decrease, if the mixture coefficients change in such a way that the first term
in \Cref{eq:more_modalities:tmp2} dominates the other two terms. 

For the relevant special case of uniform mixture weights, which applies to both
the MMVAE and MoPoE-VAE, we can further decompose
\Cref{eq:more_modalities:tmp2} into $(i)$ information shared between $\X$ and
$X_{M+1}$, and $(ii)$ information that is specific to $\X$ or $X_{M+1}$. 

Using uniform mixture coefficients $\omega_A = \frac{1}{\vert \mathcal{S}
\vert}$ and $\omega_A^+ = \frac{1}{\vert \mathcal{S}^+ \vert}$ for all subsets,
we can factor out the coefficients and re-write
\Cref{eq:more_modalities:tmp2} as follows:
\begin{equation}
\label{eq:more_modalities:tmp3}
  \begin{split}
  \left(\frac{1}{\vert \mathcal{S}^+ \vert} - \frac{1}{\vert \mathcal{S} \vert}\right) \sum_{A \in \mathcal{S}} H(\X_{\{1, \ldots, M\} \setminus A} \given \X_A)
  + \frac{1}{\vert \mathcal{S}^+ \vert} \sum_{A \in \mathcal{S}} H(X_{M+1} \given \X) \; +\\
  \frac{1}{\vert \mathcal{S}^+ \vert} \sum_{B \in \mathcal{S}^-} H(\X_{\{1, \ldots, M+1\} \setminus B} \given \X_B)
  \end{split}
\end{equation}
where the second term already denotes information that is specific to
$X_{M+1}$. Hence, we decompose the first and last terms corresponding to
$(i)$ and $(ii)$.

For the first term from \Cref{eq:more_modalities:tmp3}, we have 
\begin{align}
  &\left(\frac{1}{\vert \mathcal{S}^+ \vert} - \frac{1}{\vert \mathcal{S} \vert}\right) \sum_{A \in \mathcal{S}} H(\X_{\{1, \ldots, M\} \setminus A} \given \X_A) \\
  &= \left(\frac{1}{\vert \mathcal{S}^+ \vert} - \frac{1}{\vert \mathcal{S} \vert}\right) \sum_{A \in \mathcal{S}} \Big\{ H(\X_{\{1, \ldots, M\} \setminus A} \given \X_A, X_{M+1}) + I(\X_{\{1, \ldots, M\} \setminus A}; X_{M+1} \given \X_A) \Big\}.
\label{eq:more_modalities:tmp4}
\end{align}

For the last term from \Cref{eq:more_modalities:tmp3}, we have 
\begin{align}
&\frac{1}{\vert \mathcal{S}^+ \vert} \sum_{B \in \mathcal{S}^-} H(\X_{\{1, \ldots, M+1\} \setminus B} \given \X_B) \\
&=\frac{1}{\vert \mathcal{S}^+ \vert} \Big\{ H(\X \given X_{M+1}) + \sum_{A \in \mathcal{S}} \mathbf{1}_{\left\{(A \cup \left\{ M+1 \right\}) \in \mathcal{S}^-\right\}} H(\X_{\{1, \ldots, M\} \setminus A} \given \X_A, X_{M+1}) \Big\}
\label{eq:more_modalities:tmp5}
\end{align}
where we can further decompose 
\begin{align}
\frac{1}{\vert \mathcal{S}^+ \vert} H(\X \given X_{M+1}) 
&= \frac{1}{\vert \mathcal{S}^+ \vert} \Big\{ H(\X \given \X_A, X_{M+1}) + I(\X; \X_A \given X_{M+1}) \Big\} \\
&= \frac{1}{\vert \mathcal{S}^+ \vert} \Big\{ H(\X \given \X_A, X_{M+1}) + H(\X_A \given X_{M+1}) \Big\} \\
&= \frac{1}{\vert \mathcal{S}^+ \vert \vert \mathcal{S} \vert} \sum_{A \in \mathcal{S}} \Big\{ H(\X_{\{1, \ldots, M\} \setminus A} \given \X_A, X_{M+1}) + H(\X_A \given X_{M+1}) \Big\}.
\label{eq:more_modalities:tmp6}
\end{align}

Collecting all corresponding terms from
\Cref{eq:more_modalities:tmp4,eq:more_modalities:tmp5,eq:more_modalities:tmp6},
we can re-write \Cref{eq:more_modalities:tmp3} as follows:
\begin{align}
  \left(\frac{1}{\vert \mathcal{S}^+ \vert} - \frac{1}{\vert \mathcal{S} \vert} + \frac{1}{\vert \mathcal{S}^+ \vert \vert \mathcal{S} \vert} \right) \sum_{A \in \mathcal{S}} H(\X_{\{1, \ldots, M\} \setminus A} \given \X_A, X_{M+1}) \;+ \\
  \left(\frac{1}{\vert \mathcal{S}^+ \vert} - \frac{1}{\vert \mathcal{S} \vert} \right) \sum_{A \in \mathcal{S}} I(\X_{\{1, \ldots, M\} \setminus A}; X_{M+1} \given \X_A) \;+ \\
  \frac{1}{\vert \mathcal{S}^+ \vert} \sum_{A \in \mathcal{S}} \mathbf{1}_{\left\{(A \cup \left\{ M+1 \right\}) \in \mathcal{S}^-\right\}} H(\X_{\{1, \ldots, M\} \setminus A} \given \X_A, X_{M+1}) \;+ \\
  \frac{1}{\vert \mathcal{S}^+ \vert \vert \mathcal{S} \vert} \sum_{A \in \mathcal{S}} H(\X_A \given X_{M+1}) \;+ \\
  \frac{1}{\vert \mathcal{S}^+ \vert} \sum_{A \in \mathcal{S}} H(X_{M+1} \given \X).
\end{align}

For both the MMVAE and MoPoE, the first and last terms cancel out, which can
see by plugging in the respective definitions of $\mathcal{S}$ into the above
equation. Recall that for the MMVAE, $\mathcal{S}$ is comprised of the set of
unimodal subsets $\{\{\x_1\}, \ldots, \{\x_M\}\}$ and thus $\mathcal{S}^+$ is
comprised of $\{\{\x_1\}, \ldots, \{\x_{M+1}\}\}$. For the MoPoE-VAE,
$\,\mathcal{S}$ is comprised of the powerset $\mathcal{P}(M) \setminus
\{\emptyset\}$ and thus $\mathcal{S}^+$ is comprised of the powerset
$\mathcal{P}(M+1) \setminus \{\emptyset\}$. Hence, for the MMVAE and MoPoE-VAE,
we have shown that $\Delta(\X^+, \mathcal{S}^+) - \Delta(\X, \mathcal{S})$ is
equal to the following expression:
\begin{align}
  \left(\frac{1}{\vert \mathcal{S}^+ \vert} - \frac{1}{\vert \mathcal{S} \vert}\right) \sum_{A \in \mathcal{S}} I(\X_{\{1, \ldots, M\} \setminus A}; X_{M+1} \given \X_A) \; +  \label{eq:more_modalities:tmp8} \\
\frac{1}{\vert \mathcal{S}^+ \vert \vert \mathcal{S} \vert} \sum_{A \in \mathcal{S}} H(\X_A \given X_{M+1})
  + \frac{1}{\vert \mathcal{S}^+ \vert} \sum_{A \in \mathcal{S}} H(X_{M+1} \given \X) \label{eq:more_modalities:tmp7}
\end{align}
where the information is decomposed into:
\begin{enumerate}[$(i)$]
  \item information shared between $\X$ and $X_{M+1}$ (term~(\ref{eq:more_modalities:tmp8})), and 
  \item information that is specific to $\X$ or $X_{M+1}$ (the first and second terms in~(\ref{eq:more_modalities:tmp7}) respectively),
\end{enumerate}
and where only $(i)$ can be negative since $\vert \mathcal{S}^+ \vert > \vert
\mathcal{S} \vert$.  This concludes the proof of \Cref{cor:more_modalities},
showing that $\Delta(\X^+, \mathcal{S}^+) - \Delta(\X, \mathcal{S}) > 0$, if
$X_{M+1}$ is sufficiently diverse in the sense that ${(ii) > (i)}$.

\end{proof}

\newpage
\section{Experiments}
\label{app:experiments}

\subsection{Description of the datasets}
\label{app:description_of_data}

\paragraph{PolyMNIST}%
The PolyMNIST dataset, introduced in \citet{Sutter2021}, combines the MNIST dataset
\citep{LeCun1998} with crops from five different background images to create
five synthetic image modalities. Each sample from the data is a set of five
MNIST images (with digits of the same class) overlayed on $28 \times 28$
crops from five different background images.
\Cref{fig:vanilla_polymnist_example} shows 10 samples from the PolyMNIST
dataset; each column represents one sample and each row represents one
modality. The dataset provides a convenient testbed for the evaluation of
generative coherence, because by design only the digit information is shared
between modalities.

\paragraph{Translated-PolyMNIST}%
This new dataset is conceptually similar to PolyMNIST in that a digit label is
shared between five synthetic image modalities. The difference is that in the
creation of the dataset, we change the size and position of the digit, as shown
in \Cref{fig:translated_polymnist_example}. Technically, instead of overlaying
a full-sized $28 \times 28$ MNIST digit on a patch from the respective
background image, we downsample the MNIST digit by a factor of two and place it
at a random $(x, y)$-coordinate within the $28 \times 28$ background patch.
Conceptually, these transformations leave the shared information  between
modalities (i.e., the digit label) unaffected and only serve to make it more
difficult to predict the shared information across modalities on expectation.

\paragraph{Caltech Birds (CUB)}%
The extended CUB dataset from \citet{Shi2019} is comprised of two modalities,
images and captions. Each image from Caltech-Birds
\citep[CUB-200-2011][]{Wah2011} is coupled with 10 crowdsourced descriptions of
the respective bird. \Cref{fig:cub_example} shows five samples from the
dataset. It is important to note that we use the CUB dataset with
\textit{real images}, instead of the simplified version based on precomputed
ResNet-features that was used in \citet{Shi2019,Shi2021}.

\subsection{Implementation details}
\label{app:implementation_details}

Our experiments are based on the publicly available code from
\citet{Sutter2021}, which already provides an implementation of PolyMNIST\@. A
notable difference in our implementation is that we employ ResNet
architectures, because we found that the previously used convolutional neural
networks did not have sufficient capacity for the more complex datasets we use.
For internal consistency, we use ResNets for PolyMNIST as well. We have
verified that there is no significant difference compared to the results from
\citet{Sutter2021} when we change to ResNets.

\paragraph{Hyperparameters}
All models were trained using the Adam optimizer \citep{Kingma2015} with
learning rate 5e-4 and a batch size of 256.  For image modalities we estimate
likelihoods using Laplace distributions and for captions we employ one-hot
categorical distributions.  Models were trained for 500, 1000, and 150 epochs
on PolyMNIST, Translated-PolyMNIST, and CUB respectively.  Similar to previous
work, we use Gaussian priors and a latent space with 512 dimensions for
PolyMNIST and 64 dimensions for CUB\@. For a fair comparison, we reduce the
latent dimensionality of unimodal VAEs proportionally (wrt.\ the number of
modalities) to control for capacity. For the $\beta$-ablations, we use $\beta
\in \{$3e-4, 3e-3, 3e-1, 1, 3, 9$\}$ and, in addition, 32 for CUB.

\paragraph{Evaluation metrics}
For the evaluation of \textit{generative quality}, we use the Fr\'echet
inception distance \citep[FID;][]{Heusel2017}, a standard metric for evaluating
the quality of generated images. In \Cref{app:additional_experimental_results},
we also provide log-likelihoods and qualitative results for both images and
captions.  To compute \textit{generative coherence}, we adopt the definitions
from previous works \citep{Shi2019,Sutter2021}. Generative coherence requires
annotation on what is shared between modalities; for example, in both PolyMNIST
and Translated-PolyMNIST the digit label is shared by design. For a single
generated example $\hat{\x}_m \sim q_{\phi}(\x_m \given \z)$ from modality $m$,
the generative coherence is computed as the following indicator: 
\begin{equation}
  \text{Coherence}(\hat{\x}_m, y, g_m) = \bm{1}_{\{g_m(\hat{\x}_m)\,=\,y\}}
\end{equation}
where $y$ is a ground-truth class label and $g_m$ is a pretrained classifier
(learned on the training data from modality $m$) that outputs a predicted class
label. To compute the \textit{conditional coherence accuracy}, we average the
coherence values over a set of $N$ conditionally generated examples, where $N$
is typically the size of the test set. In particular, when $\hat{\x}_m \sim
q_{\phi}(\x_m \given \z)$ is conditionally generated from $\z \sim
p_{\theta}(\z \given \x_A)$ such that $A = \{1, \ldots, M\} \setminus m$, the
metric is specified as the \textit{leave-one-out conditional coherence
accuracy}, because the input consists of all modalities except the one that is
being generated. When it is clear from context which metric is used, we refer
to the (leave-one-out) conditional coherence accuracy simply as generative
coherence. For PolyMNIST, we use the pretrained digit classifiers that are
provided in the publicly available code from \citet{Sutter2021} and for
Translated-PolyMNIST we train the classifiers from scratch with the same
architectures that are used for the VAE encoders. Notably, the new pretrained
digit classifiers have a classification accuracy between 93.5--96.9\% on the
test set of the respective modality, which means that it is possible to predict
the digits fairly well with the given architectures.

\subsection{Additional experimental results}
\label{app:additional_experimental_results}

\paragraph{Linear classification}
\citet{Shi2019} propose linear classification as a measure of latent
factorization, to judge the quality of learned representations and to assess
how well the information decomposes into shared and modality-specific features.
\Cref{fig:clfs} shows the linear classification accuracy on the learned
representations.  The results suggest that not only does the generative
coherence decline when we switch from PolyMNIST to Translated-PolyMNIST, but
also the quality of the learned representations. While a low
classification accuracy does not imply that there is no digit information
encoded in the latent representation (after all, digits show up in most
self-reconstructions), the result demonstrates that a \textit{linear}
classifier cannot extract the digit information.

\paragraph{Log-likelihoods and qualitative results}
\Cref{fig:lliks} shows the generative quality in terms of joint
log-likelihoods. We observe a similar ranking of models as with FID, but we
notice that the gap between MVAE and MoPoE-VAE appears less pronounced.  The
reason for this discrepancy is that, to be consistent with \citet{Sutter2021},
we estimate joint log-likelihoods given \textit{all} modalities---a procedure
that resembles reconstruction more than it does unconditional generation. It
can be of independent interest that log-likelihoods might overestimate the
generative quality for unconditional generation for certain types of models.
Qualitative results for unconditional generation
(\Cref{fig:qualitative_unconditional}) support the hypothesis that the
presented log-likelihoods do not reflect the visible lack of generative quality
for the MoPoE-VAE\@. Further, qualitative results for conditional generation
(\Cref{fig:qualitative_conditional}) indicate a lack of diversity for both the
MMVAE and MoPoE-VAE\@: even though we draw different samples from the
posterior, the respective conditionally generated samples (i.e., the ten
samples along each column) show little diversity in terms of backgrounds or
writing styles.

\paragraph{Repeated modalities}
\begin{wrapfigure}{r}{0.27\textwidth}
  \begin{center}
    \includegraphics[width=0.27\textwidth]{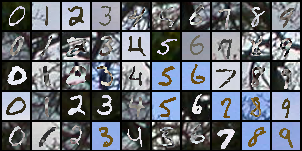}
\caption{PolyMNIST with five ``repeated'' modalities.}%
\label{fig:repeated_modality_example}
\end{center}
\end{wrapfigure}
To check if the generative quality gap is also present when modalities have
\textit{similar} modality-specific variation, we use PolyMNIST with
``repeated'' modalities generated from the same background image (illustrated
in \Cref{fig:repeated_modality_example}). We vary the number of modalities from
2 to 5, but in contrast to the results from \Cref{fig:num_mod_ablation}, we now
use repeated modalities.  \Cref{fig:num_mod_ablation_repeated_modality}
confirms that the generative quality of both the MVAE and MoPoE-VAE
deteriorates with each additional modality, even in this simplified setting
with repeated modalities. In comparison, the generative quality of the MVAE is
much closer to the unimodal VAE for any number of modalities. These results
lend further support to the theoretical statements from
\Cref{cor:no_modality_subsampling,cor:more_modalities}.

\paragraph{MMVAE with the official implementation} The empirical results of the
MMVAE in \Cref{sec:experiments} are based on a simplified version of the model
that was proposed by \citet{Shi2019}. In particular, we use the
re-implementation from \citet{Sutter2021}, which optimizes the standard ELBO
and not the doubly reparameterized ELBO gradient estimator
\citep[DReG,][]{Tucker2019} with importance sampling that is used in the
official implementation from \citet{Shi2019}.  Further, the re-implementation
does not parameterize the prior but uses a fixed, standard normal prior
instead.

To verify that these implementation differences do not affect the core
results---the generative quality gap and the lack of coherence---we conducted
experiments using the MMVAE with the official implementation from
\citet{Shi2019}. \Cref{fig:vanilla_polymnist_beta_ablation_mmvae_official}
shows the $\beta$-ablation for PolyMNIST and it confirms that there is still a
clear gap in generative quality between the unimodal VAE and the MMVAE when we
use the official implementation. For Translated-PolyMNIST (not shown) the
results are similar; in particular, we have verified that generative coherence
for cross generation is random, even if we limit the dataset to two modalities.

\paragraph{MVAE with ELBO sub-sampling}
For the MVAE, \citet{Wu2018} introduce ELBO sub-sampling as an additional
training strategy to learn the inference networks for different subsets of
modalities. In our notation, ELBO sub-sampling can be described by the
following objective:
\begin{equation}
  \mathcal{L}_{}(\x; \theta, \phi) 
  + \sum_{A \in \mathcal{S}} \mathcal{L}_{}(\x_A; \theta, \phi)
\end{equation}
where $\mathcal{S}$ denotes some set of subsets of modalities. \citet{Wu2018}
experiment with different choices for $\mathcal{S}$, but throughout all of
their experiments they use at least the set of unimodal subsets $\{\{\x_1\},
\ldots, \{\x_M\}\}$, which yields the following objective:
\begin{equation}
  \mathcal{L}_{}(\x; \theta, \phi) 
  + \sum_{i = 1}^{M} \mathcal{L}_{}(\x_i; \theta, \phi)\;.
\label{eq:MVAEplus_objective}
\end{equation}
It is important to note that the above objective differs from the objective
optimized by all mixture-based multimodal VAEs (\Cref{def:mixture_based}) in
that there are no cross-modal reconstructions in \Cref{eq:MVAEplus_objective}.
As a consequence, ELBO sub-sampling puts more weight on the approximation of
the marginal distributions compared to the conditionals and therefore does not
optimize a proper bound on the joint distribution \citep{Wu2020}.

\Cref{fig:vanilla_polymnist_beta_ablation_mvaeplus} shows the PolyMNIST
$\beta$-ablation comparing MVAE with and without ELBO sub-sampling.
MVAE$^{{+}}$ denotes the model with ELBO sub-sampling using
objective~(\ref{eq:MVAEplus_objective}). Notably, MVAE$^{{+}}$ achieves
significantly better generative coherence, while both models perform similarly
in terms of generative quality (both in terms of FID and joint log-likelihood).
Hence, even though the MVAE$^{{+}}$ optimizes an incorrect bound on the joint
distribution \citep{Wu2020}, our results suggest that the learned models behave
quite similar in practice, which can be of independent interest for future
work.

\begin{figure}[!b]
\begin{center}
\begin{subfigure}[t]{.41\linewidth}
  \centering
  \includegraphics[width=1.0\linewidth]{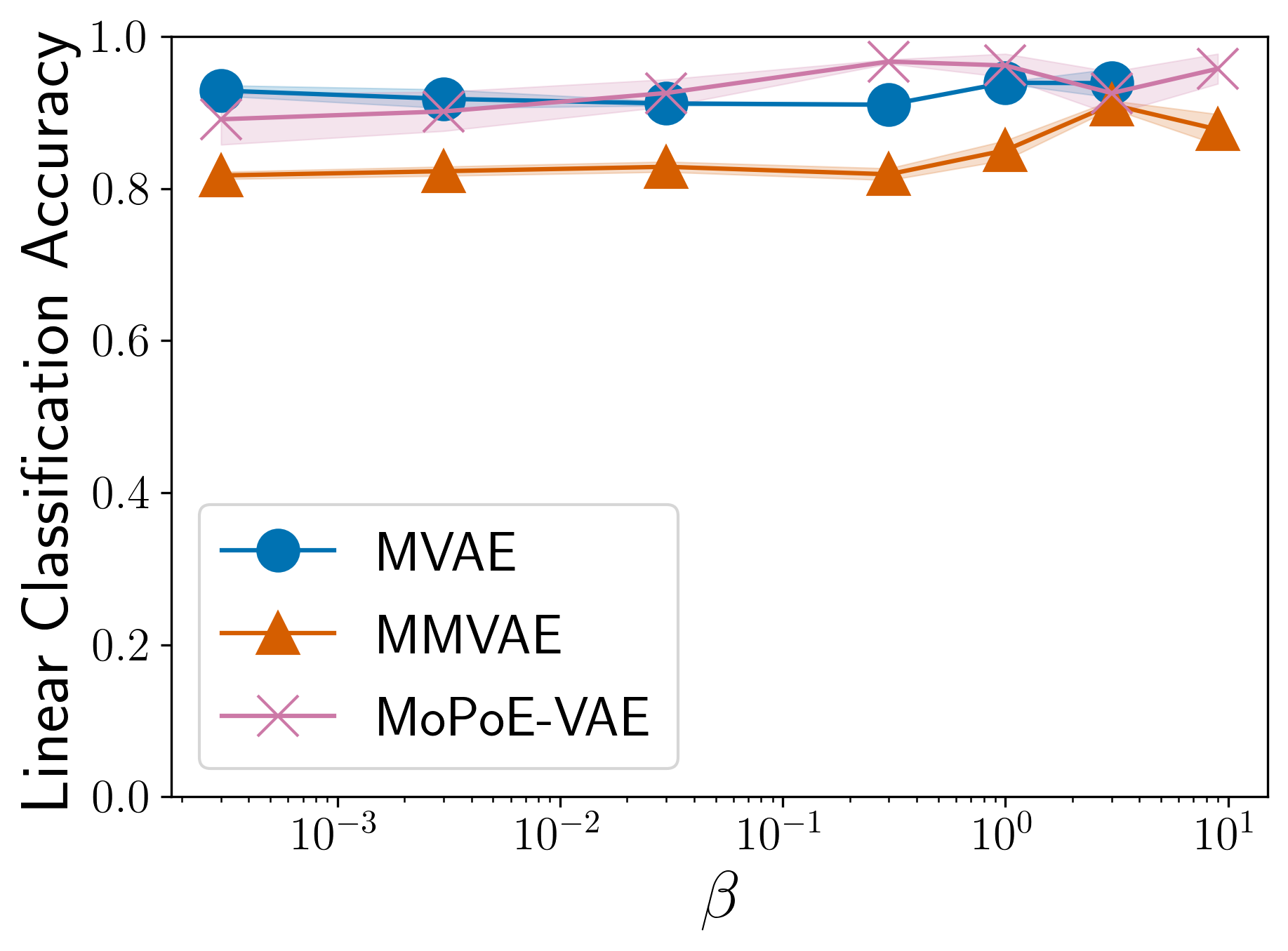}
  \caption{PolyMNIST}
\end{subfigure}%
\hskip +0.335in
\begin{subfigure}[t]{.41\linewidth}
  \centering
  \includegraphics[width=1.0\linewidth]{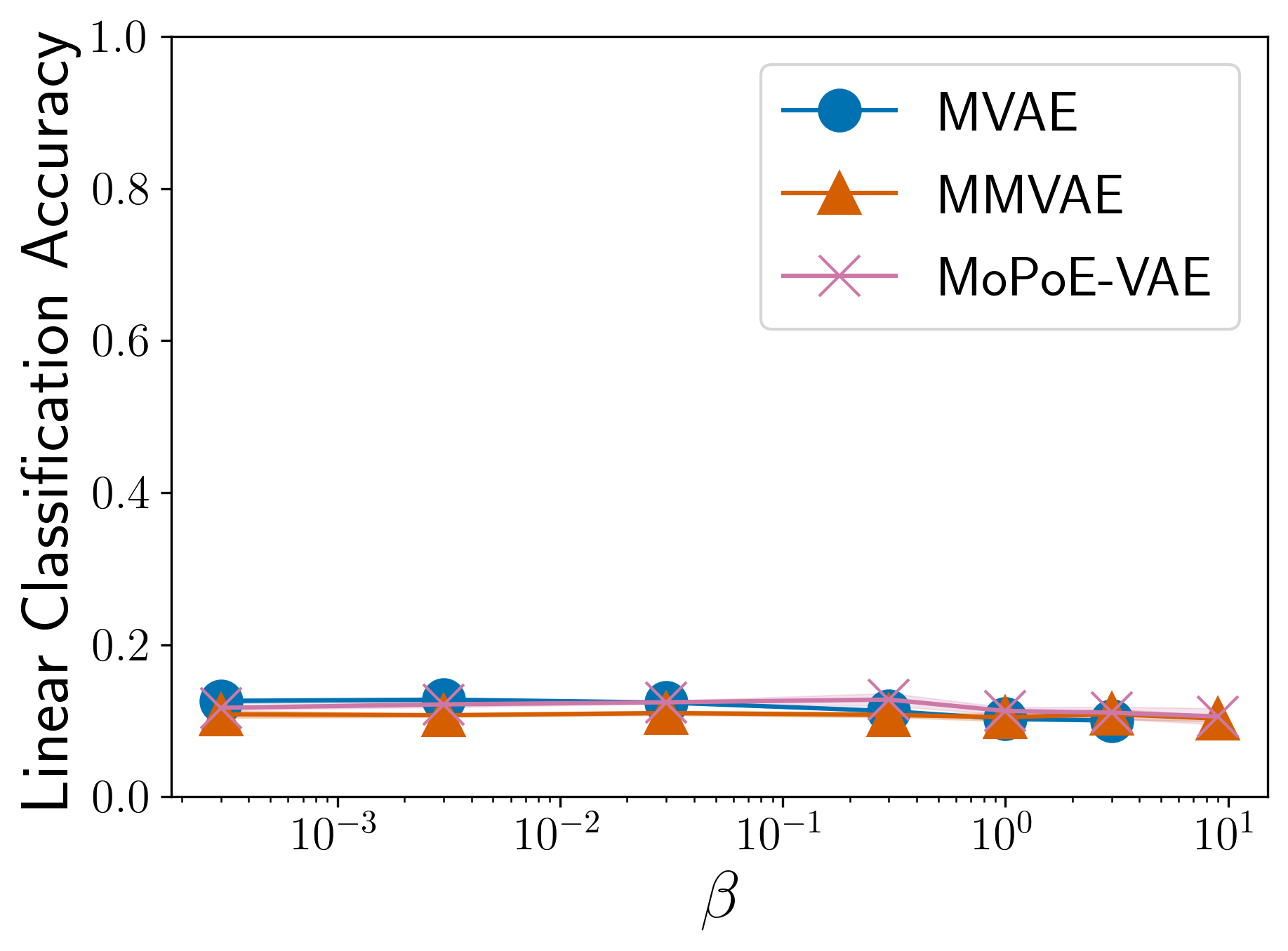}
  \caption{Translated-PolyMNIST}
\end{subfigure}%
\caption{%
  Linear classification of latent representations. For each model, linear
  classifiers were trained on the joint embeddings from 500 randomly sampled
  training examples.  Points denote the average digit classification accuracy
  of the respective classifiers. The results are averaged over three seeds
  and the bands show one standard deviation respectively.  Due to numerical
  instabilities, the MVAE could not be trained with larger $\beta$ values. For
  CUB, classification performance cannot be computed, because shared factors
  are not annotated.
}
\label{fig:clfs}
\end{center}
\end{figure}

\newpage
\begin{figure}[t]
\begin{center}
\begin{subfigure}[t]{.32\linewidth}
  \centering
  \includegraphics[width=1.0\linewidth]{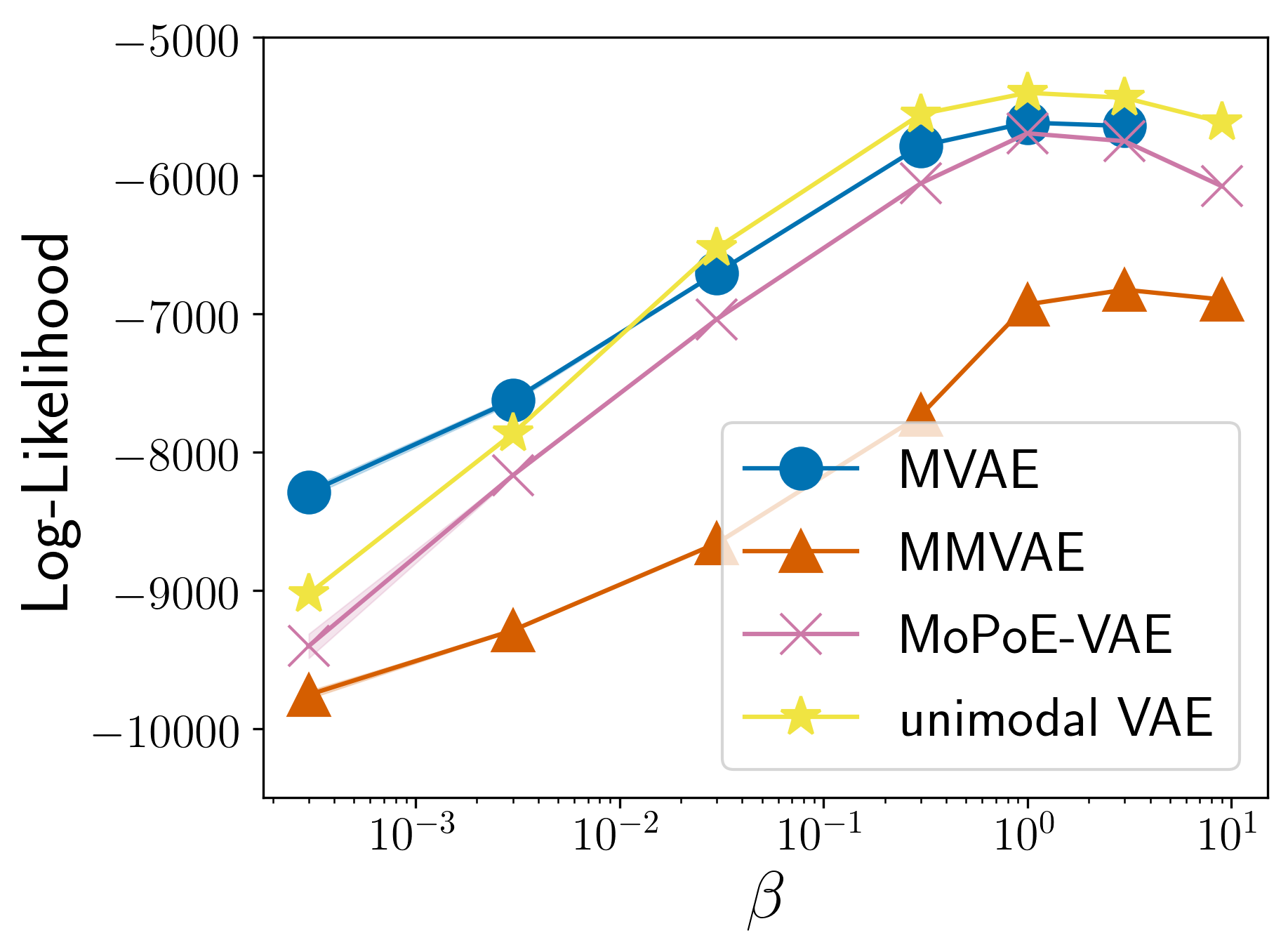}
  \caption{PolyMNIST}
\end{subfigure}%
\begin{subfigure}[t]{.32\linewidth}
  \centering
  \includegraphics[width=1.0\linewidth]{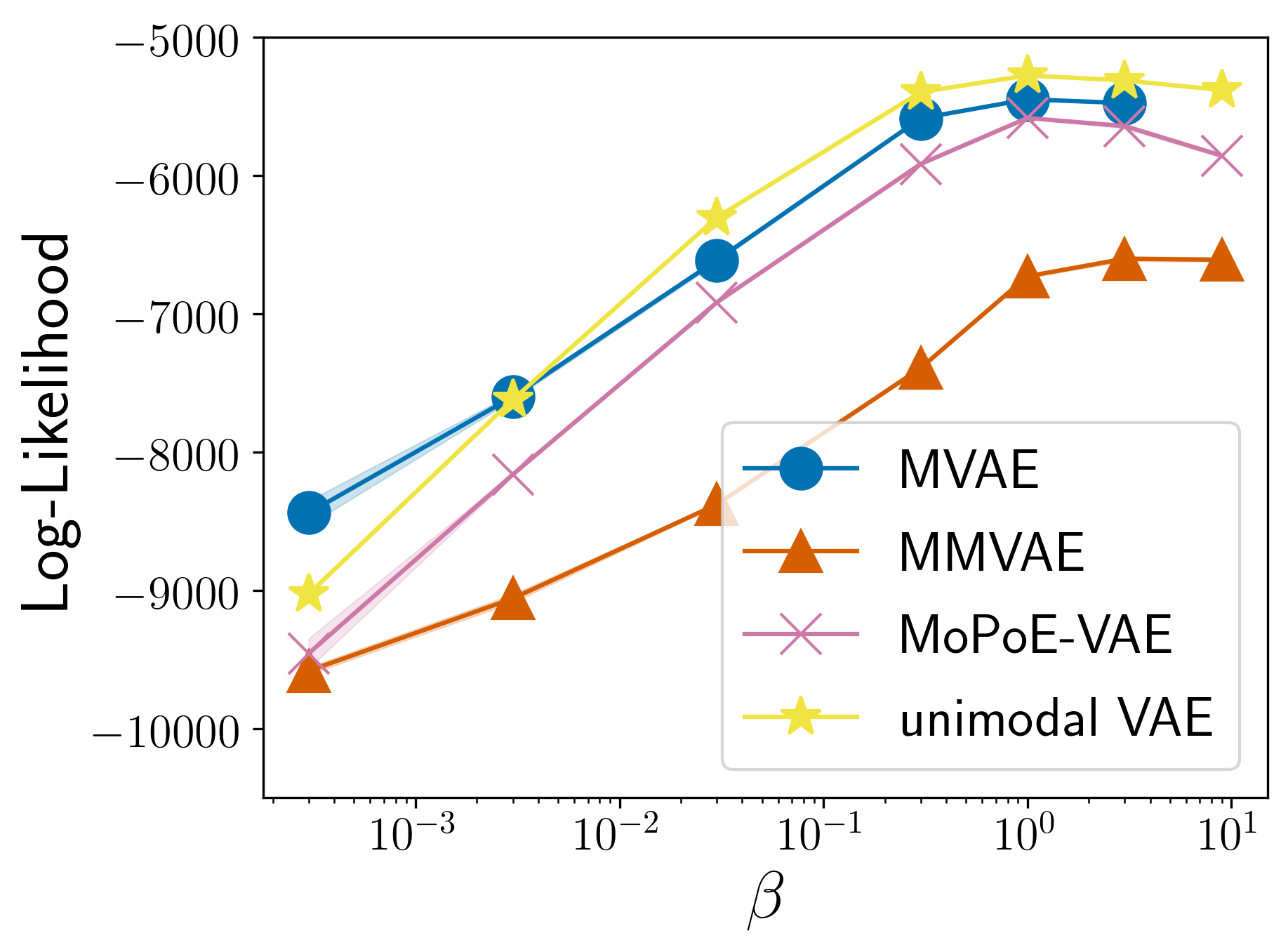}
  \caption{Translated-PolyMNIST}
\end{subfigure}%
\begin{subfigure}[t]{.32\linewidth}
  \centering
  \includegraphics[width=1.0\linewidth]{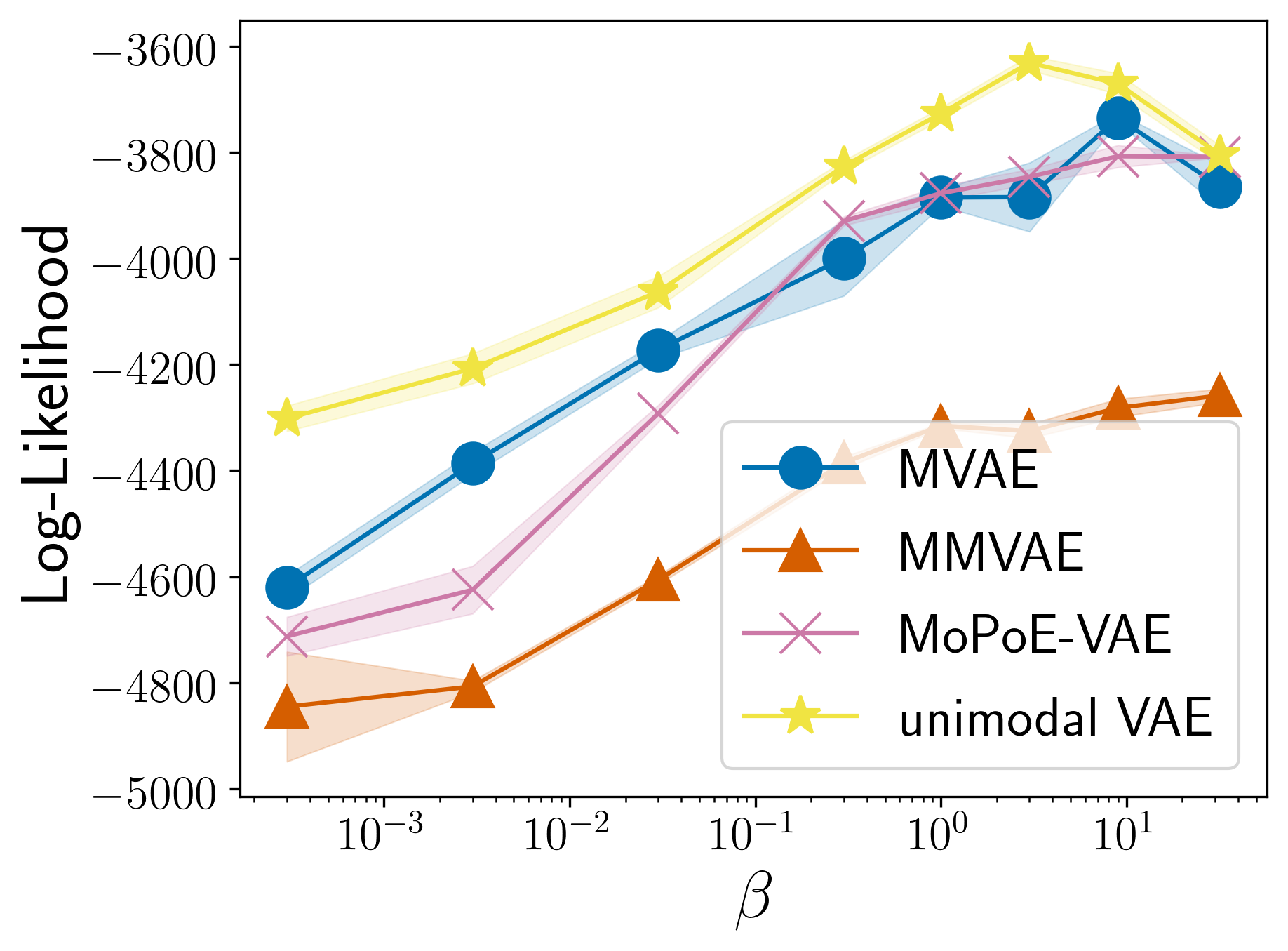}
  \caption{Caltech Birds (CUB)}
\end{subfigure}
\caption{%
  Joint log-likelihoods over a range of $\beta$ values. Each point
  denotes the estimated joint log-likelihood averaged over three different
  seeds and the bands show one standard deviation respectively. Due
  to numerical instabilities, the  MVAE could not be trained with larger $\beta$
  values.
}
\label{fig:lliks}
\end{center}
\end{figure}

\begin{figure}[t]
\begin{center}
\begin{subfigure}[t]{.195\linewidth}
  \centering
  \includegraphics[width=1.0\linewidth]{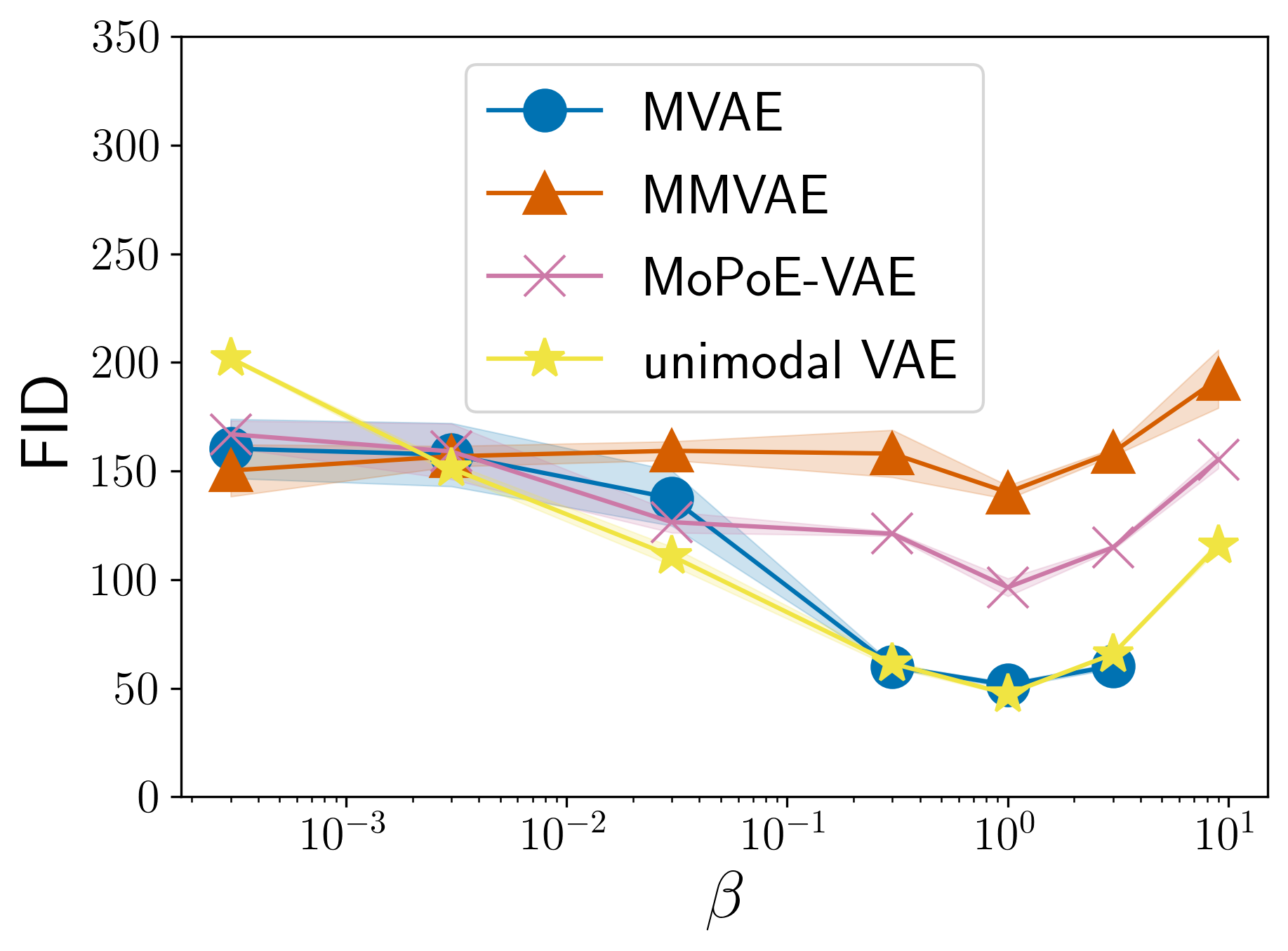}
\end{subfigure}%
\begin{subfigure}[t]{.195\linewidth}
  \centering
  \includegraphics[width=1.0\linewidth]{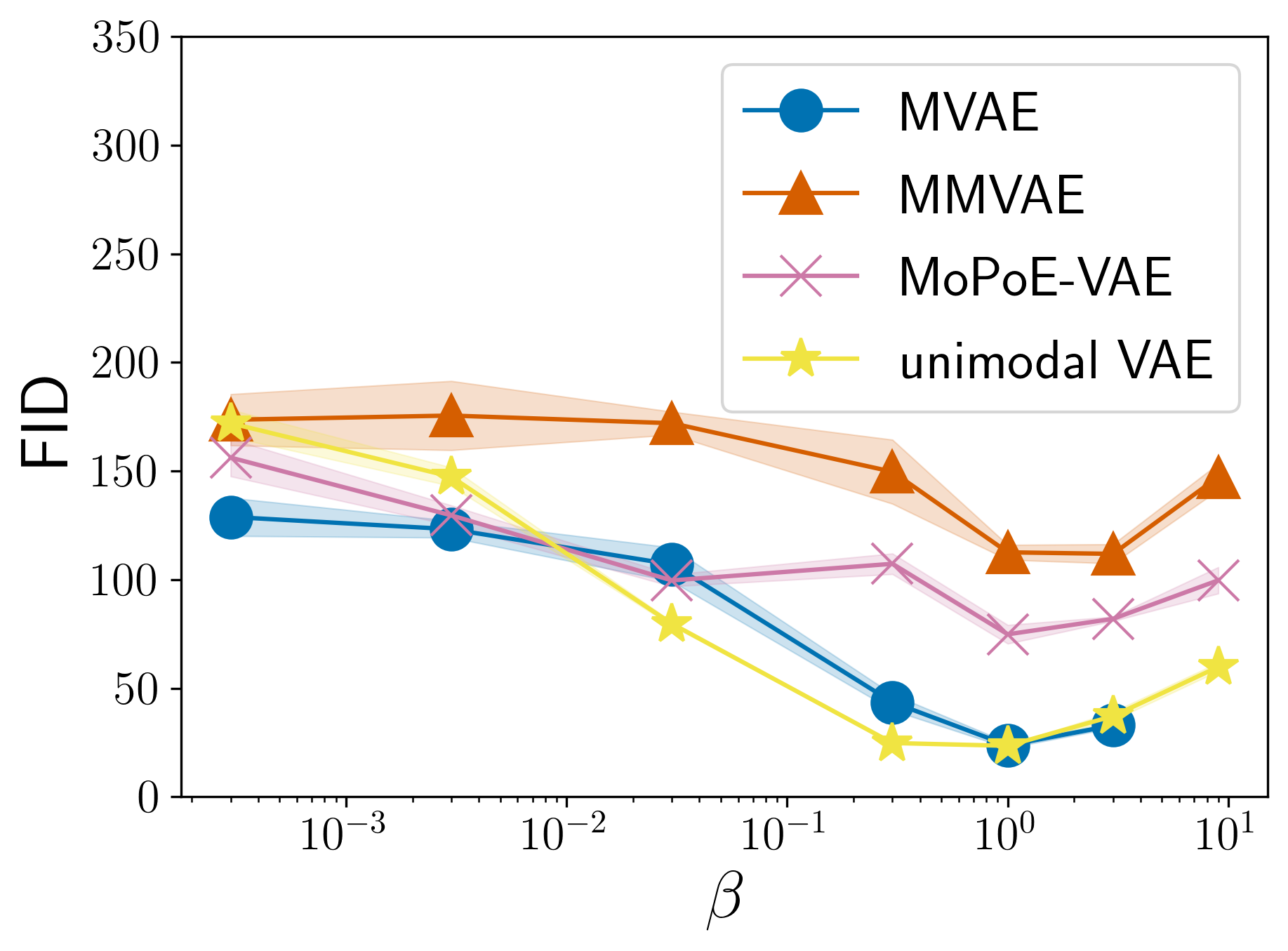}
\end{subfigure}%
\begin{subfigure}[t]{.195\linewidth}
  \centering
  \includegraphics[width=1.0\linewidth]{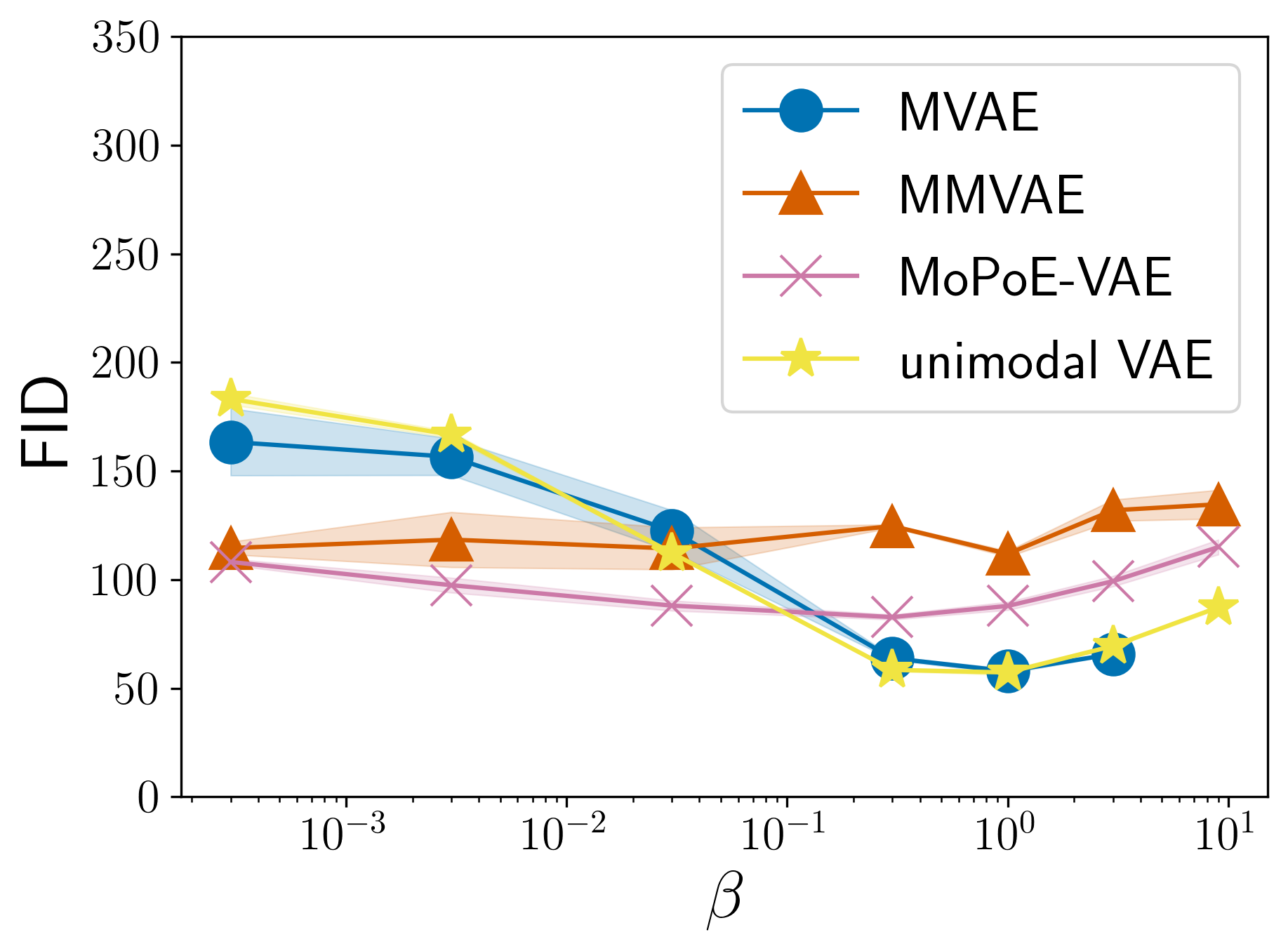}
\end{subfigure}
\begin{subfigure}[t]{.195\linewidth}
  \centering
  \includegraphics[width=1.0\linewidth]{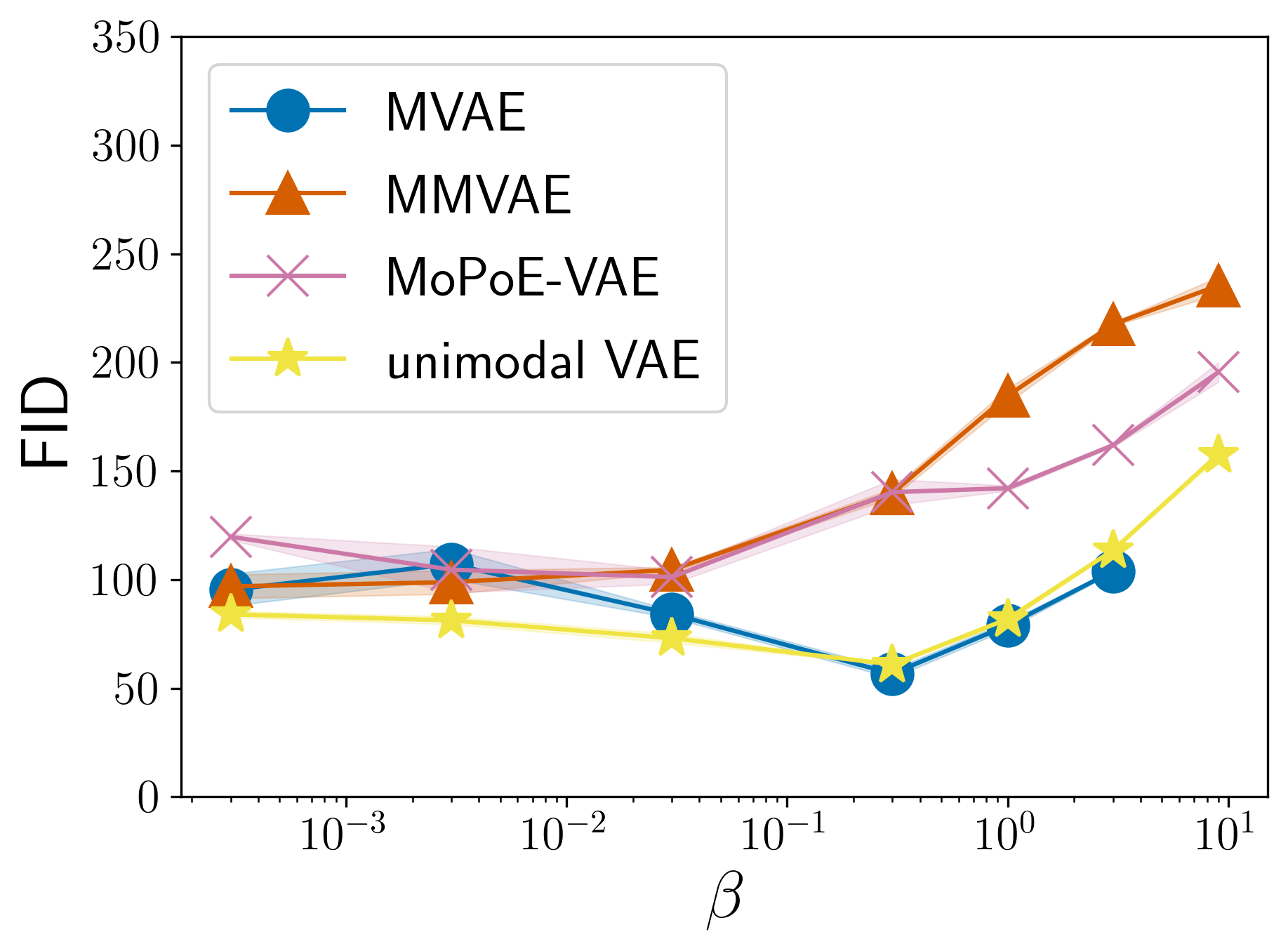}
\end{subfigure}
\begin{subfigure}[t]{.195\linewidth}
  \centering
  \includegraphics[width=1.0\linewidth]{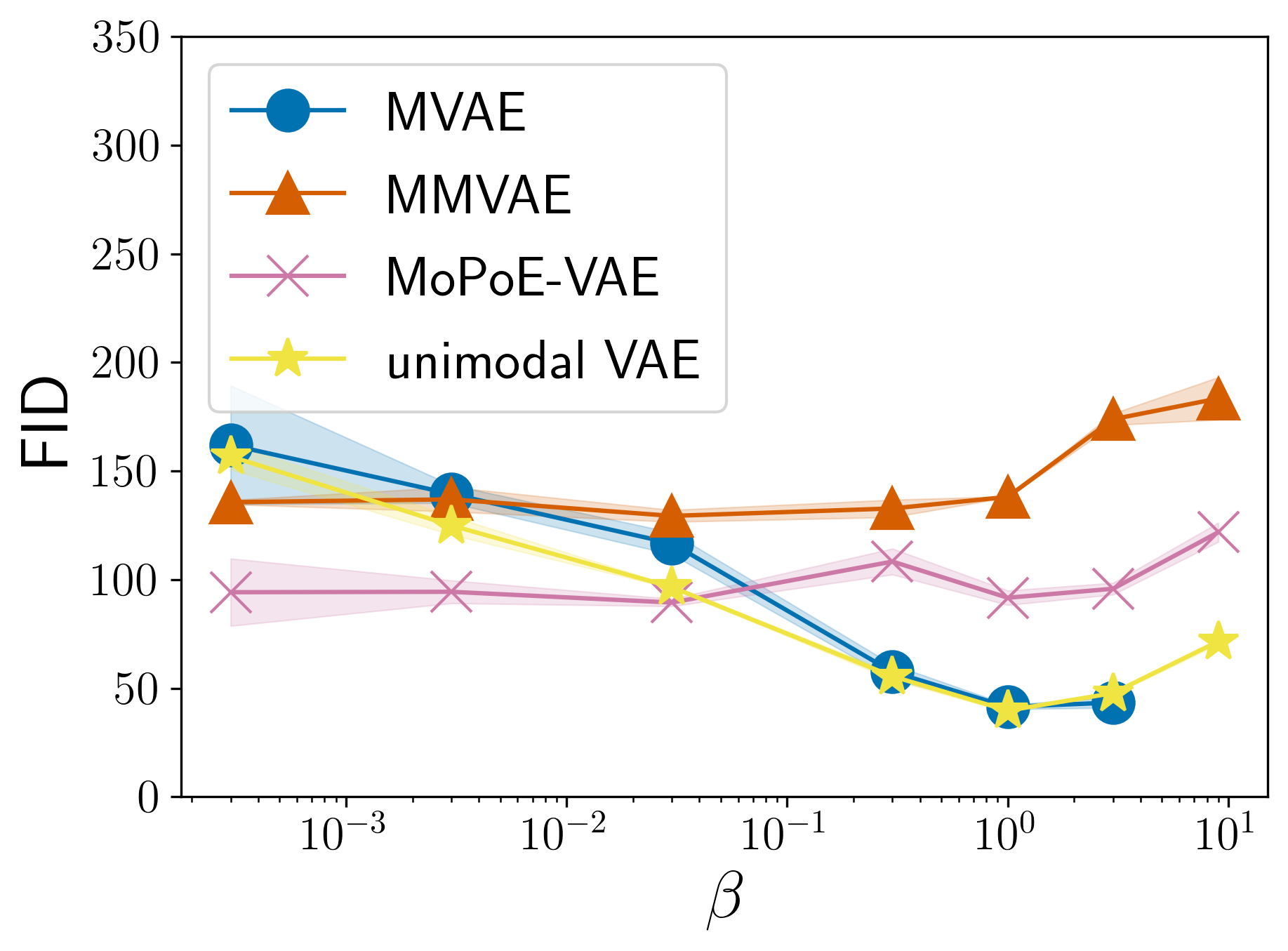}
\end{subfigure}
\begin{subfigure}[t]{.195\linewidth}
  \centering
  \includegraphics[width=1.0\linewidth]{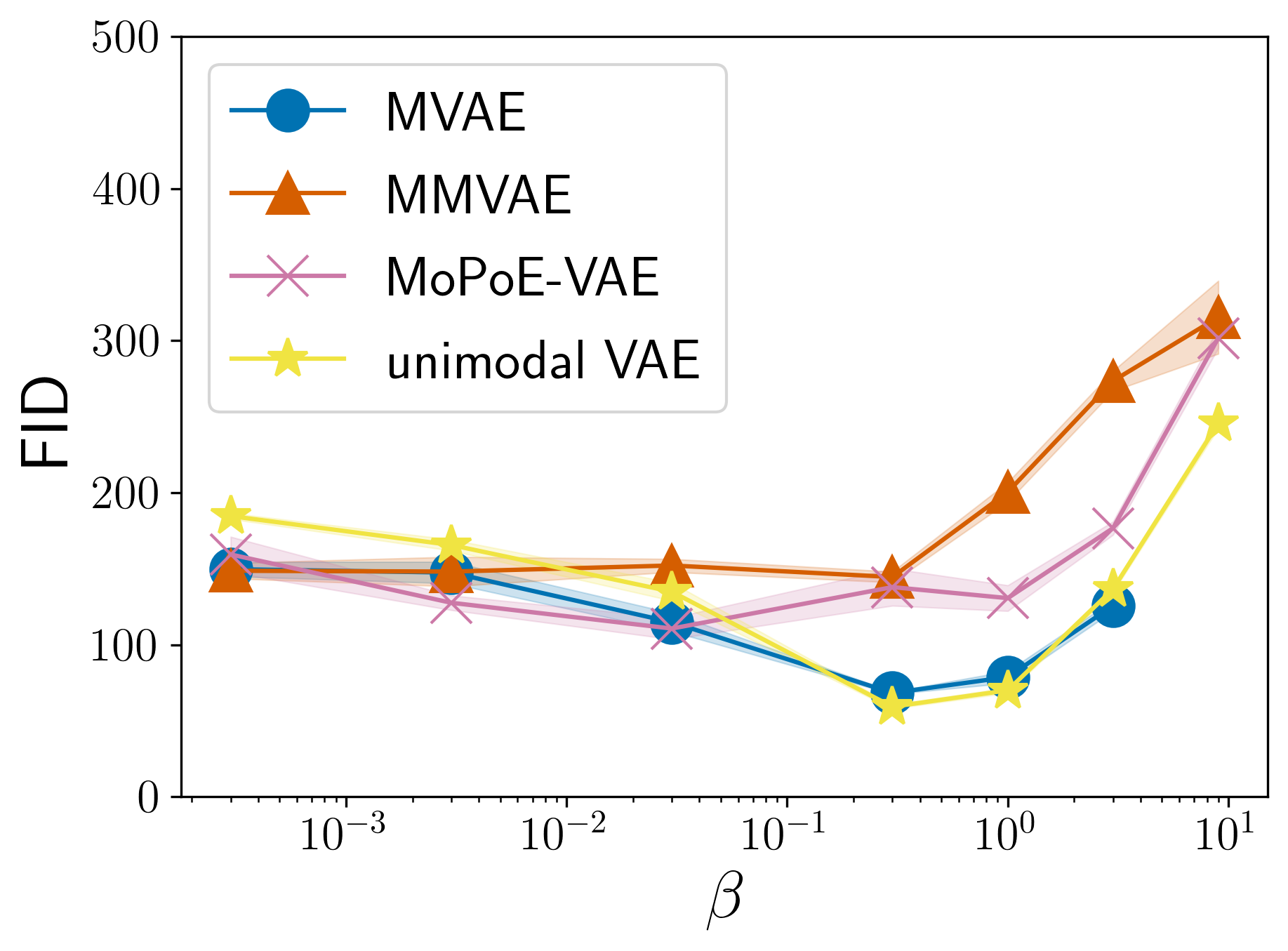}
  \caption{$X_1$}
\end{subfigure}%
\begin{subfigure}[t]{.195\linewidth}
  \centering
  \includegraphics[width=1.0\linewidth]{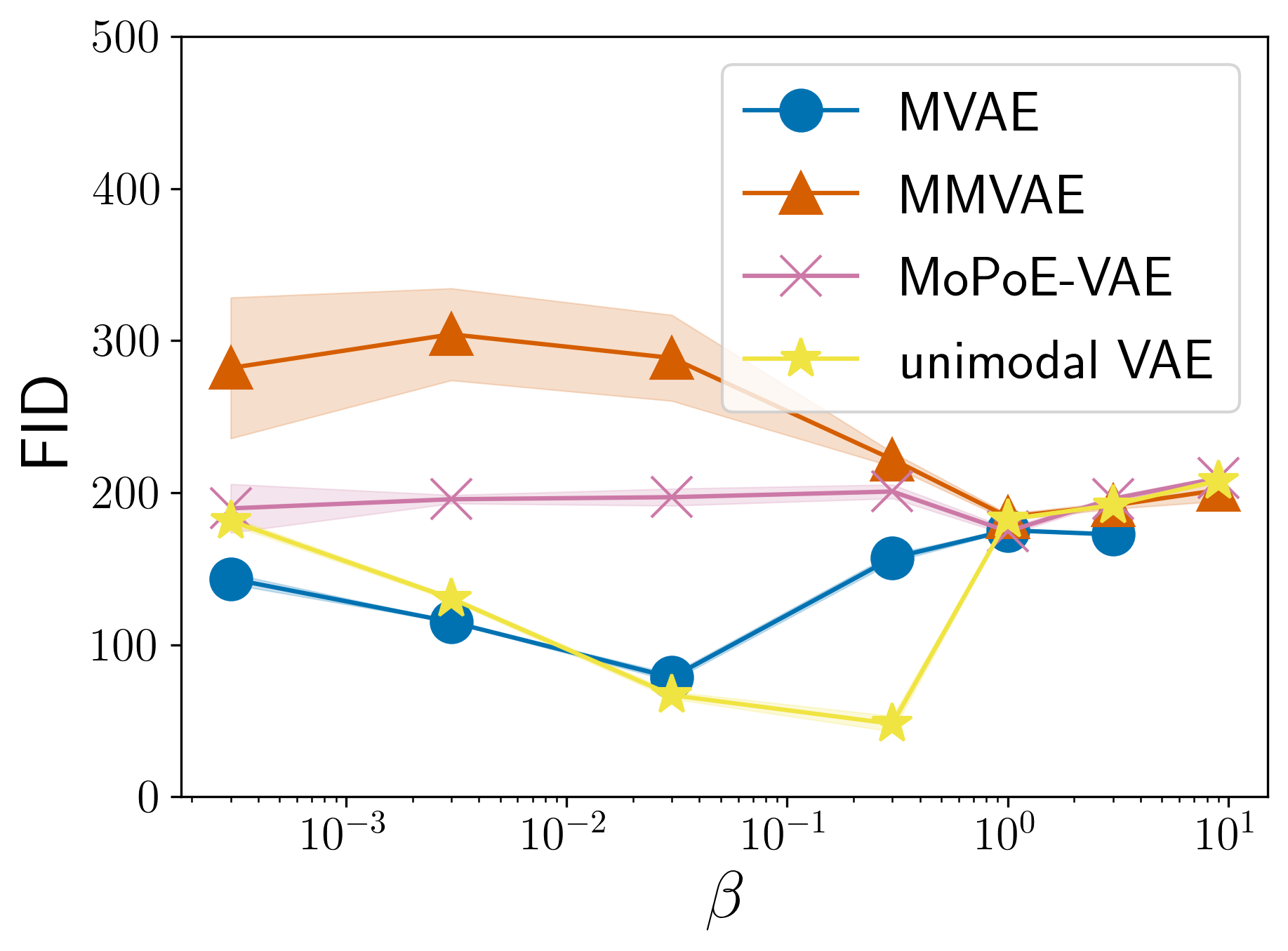}
  \caption{$X_2$}
\end{subfigure}%
\begin{subfigure}[t]{.195\linewidth}
  \centering
  \includegraphics[width=1.0\linewidth]{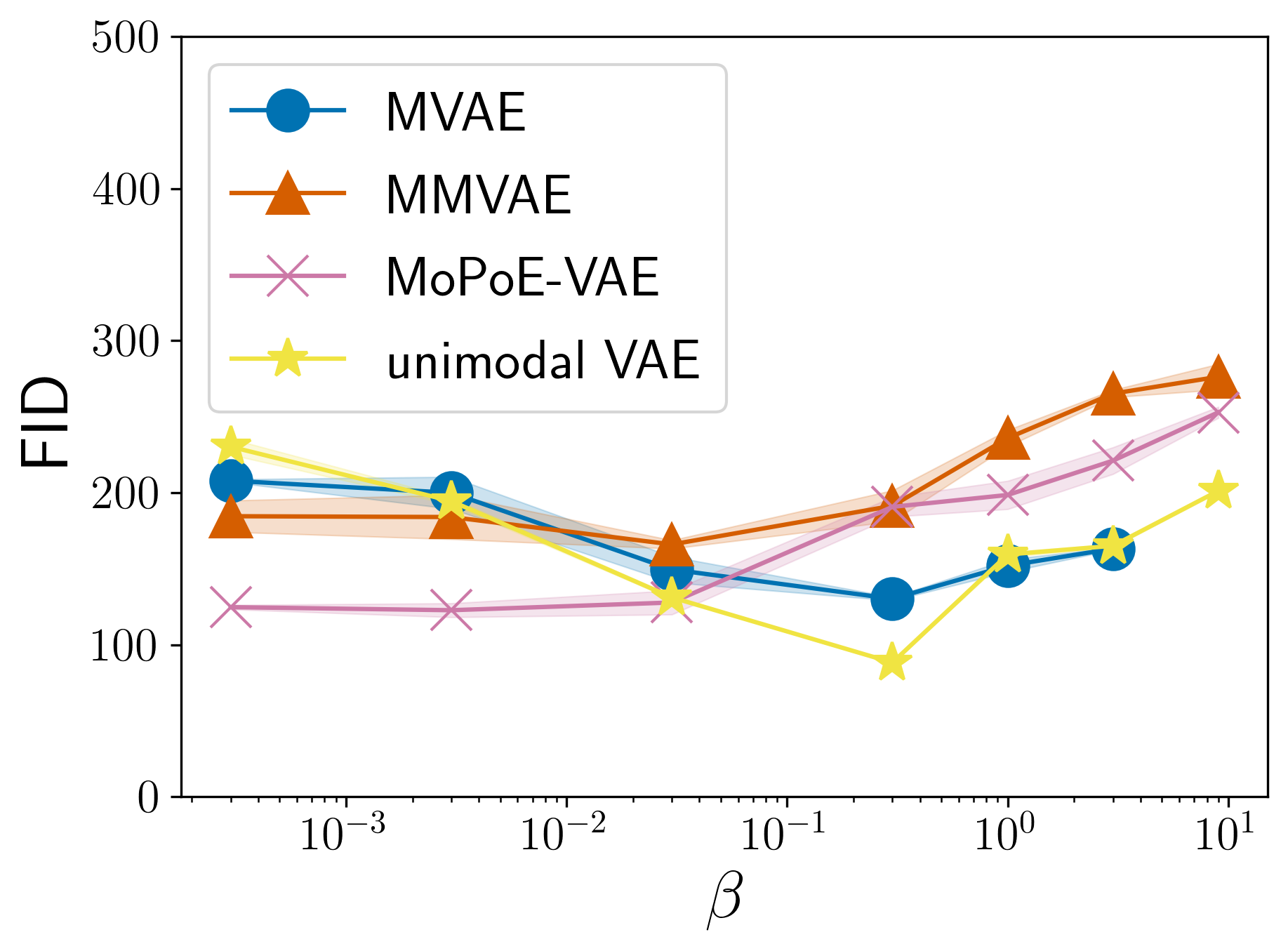}
  \caption{$X_3$}
\end{subfigure}
\begin{subfigure}[t]{.195\linewidth}
  \centering
  \includegraphics[width=1.0\linewidth]{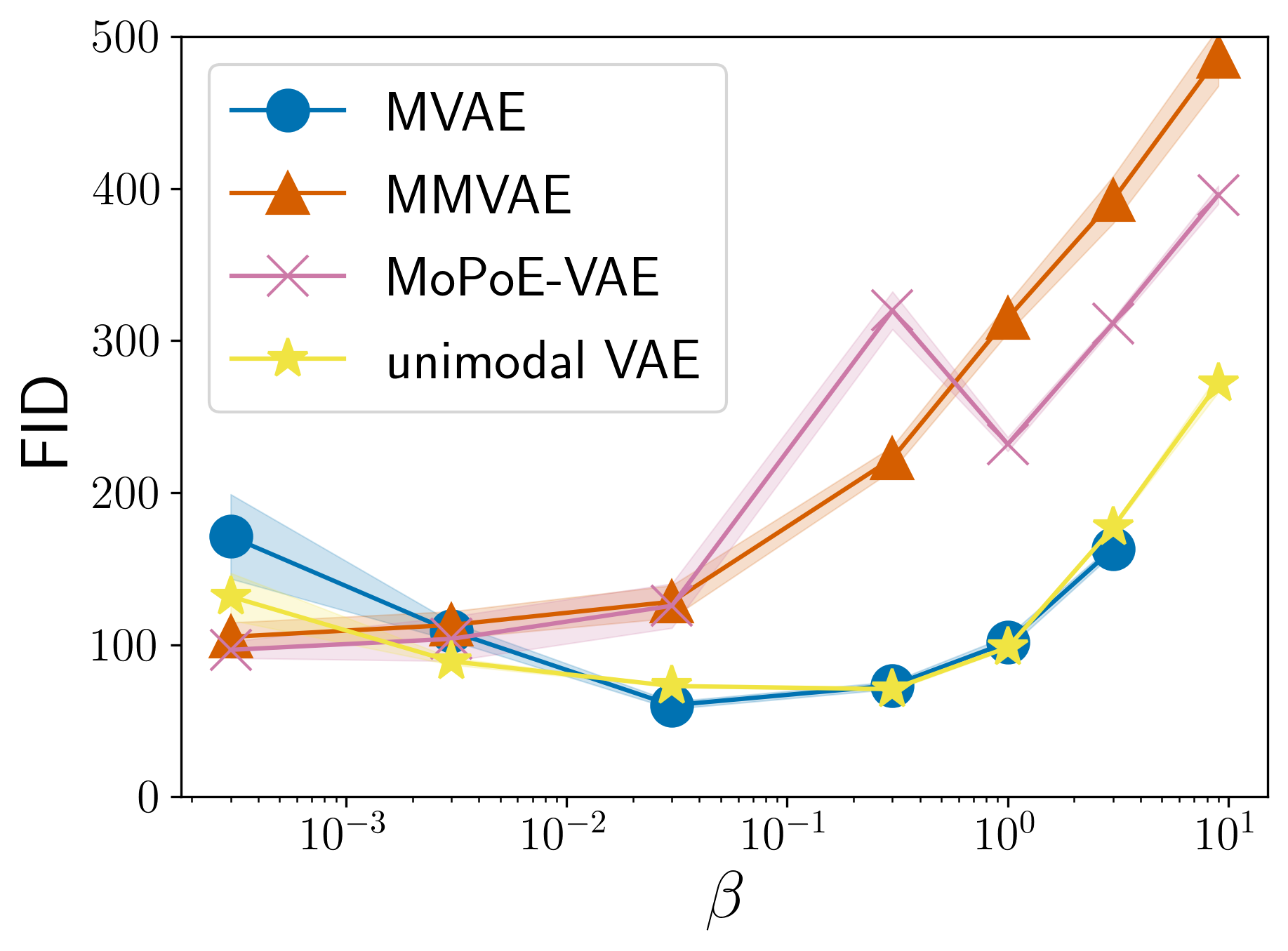}
  \caption{$X_4$}
\end{subfigure}
\begin{subfigure}[t]{.195\linewidth}
  \centering
  \includegraphics[width=1.0\linewidth]{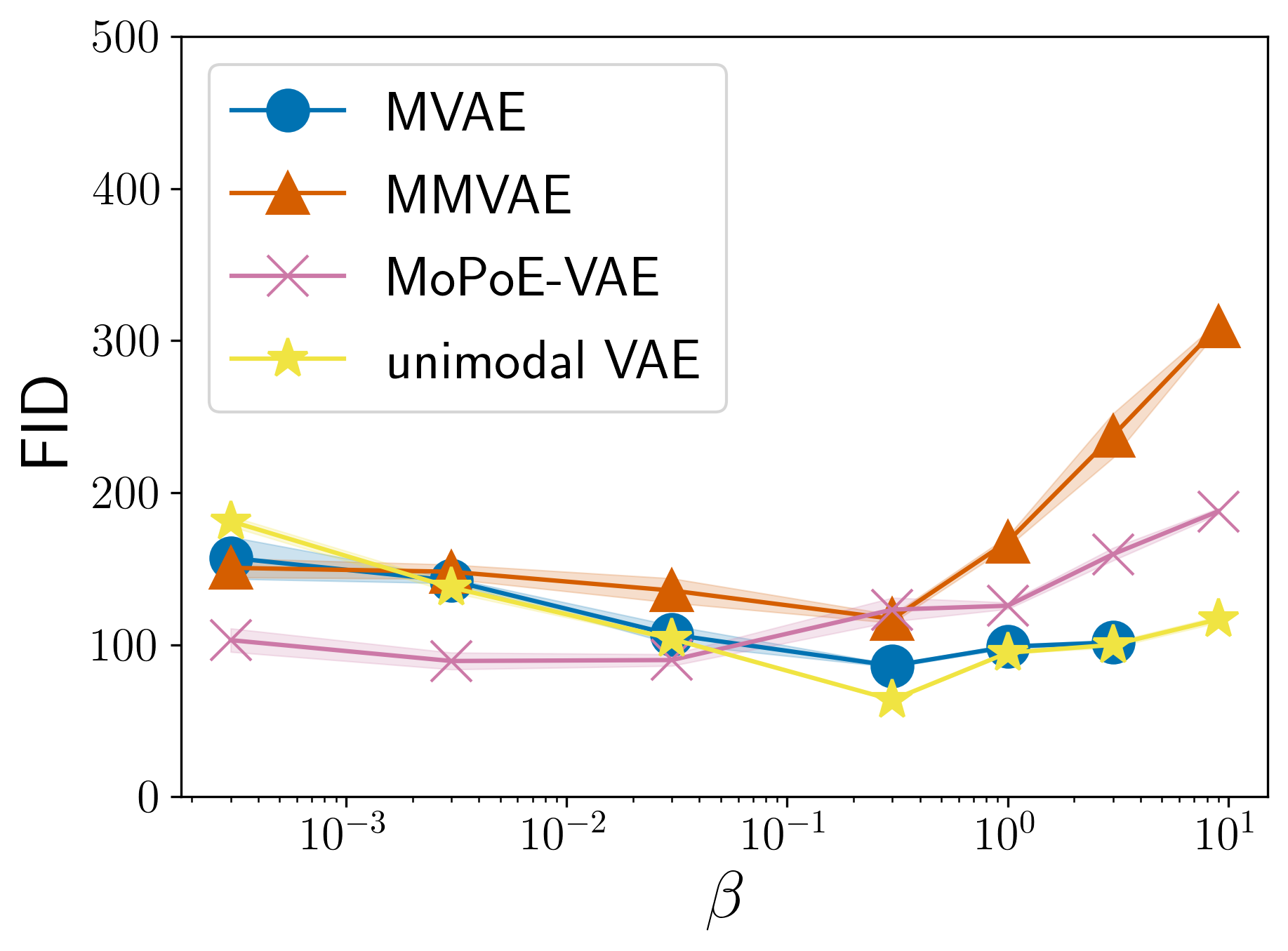}
  \caption{$X_5$}
\end{subfigure}
\caption{%
  FID for modalities $X_1, \ldots, X_5$.  The top row shows all FIDs for PolyMNIST
  and the bottom row for Translated-PolyMNIST respectively. Points denote the FID
  averaged over three seeds and bands show one standard deviation respectively.
  Due to numerical instabilities, the MVAE could not be trained with larger
  $\beta$ values.
}
\label{fig:beta_ablation_fids_in_all_directions}
\end{center}
\end{figure}

\begin{figure}[t]
\begin{center}
\captionsetup[subfigure]{font=scriptsize,labelfont=scriptsize}
\begin{subfigure}[t]{.24\linewidth}
  \centering
  \includegraphics[width=1.0\linewidth]{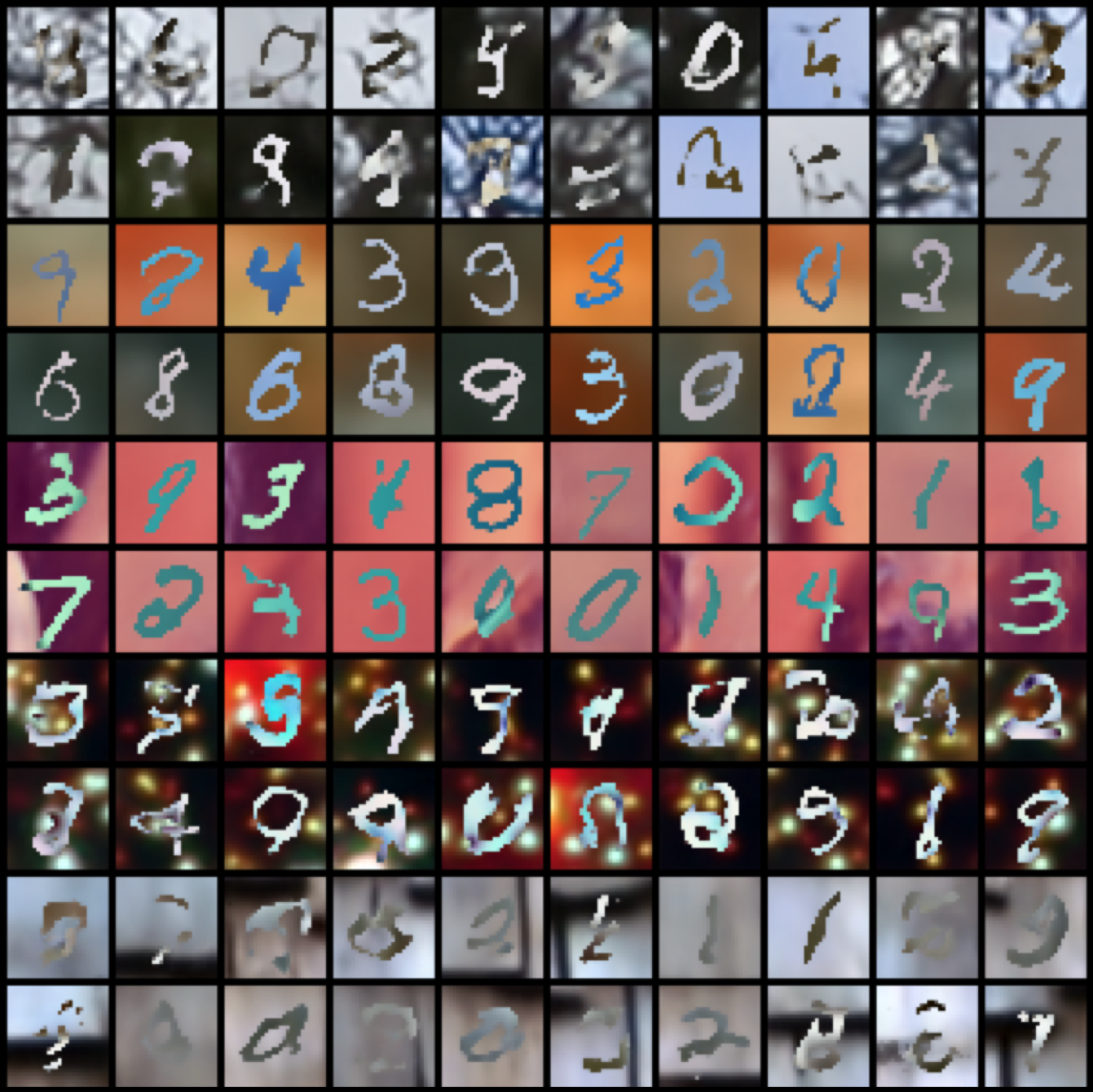}
  \caption{unimodal VAE, $\beta=1$}
\end{subfigure}
\begin{subfigure}[t]{.24\linewidth}
  \centering
  \includegraphics[width=1.0\linewidth]{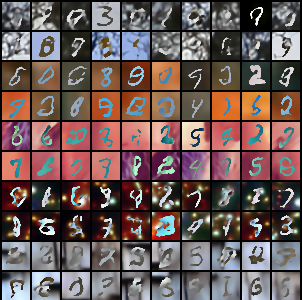}
  \caption{MVAE, $\beta=1$}
\end{subfigure}
\begin{subfigure}[t]{.24\linewidth}
  \centering
  \includegraphics[width=1.0\linewidth]{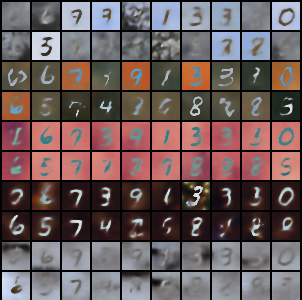}
  \caption{MMVAE, $\beta=1$}
\end{subfigure}
\begin{subfigure}[t]{.24\linewidth}
  \centering
  \includegraphics[width=1.0\linewidth]{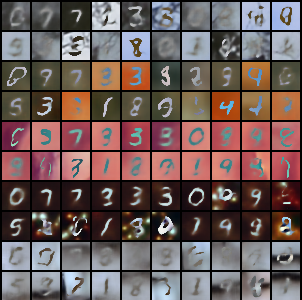}
  \caption{MoPoE-VAE, $\beta=1$}
\end{subfigure}
\vskip 0.30cm
\begin{subfigure}[t]{.24\linewidth}
  \centering
  \includegraphics[width=1.0\linewidth]{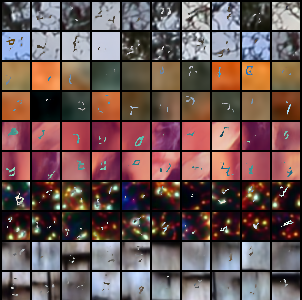}
  \caption{unimodal VAE, $\beta=0.3$}
\end{subfigure}
\begin{subfigure}[t]{.24\linewidth}
  \centering
  \includegraphics[width=1.0\linewidth]{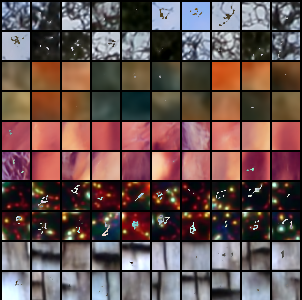}
  \caption{MVAE, $\beta=0.3$}
\end{subfigure}
\begin{subfigure}[t]{.24\linewidth}
  \centering
  \includegraphics[width=1.0\linewidth]{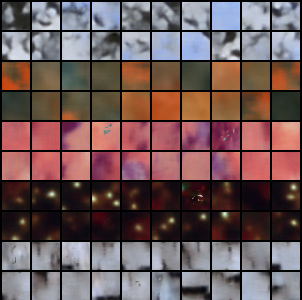}
  \caption{MMVAE, $\beta=0.3$}
\end{subfigure}
\begin{subfigure}[t]{.24\linewidth}
  \centering
  \includegraphics[width=1.0\linewidth]{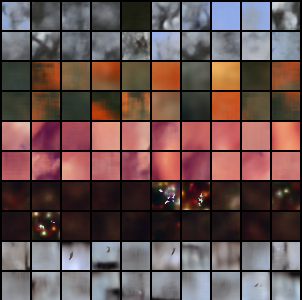}
  \caption{MoPoE-VAE, $\beta=0.3$}
\end{subfigure}
\vskip 0.30cm
\begin{subfigure}[t]{.24\linewidth}
  \centering
  \includegraphics[width=1.0\linewidth]{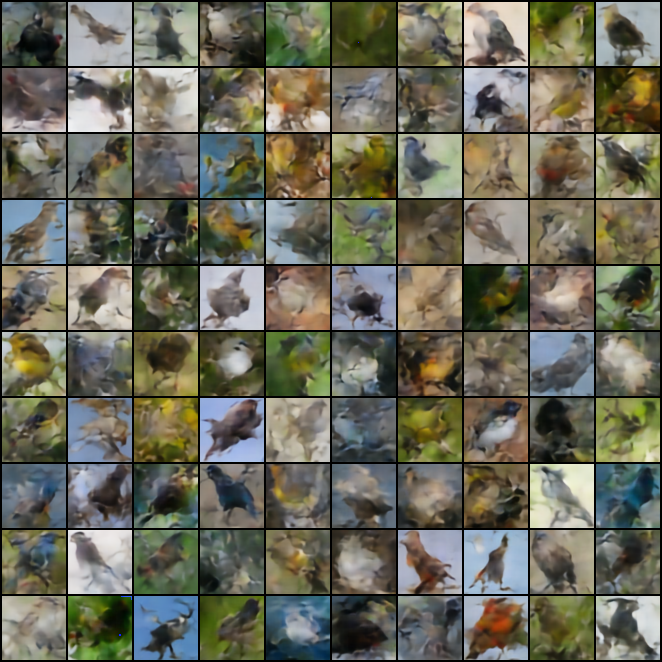}
  \caption{unimodal VAE, $\beta=9$}
\end{subfigure}
\begin{subfigure}[t]{.24\linewidth}
  \centering
  \includegraphics[width=1.0\linewidth]{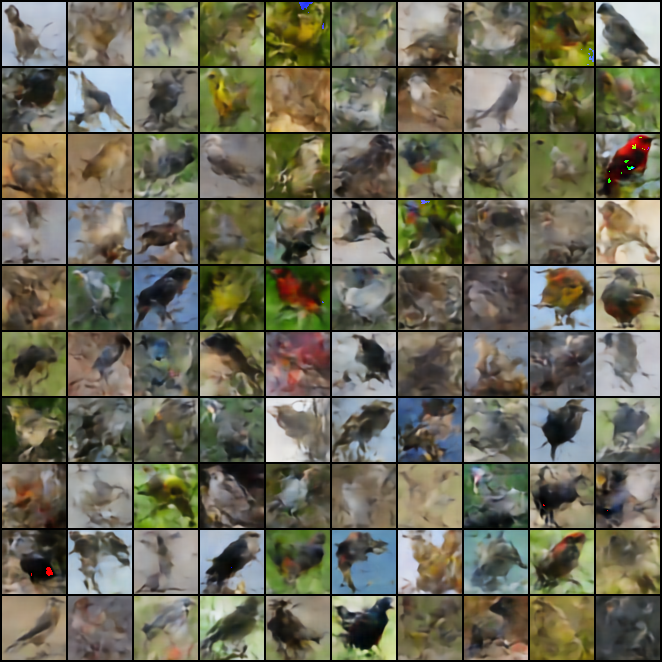}
  \caption{MVAE, $\beta=9$}
\end{subfigure}
\begin{subfigure}[t]{.24\linewidth}
  \centering
  \includegraphics[width=1.0\linewidth]{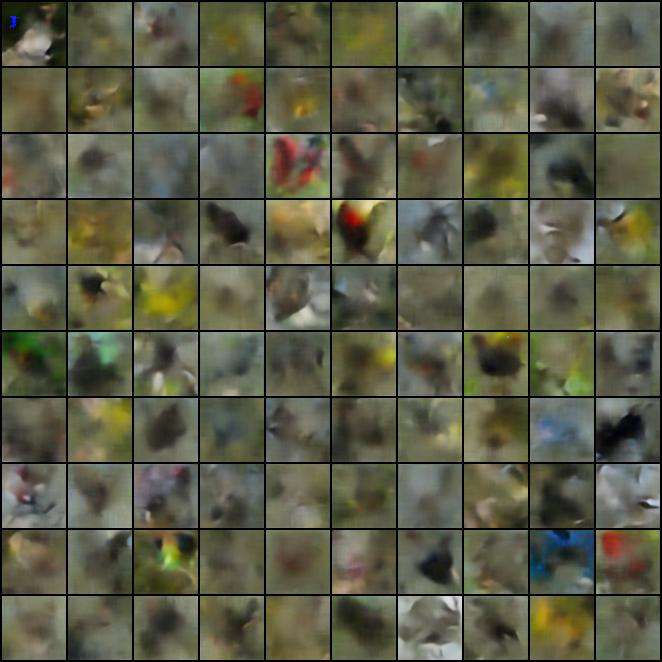}
  \caption{MMVAE, $\beta=9$}
\end{subfigure}
\begin{subfigure}[t]{.24\linewidth}
  \centering
  \includegraphics[width=1.0\linewidth]{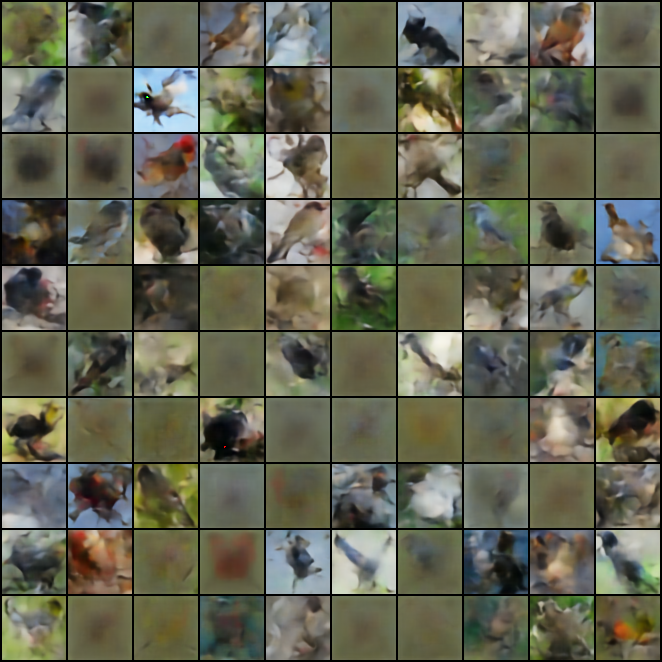}
  \caption{MoPoE-VAE, $\beta=9$}
\end{subfigure}
\vskip 0.30cm
\begin{subfigure}[t]{.24\linewidth}
  \centering
  \includegraphics[width=1.0\linewidth]{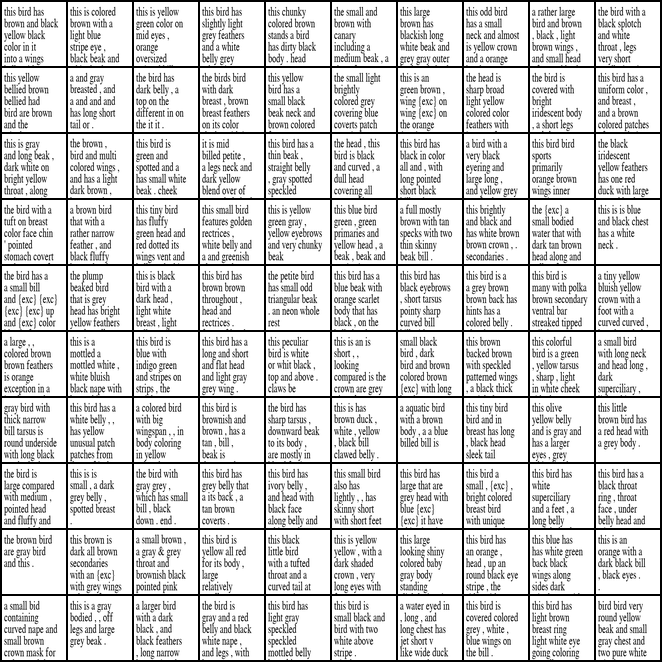}
  \caption{unimodal VAE, $\beta=9$}
\end{subfigure}
\begin{subfigure}[t]{.24\linewidth}
  \centering
  \includegraphics[width=1.0\linewidth]{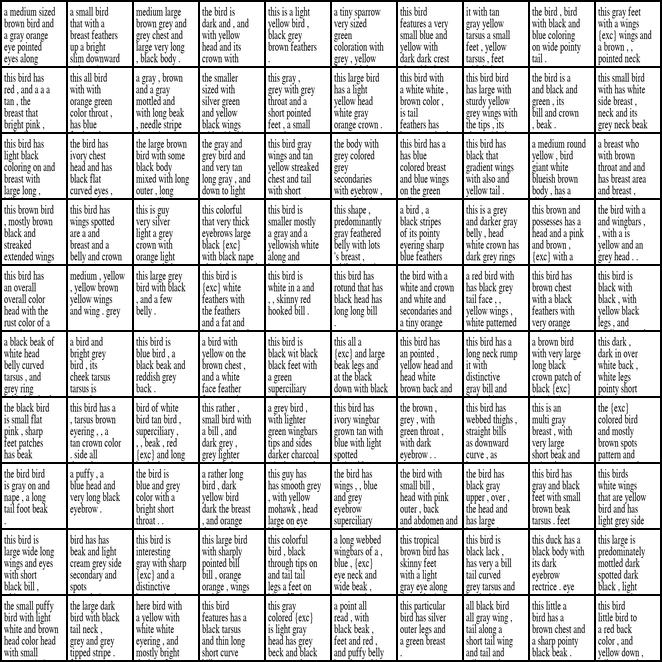}
  \caption{MVAE, $\beta=9$}
\end{subfigure}
\begin{subfigure}[t]{.24\linewidth}
  \centering
  \includegraphics[width=1.0\linewidth]{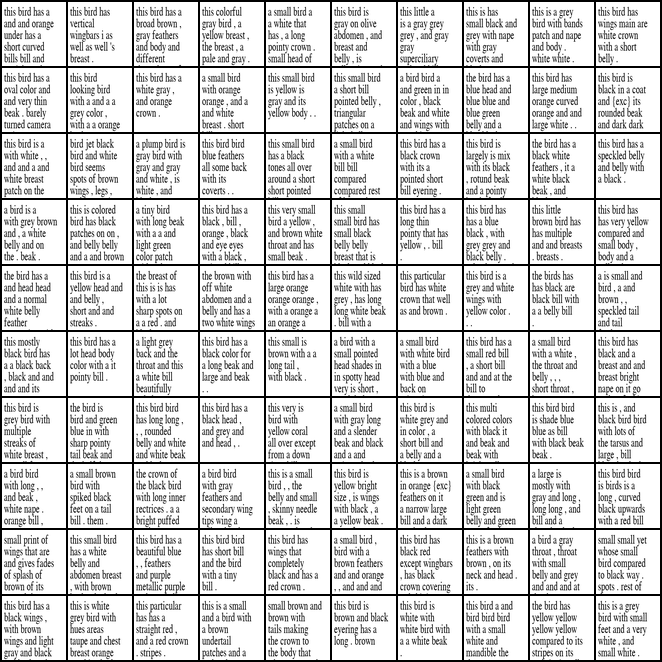}
  \caption{MMVAE, $\beta=9$}
\end{subfigure}
\begin{subfigure}[t]{.24\linewidth}
  \centering
  \includegraphics[width=1.0\linewidth]{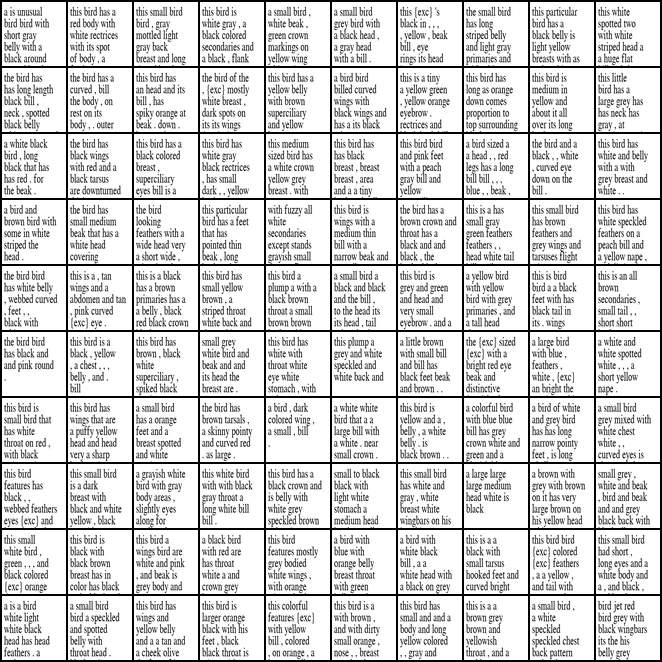}
  \caption{MoPoE-VAE, $\beta=9$}
\end{subfigure}
\vskip 0.20cm
\caption{%
  Qualitative results for the unconditional generation using prior samples. For
  PolyMNIST (Subfigures (a) to (d)) and Translated-PolyMNIST (Subfigures (e) to
  (h)), we show 20 samples for each modality. For CUB, we show 100 generated
  images (Subfigures (i) to (l)) and 100 generated captions (Subfigures (m) to
  (p)) respectively. Best viewed zoomed and in color.
}
\label{fig:qualitative_unconditional}
\end{center}
\end{figure}

\begin{figure}[t]
\captionsetup[subfigure]{font=scriptsize,labelfont=scriptsize}
\begin{center}
\begin{subfigure}[t]{.27\linewidth}
  \centering
  \includegraphics[width=1.0\linewidth]{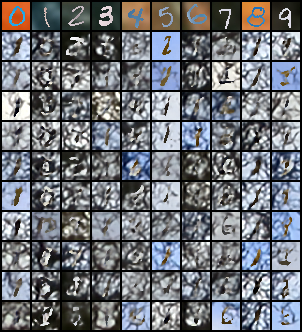}
  \caption{MVAE, $\beta=1$}
\end{subfigure}
\begin{subfigure}[t]{.27\linewidth}
  \centering
  \includegraphics[width=1.0\linewidth]{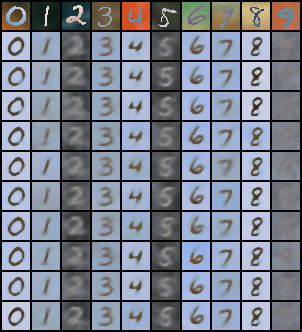}
  \caption{MMVAE, $\beta=1$}
\end{subfigure}
\begin{subfigure}[t]{.27\linewidth}
  \centering
  \includegraphics[width=1.0\linewidth]{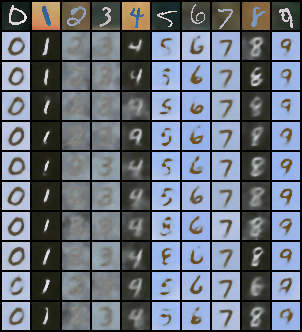}
  \caption{MoPoE-VAE, $\beta=1$}
\end{subfigure}
\vskip 0.30cm
\begin{subfigure}[t]{.27\linewidth}
  \centering
  \includegraphics[width=1.0\linewidth]{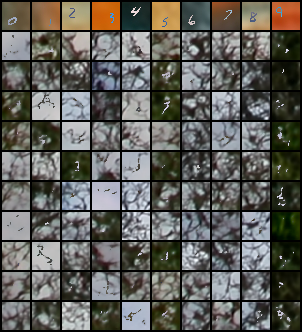}
  \caption{MVAE, $\beta=0.3$}
\end{subfigure}
\begin{subfigure}[t]{.27\linewidth}
  \centering
  \includegraphics[width=1.0\linewidth]{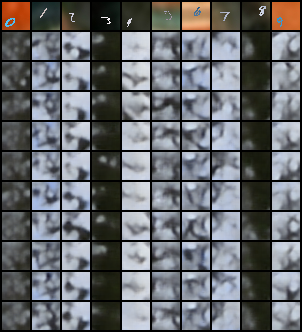}
  \caption{MMVAE, $\beta=0.3$}
\end{subfigure}
\begin{subfigure}[t]{.27\linewidth}
  \centering
  \includegraphics[width=1.0\linewidth]{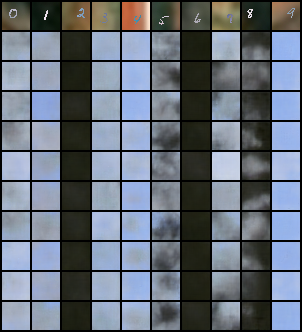}
  \caption{MoPoE-VAE, $\beta=0.3$}
\end{subfigure}
\vskip 0.30cm
\begin{subfigure}[t]{.27\linewidth}
  \centering
  \includegraphics[width=1.0\linewidth]{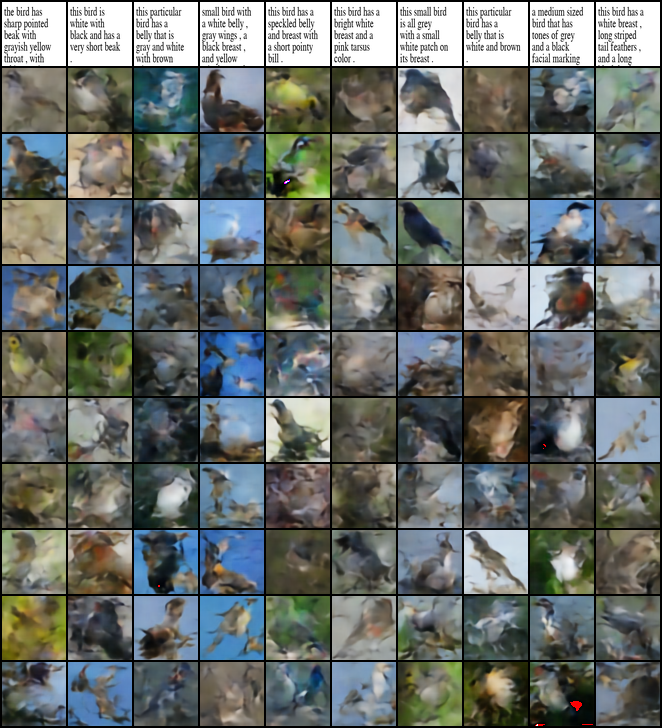}
  \caption{MVAE, $\beta=9.0$}
\end{subfigure}
\begin{subfigure}[t]{.27\linewidth}
  \centering
  \includegraphics[width=1.0\linewidth]{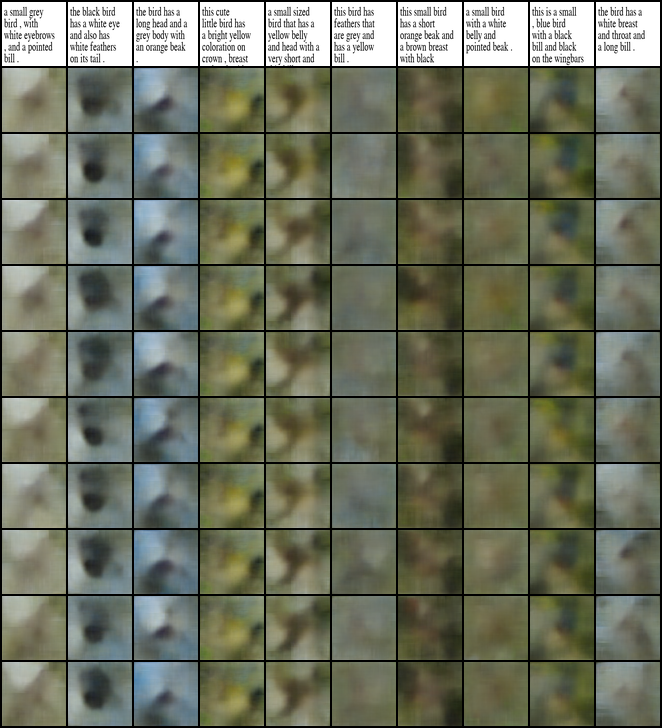}
  \caption{MMVAE, $\beta=9.0$}
\end{subfigure}
\begin{subfigure}[t]{.27\linewidth}
  \centering
  \includegraphics[width=1.0\linewidth]{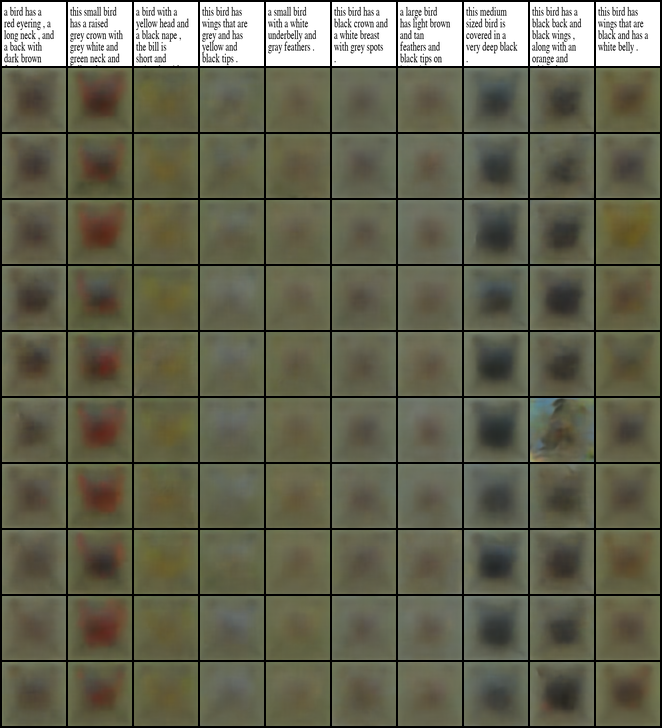}
  \caption{MoPoE-VAE, $\beta=9.0$}
\end{subfigure}
\vskip 0.30cm
\begin{subfigure}[t]{.27\linewidth}
  \centering
  \includegraphics[width=1.0\linewidth]{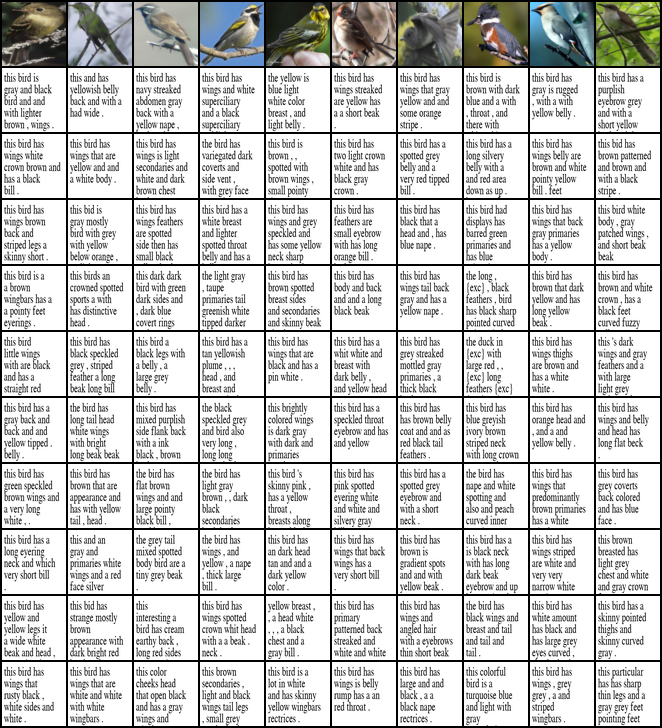}
  \caption{MVAE, $\beta=9.0$}
\end{subfigure}
\begin{subfigure}[t]{.27\linewidth}
  \centering
  \includegraphics[width=1.0\linewidth]{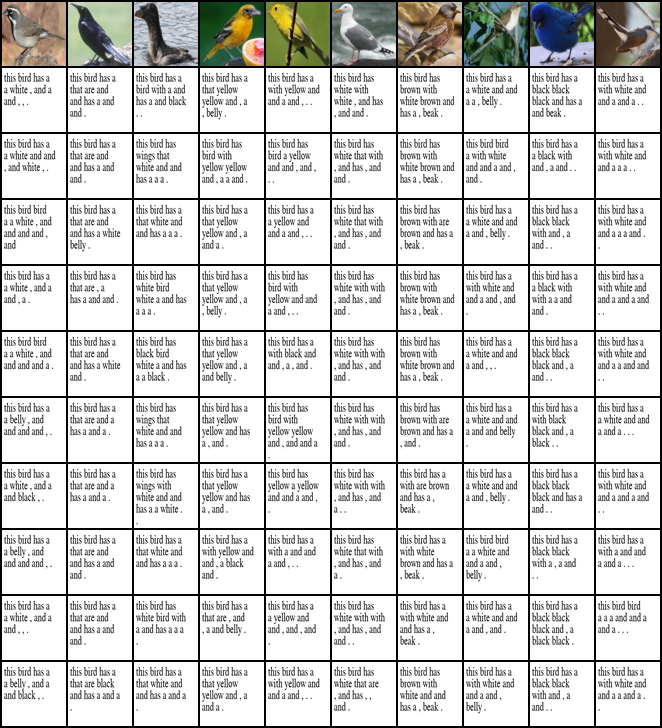}
  \caption{MMVAE, $\beta=9.0$}
\end{subfigure}
\begin{subfigure}[t]{.27\linewidth}
  \centering
  \includegraphics[width=1.0\linewidth]{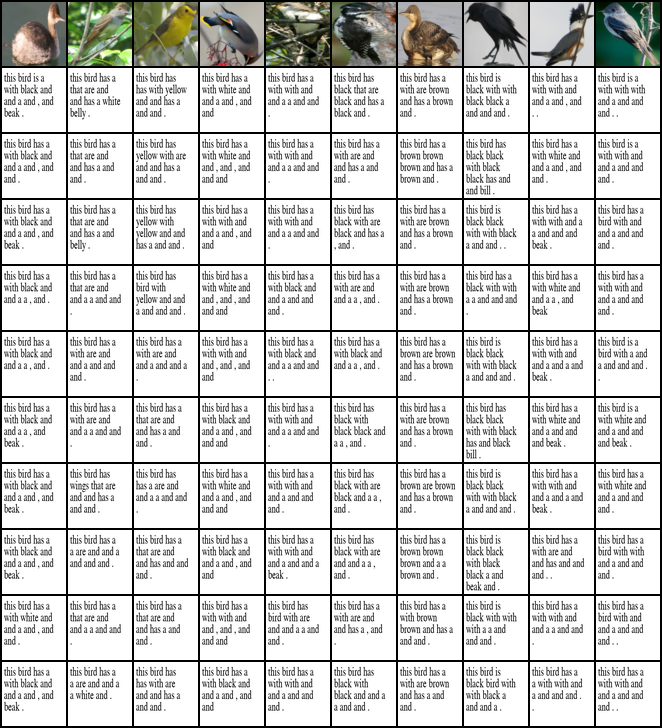}
  \caption{MoPoE-VAE, $\beta=9.0$}
\end{subfigure}
\vskip 0.20cm
\caption{%
  Qualitative results for the conditional generation across modalities. For
  PolyMNIST (Subfigures (a) to (c)) and Translated-PolyMNIST (Subfigures (d) to
  (f)), we show 10 conditionally generated samples of modality $X_1$ given the
  sample from modality $X_2$ that is shown in the first row of the respective
  subfigure.  For CUB, we show the generation of images given captions
  (Subfigures (g) to (i)), as well as the generation of captions given images
  (Subfigures (j) to (l)).  Best viewed zoomed and in color.
}
\label{fig:qualitative_conditional}
\end{center}
\end{figure}

\begin{figure}[t]
\begin{center}
\begin{subfigure}[t]{.45\linewidth}
  \centering
  \includegraphics[width=1.0\linewidth]{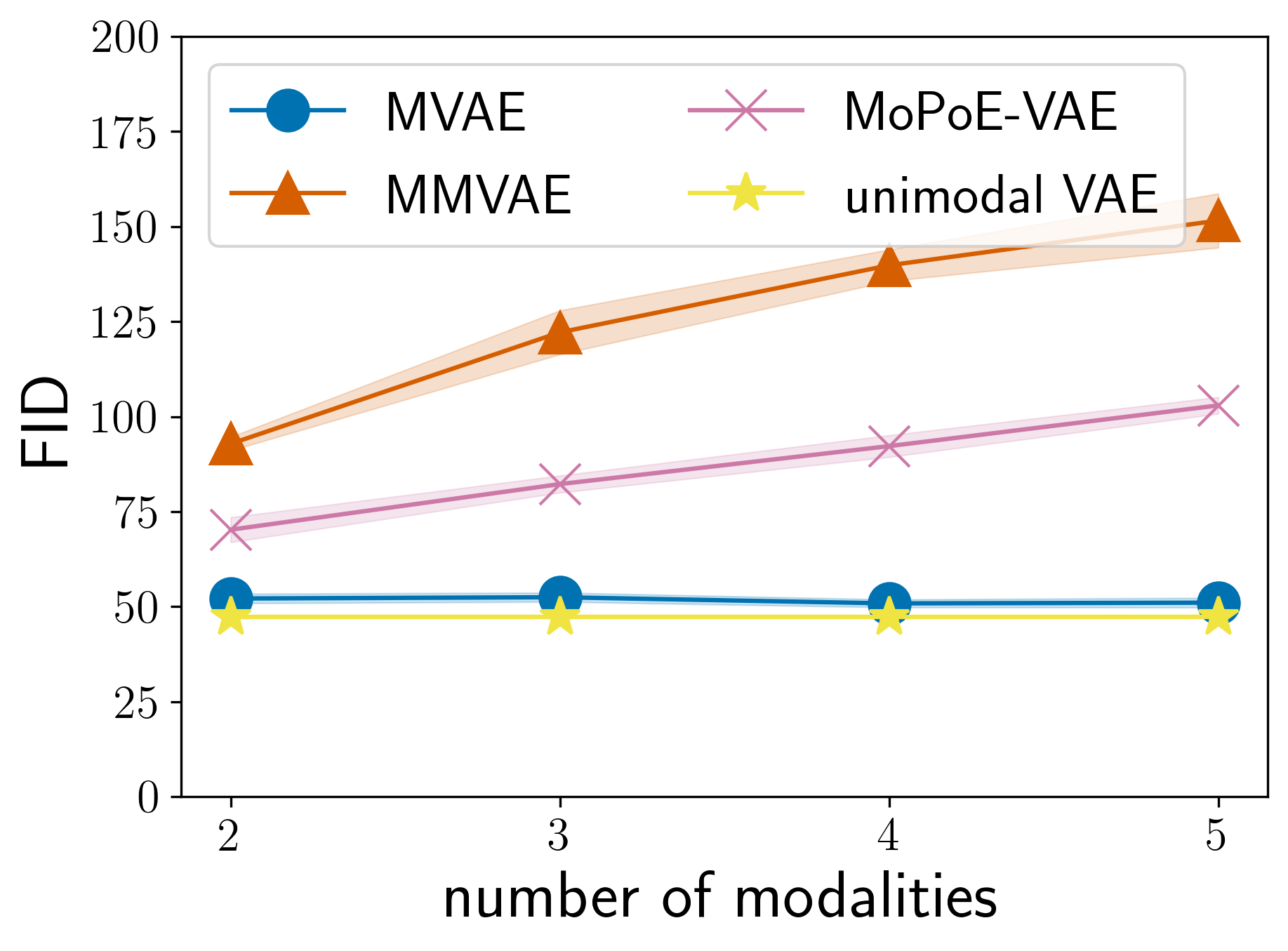}
  \caption{PolyMNIST}
\end{subfigure}
\hskip +0.135in
\begin{subfigure}[t]{.45\linewidth}
  \centering
  \includegraphics[width=1.0\linewidth]{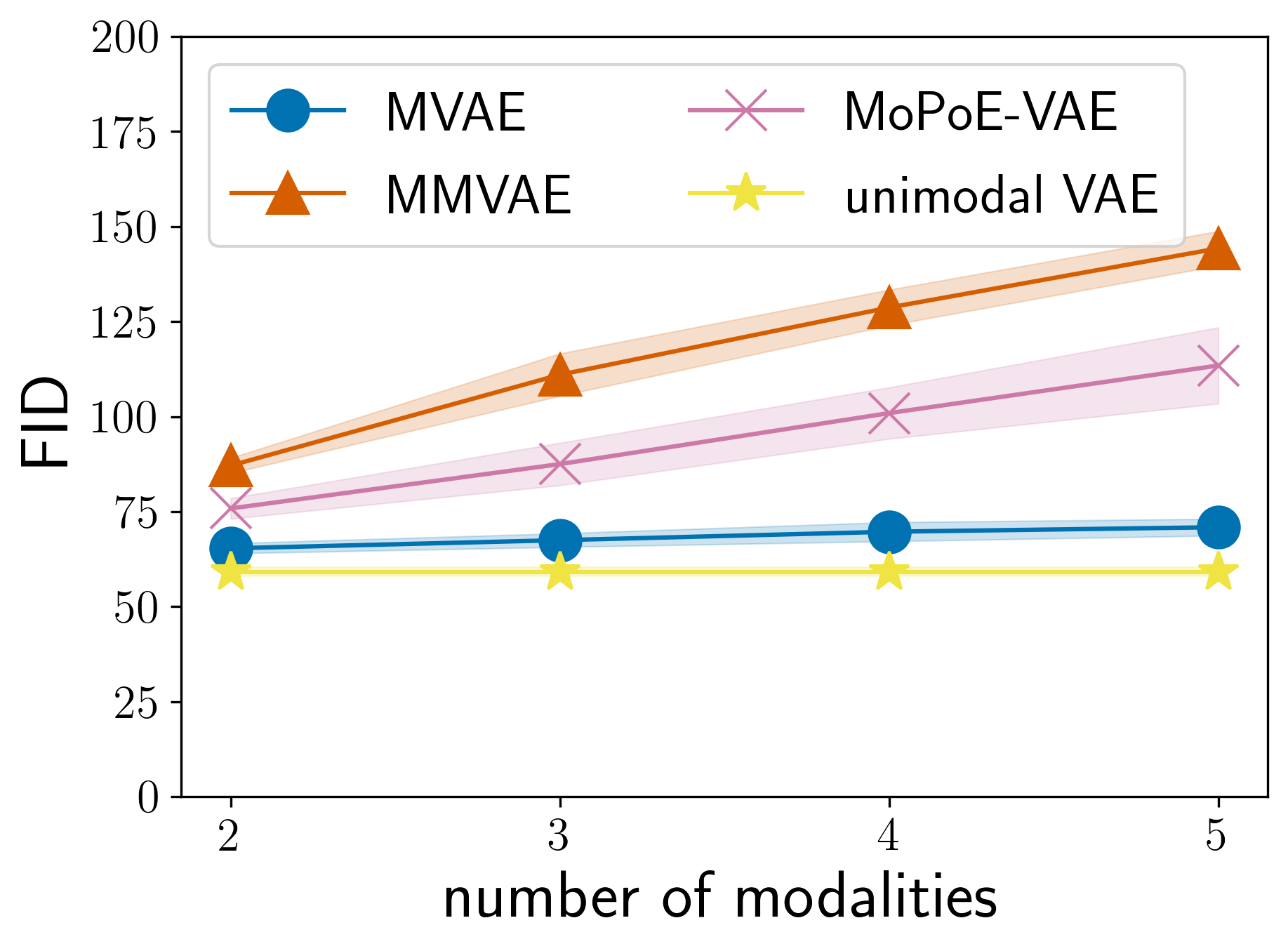}
  \caption{Translated-PolyMNIST}
\end{subfigure}%
\caption{%
  Generative quality as a function of the number of modalities.  In contrast to
  \Cref{fig:num_mod_ablation}, here we ``repeat'' the same modality, to verify
  that the generative quality also declines when the modality-specific
  variation of all modalities is similar.  All models are trained with
  $\beta=1$ on PolyMNIST and $\beta=0.3$ on Translated-PolyMNIST\@. The results
  are averaged over three seeds and all modalities; the bands show one standard
  deviation respectively.  For the unimodal VAE, which uses only a single
  modality, the average and standard deviation are plotted as a constant.
}%
\label{fig:num_mod_ablation_repeated_modality}
\end{center}
\end{figure}

\begin{figure}[t]
\begin{center}
\begin{subfigure}[t]{.45\linewidth}
  \centering
  \includegraphics[width=1.0\linewidth]{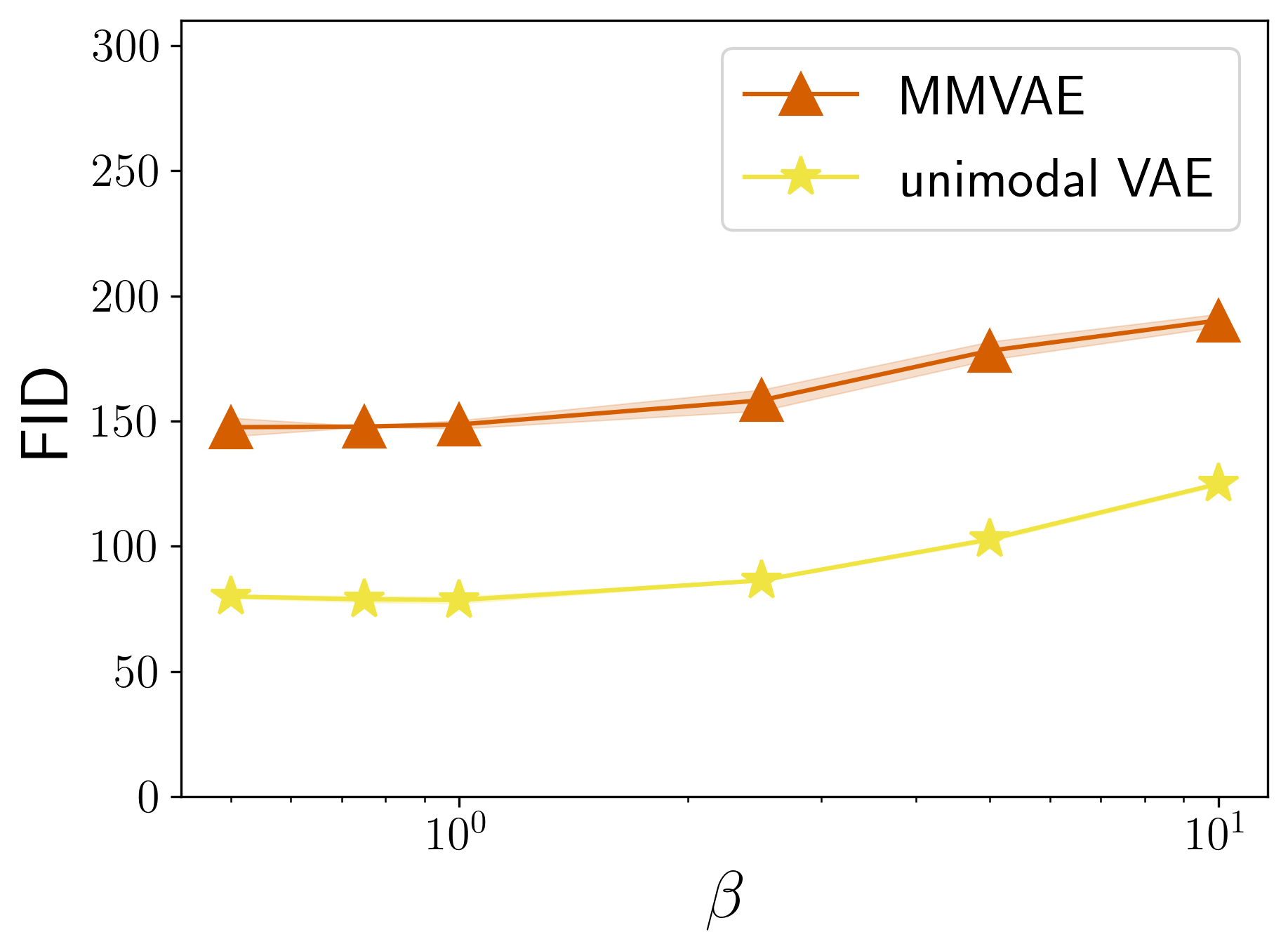}
  \caption{FID}
\end{subfigure}
\hskip +0.075in
\begin{subfigure}[t]{.46\linewidth}
  \centering
  \includegraphics[width=1.0\linewidth]{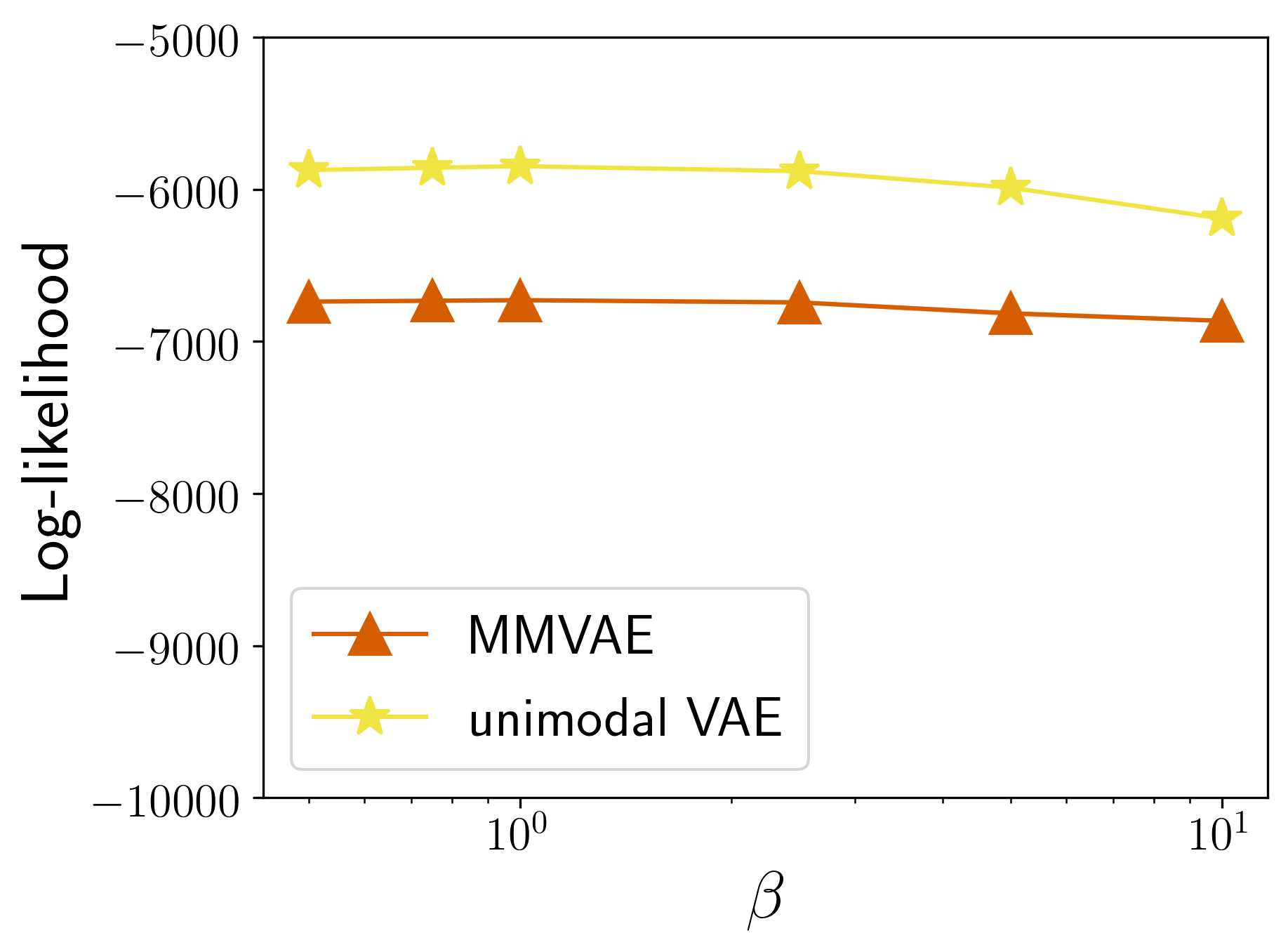}
  \caption{Joint log-likelihood}
\end{subfigure}
\caption{%
  PolyMNIST $\beta$-ablation using the official implementation of the MMVAE\@.
  In particular, for both the MMVAE and the unimodal VAE, we use the DReG
  objective, importance sampling, as well as a learned prior.  Points denote
  the value of the respective metric averaged over three seeds and bands show
  one standard deviation respectively.
}
\label{fig:vanilla_polymnist_beta_ablation_mmvae_official}
\end{center}
\end{figure}

\begin{figure}[t]
\begin{center}
\begin{subfigure}[t]{.32\linewidth}
  \centering
  \includegraphics[width=1.0\linewidth]{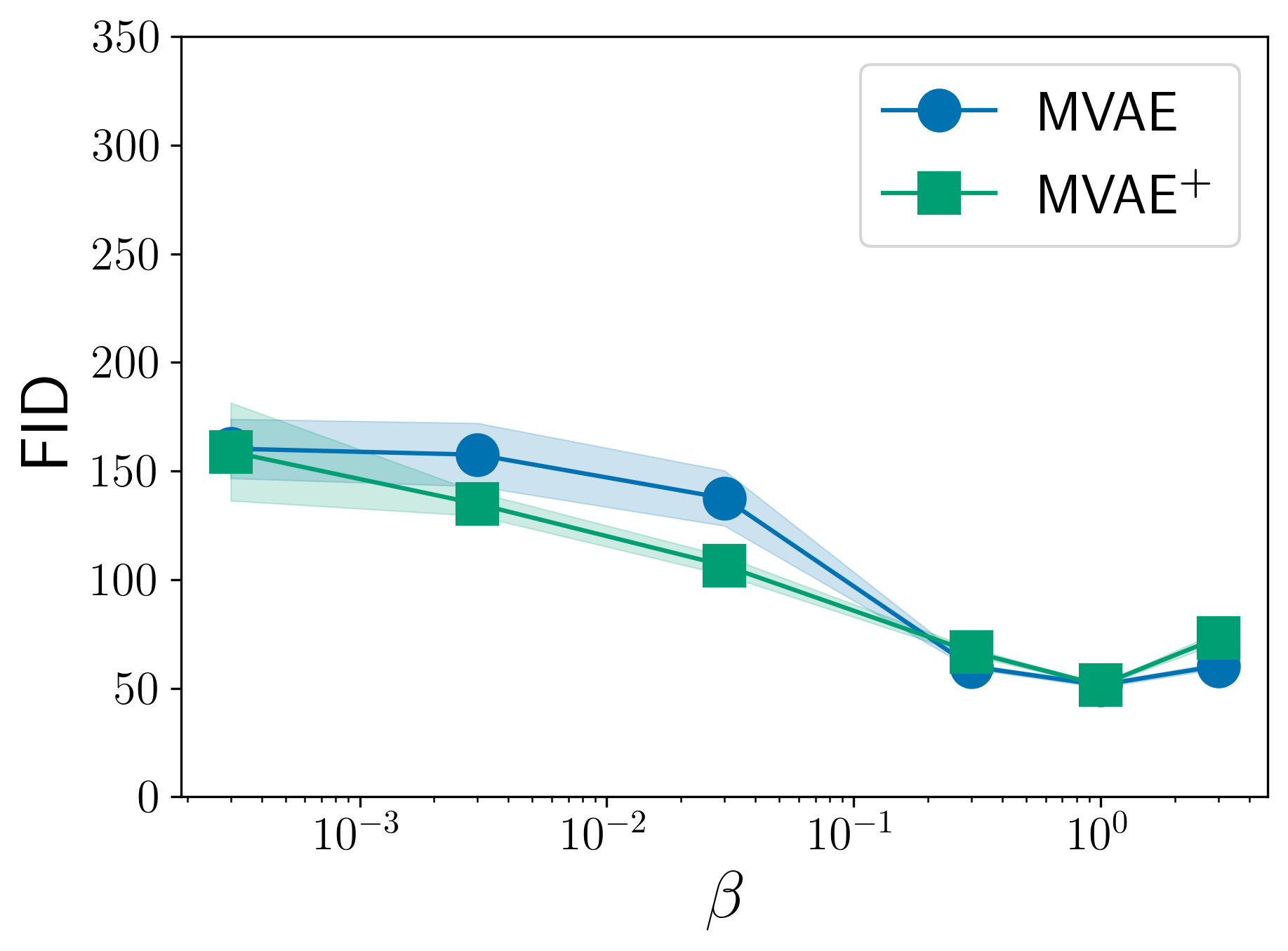}
  \caption{FID}
\end{subfigure}
\begin{subfigure}[t]{.32\linewidth}
  \centering
  \includegraphics[width=0.98\linewidth]{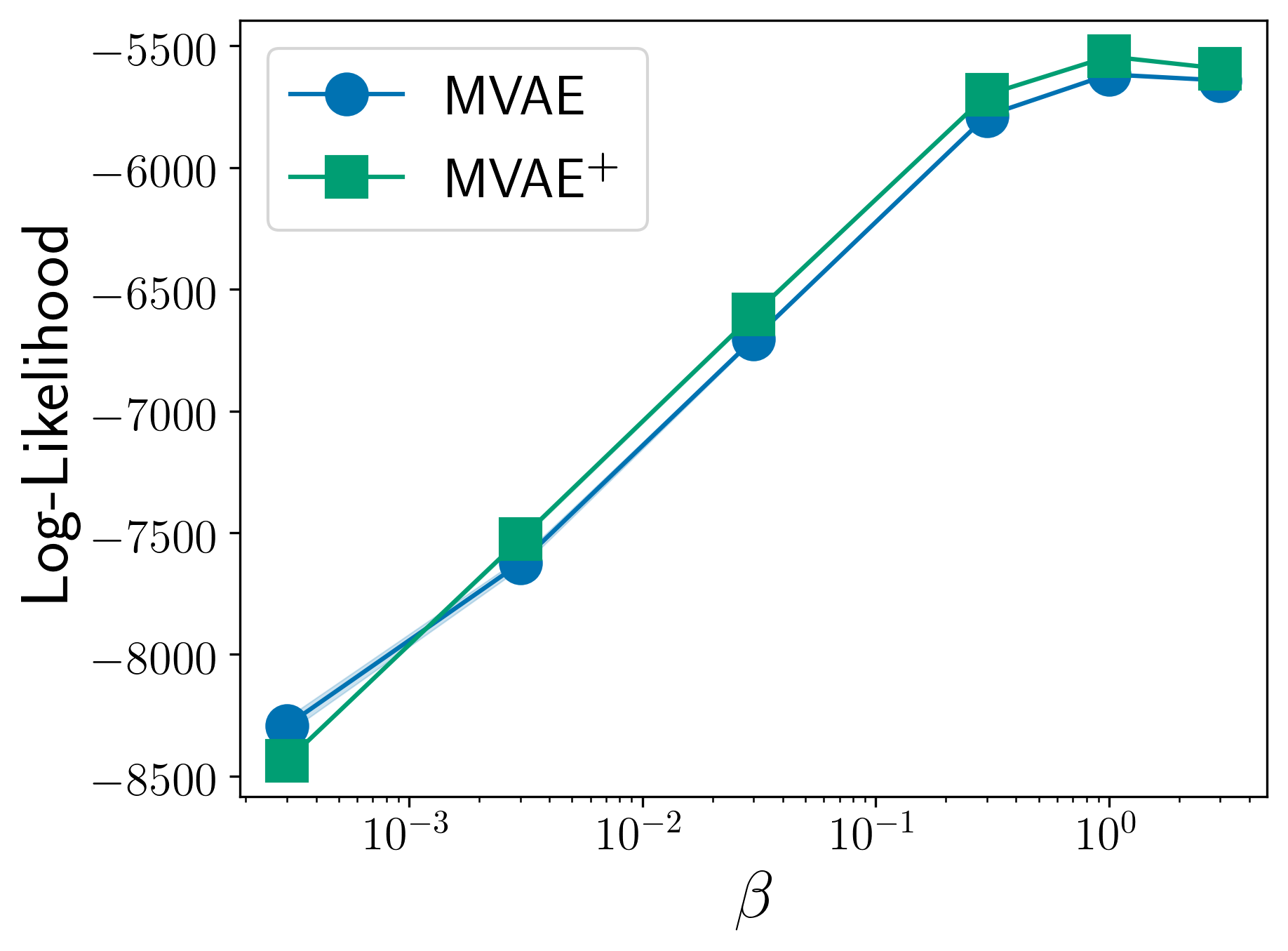}
  \caption{Joint log-likelihood}
\end{subfigure}%
\hskip +0.035in
\begin{subfigure}[t]{.32\linewidth}
  \centering
  \includegraphics[width=1.0\linewidth]{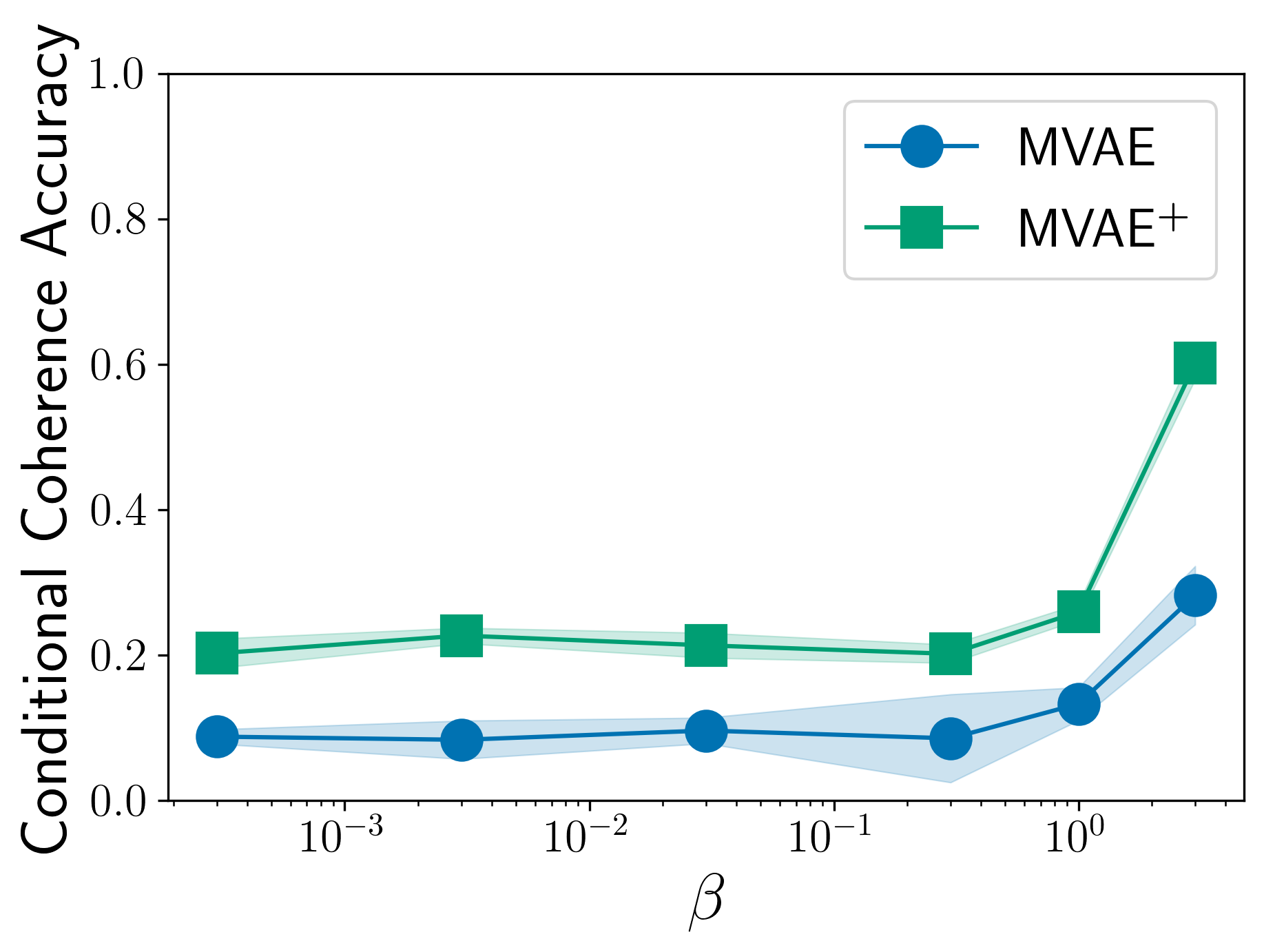}
  \caption{Generative coherence}
\end{subfigure}
\caption{PolyMNIST $\beta$-ablation, comparing MVAE with and without additional
  ELBO sub-sampling. MVAE$^{{+}}$ denotes the model with additional ELBO
  sub-sampling.  Points denote the value of the respective metric averaged over
  three seeds and bands show one standard deviation respectively.
}
\label{fig:vanilla_polymnist_beta_ablation_mvaeplus}
\end{center}
\end{figure}

\end{document}